\title{Mitigating Source Bias for Fairer Weak Supervision}
\author{
  \textbf{Changho Shin, Sonia Cromp, Dyah Adila, Frederic Sala}\\
  Department of Computer Sciences \\
  University of Wisconsin-Madison \\
  \texttt{\{cshin23, cromp, adila, fsala\}@wisc.edu} \\
}
\newtheorem{theorem}{Theorem}[section]
\newtheorem{lemma}[theorem]{Lemma}
\DeclareMathOperator{\EX}{\mathbb{E}}
\DeclareMathOperator{\diag}{\textrm{diag}}
\DeclareMathOperator{\size}{\textrm{size}}
\newcommand{\norm}[1]{\left\lVert#1\right\rVert}
\newcommand{\E}[1]{\mathbb{E}\left[#1\right]}
\newcommand{\real}{\mathbb{R}}
\newcommand{\indicator}{\mathds{1}}
\def\adl@drawiv#1#2#3{%
        \hskip.5\tabcolsep
        \xleaders#3{#2.5\@tempdimb #1{1}#2.5\@tempdimb}%
                #2\z@ plus1fil minus1fil\relax
        \hskip.5\tabcolsep}
\newcommand{\cdashlinelr}[1]{%
  \noalign{\vskip\aboverulesep
           \global\let\@dashdrawstore\adl@draw
           \global\let\adl@draw\adl@drawiv}
  \cdashline{#1}
  \noalign{\global\let\adl@draw\@dashdrawstore
           \vskip\belowrulesep}}
\setlist{nolistsep}
\begin{document}
\maketitle

\begin{abstract}
Weak supervision enables efficient development of training sets by reducing the need for ground truth labels.
However, the techniques that make weak supervision attractive---such as integrating any source of signal to estimate unknown labels---also entail the danger that the produced pseudolabels are highly biased.
Surprisingly, given everyday use and the potential for increased bias, weak supervision has not been studied from the point of view of fairness.
We begin such a study, starting with the observation that even when a fair model can be built from a dataset with access to ground-truth labels, the corresponding dataset labeled via weak supervision can be arbitrarily unfair.
To address this, we propose and empirically validate a model for source unfairness in weak supervision, then introduce a simple counterfactual fairness-based technique that can mitigate these biases.
Theoretically, we show that it is possible for our approach to simultaneously improve both accuracy and fairness---in contrast to standard fairness approaches that suffer from tradeoffs.
Empirically, we show that our technique improves accuracy on weak supervision baselines by as much as 32\% while reducing demographic parity gap by 82.5\%.
A simple extension of our method aimed at maximizing performance produces state-of-the-art performance in five out of ten datasets in the WRENCH benchmark.
\end{abstract}
\section{Introduction}
\label{sec:intro}

Weak supervision (WS) is a powerful set of techniques aimed at overcoming the labeled data bottleneck \cite{Ratner16, fu2020fast, shin21universalizing}.
Instead of manually annotating points, users assemble noisy label estimates obtained from multiple sources, model them by learning source accuracies, and combine them into a high-quality pseudolabel to be used for downstream training.
All of this is done without any ground truth labels.
Simple, flexible, yet powerful, weak supervision is now a standard component in machine learning workflows in industry, academia, and beyond \cite{bach2018snorkel}.
Most excitingly, WS has been used to build models deployed to billions of devices.

Real-life deployment of models, however, raises crucial questions of fairness and bias.
Such questions are tackled in the burgeoning field of fair machine learning \cite{dwork2012fairness,hardt2016equality}. 
However, weak supervision \textbf{has not been studied from this point of view}.
This is not a minor oversight.
The properties that make weak supervision effective (i.e., omnivorously ingesting any source of signal for labels) are precisely those that make it likely to suffer from harmful biases.
This motivates the need to understand and mitigate the potentially disparate outcomes that result from using weak supervision.

The starting point for this work is a simple result.
Even when perfectly fair classifiers are possible when trained on ground-truth labels, weak supervision-based techniques can nevertheless produce arbitrarily unfair outcomes.
Because of this, simply applying existing techniques for producing fair outcomes to the datasets produced via WS is insufficient---delivering highly suboptimal datasets. 
Instead, a new approach, specific to weak supervision, must be developed. 
We introduce a simple technique for improving the fairness properties of weak supervision-based models.
Intuitively, a major cause of bias in WS is that particular sources are targeted at certain groups, and so produce far more accurate label estimates for these groups---and far more noise for others.
We counterfactually ask what outgroup points would most be like if they were part of the `privileged' group (with respect to each source), enabling us to borrow from the more powerful signal in the sources applied to this group.
Thus, the problem is reduced to finding a transformation between groups that satisfies this counterfactual.
Most excitingly, while in standard fairness approaches there is a typical tradeoff between fairness and accuracy, with our approach, both the fairness and performance of WS-based techniques can be (sometimes dramatically) improved.

Theoretically, in certain settings, we provide finite-sample rates to recover the counterfactual transformation. 
Empirically, we propose several ways to craft an efficiently-computed transformation building on optimal transport and some simple variations. 
We validate our claims on a diverse set of experiments.
These include standard real-world fairness datasets, where we observe that our method can improve both fairness and accuracy by as much as 82.5\% and 32.5\%, respectively, versus weak supervision baselines.
Our method can also be combined with other fair ML methods developed for fully supervised settings, further improving fairness. 
Finally, our approach has implications for WS beyond bias: we combined it with slice discovery techniques \cite{eyuboglu2022domino} to improve latent underperforming groups. 
This enabled us to \textbf{improve on state-of-the-art on the weak supervision benchmark} WRENCH \cite{zhang2021wrench}.

The contributions of this work include,
\begin{itemize}[leftmargin=*,topsep=2pt,noitemsep]\setlength\itemsep{2pt}
\item The first study of fairness in weak supervision, 
\item A new empirically-validated model for weak supervision that captures labeling function bias,
\item A simple counterfactual fairness-based correction to mitigate such bias, compatible with any existing weak supervision pipeline, as well as with downstream fairness techniques, 
\item Theoretical results showing that (1) even with a fair dataset, a weakly-supervised counterpart can be arbitrarily biased and (2) a finite-sample recovery result for the proposed algorithm,
\item Experiments validating our claims, including on weakly-supervised forms of popular fairness evaluation datasets, showing gains in fairness metrics---and often simultaneously improvements in accuracy.
\end{itemize}

\begin{figure*}[t!]
\small
	\centering
	\subfigure [Data]{
\includegraphics[height=3.3cm]{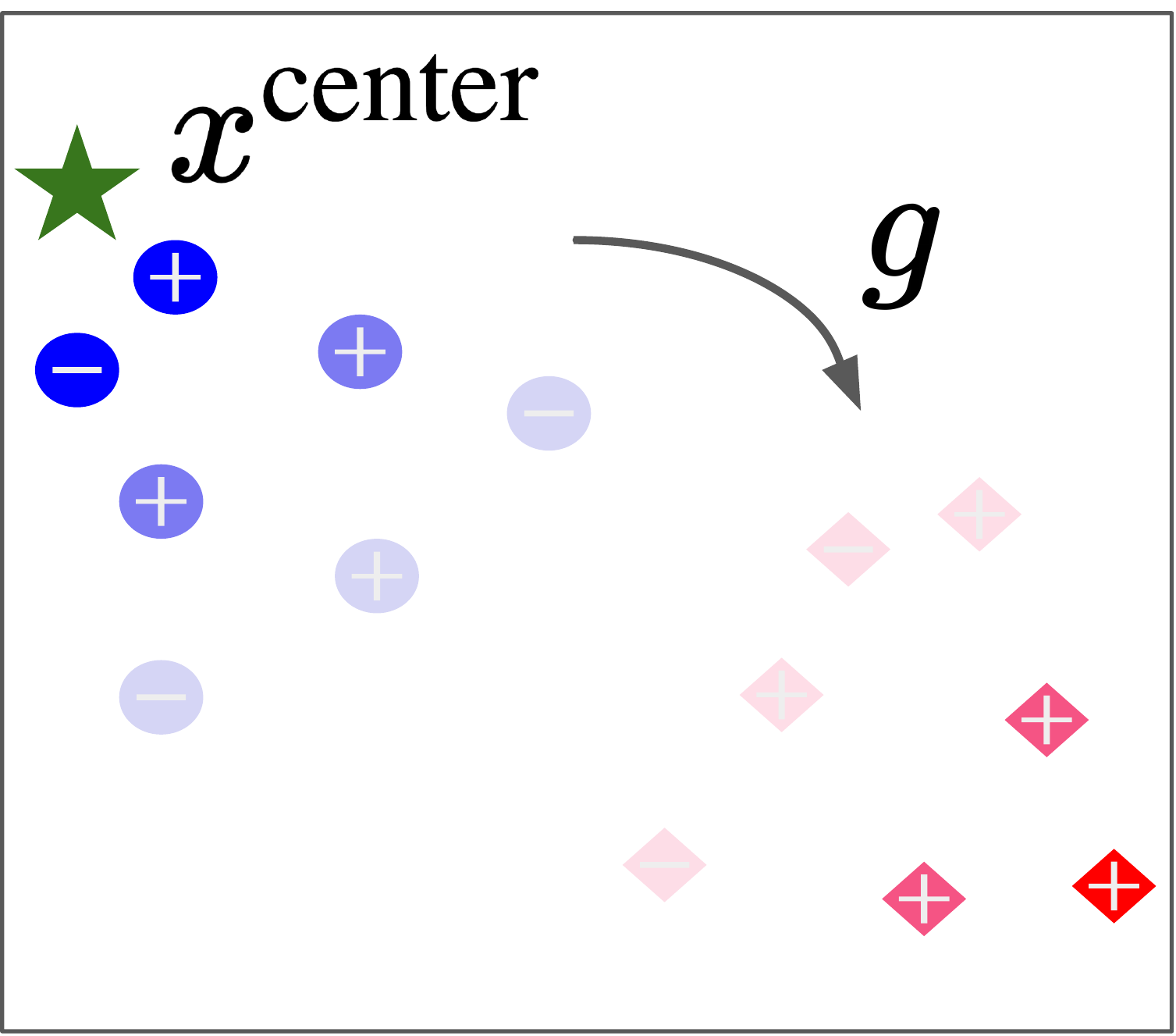}}
\subfigure [Transformations]{
\includegraphics[height=3.3cm]{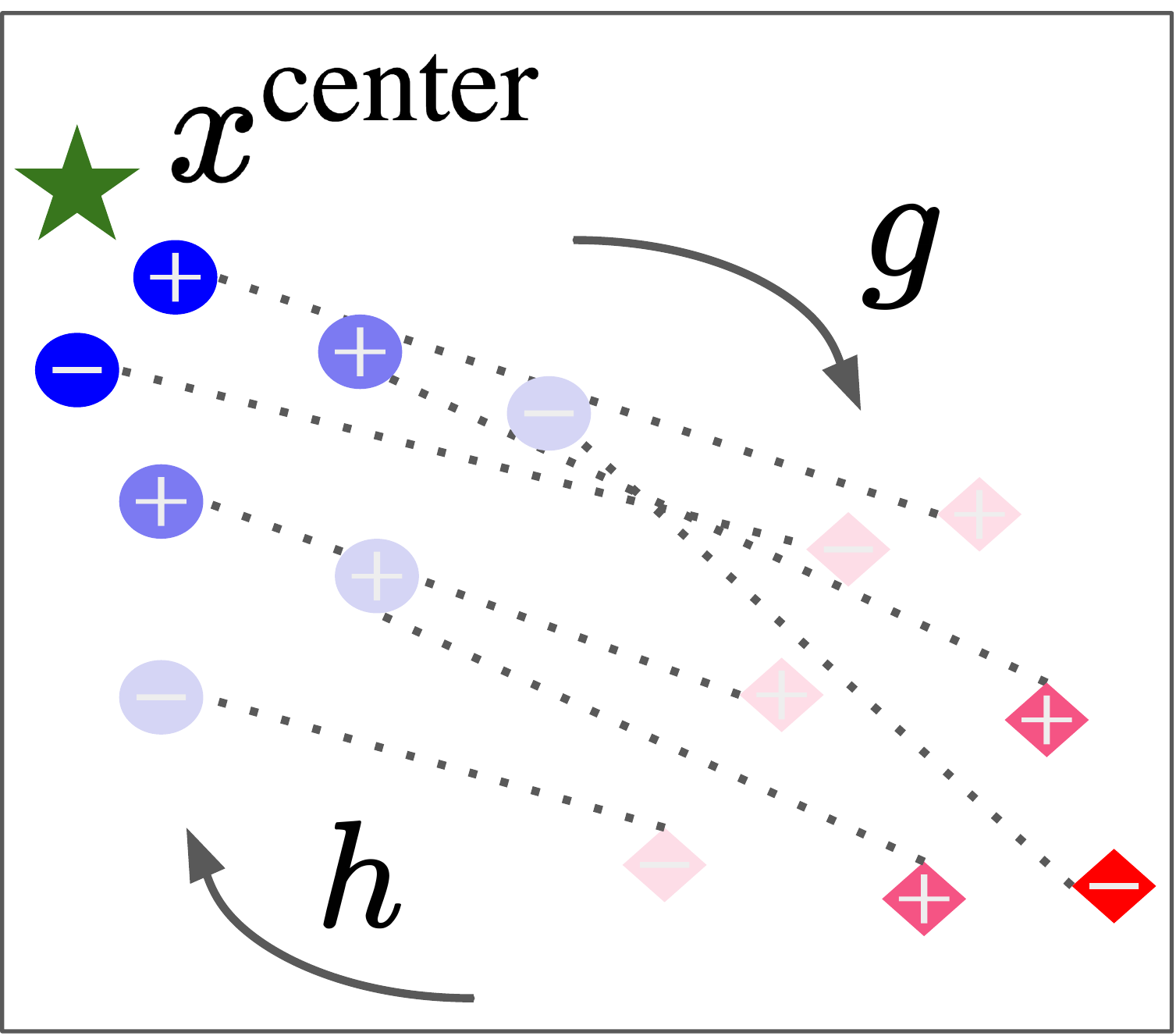}}
\subfigure [Recovered labels{\color{white}blablbbb}]{
\includegraphics[height=3.3cm]{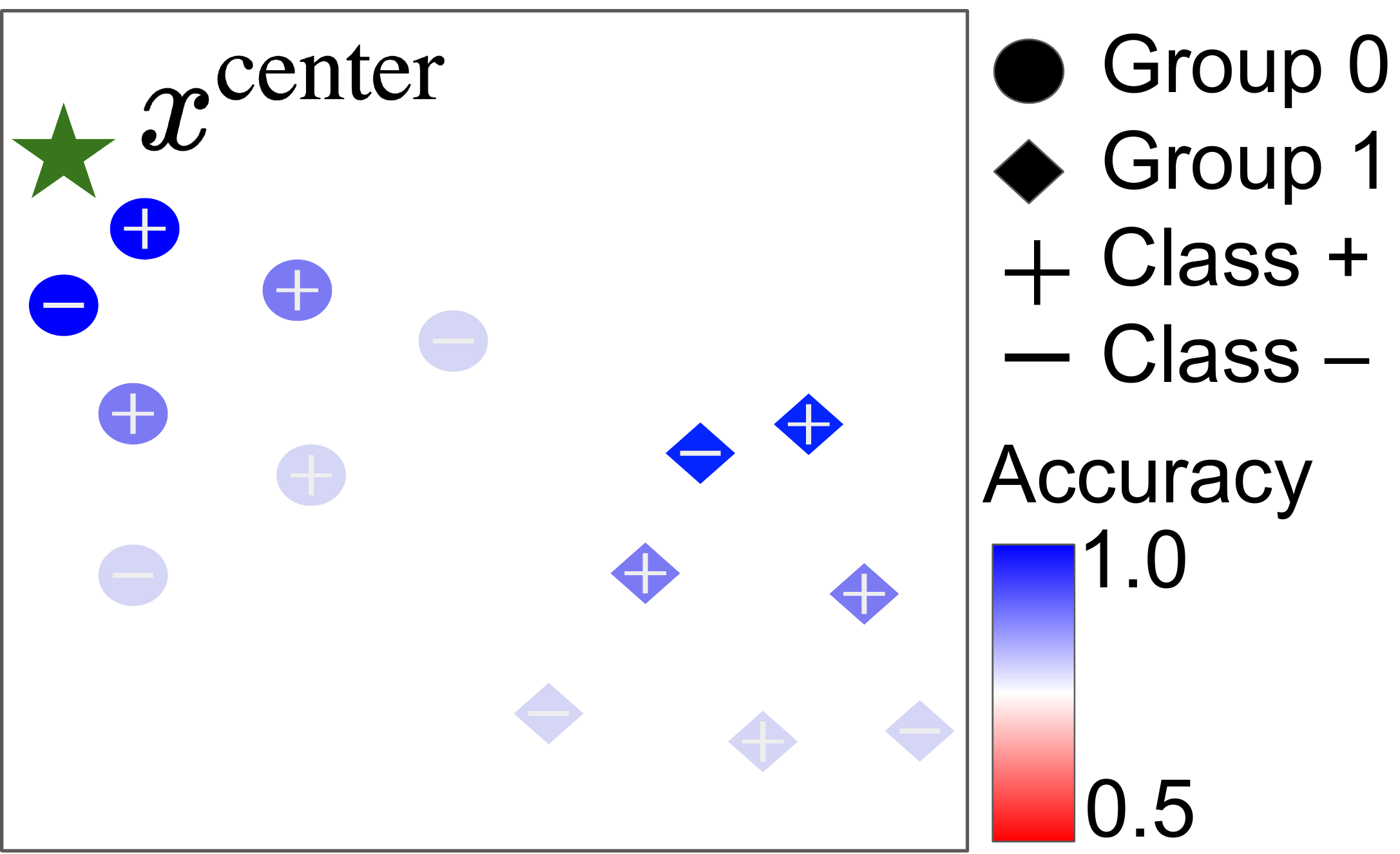}}
\caption{\small Intuitive illustration for our setting and approach. (a): circles and diamonds are data points from group 0 and 1, respectively. The accuracy of labeling function is colored-coded, with blue being perfect (1.0) and red random (0.5). Note that accuracy degrades as data points get farther from center $x^{center}$ (star). (b) We can think of group 1 as having been moved far from the center by a transformation $g$, producing lower-quality estimates and violating fairness downstream. (c) Our technique uses counterfactual fairness to undo this transformation, obtaining higher quality estimates. and improved fairness.}
\label{fig:method}
\end{figure*}

\section{Background and Related Work} \label{sec:setup}
We present some high-level background on weak supervision and fairness in machine learning. Afterward, we provide setup and present the problem statement. 

\paragraph{Weak Supervision} 
Weak supervision frameworks build labeled training sets \emph{with no access to ground truth labels}.
Instead, they exploit multiple sources that provide noisy estimates of the label.
These sources include heuristic rules, knowledge base lookups, pretrained models, and more \cite{karger2011iterative, mintz2009distant, gupta2014improved, Dehghani2017, Ratner18}.
Because these sources may have different---and unknown---accuracies and dependencies, their outputs must be modeled in order to produce a combination that can be used as a high-quality pseudolabel.

Concretely, there is a dataset $\{(x_1, y_1), \ldots, (x_n, y_n)\}$ with unobserved true label $y_i \in\{-1, +1\}$.
We can access the outputs of $m$ sources (labeling functions) $\lambda^{1}, \lambda^{2}, \ldots, \lambda^{m}: \mathcal{X} \rightarrow \{-1, +1\}$.
These outputs are modeled via a generative model called the \emph{label model}, $p_{\theta}(\lambda^1, \ldots, \lambda^m, y)$. The goal is to estimate the parameters $\theta$ of this model, without accessing the latent $y$, and to produce a pseudolabel estimate $p_{\hat{\theta}}(y|\lambda^1, \ldots, \lambda^m)$.
For more background, see \cite{zhang2022survey}.
\paragraph{Machine Learning and Fairness}
%
Fairness in machine learning is a large and active field that seeks to understand and mitigate biases.
We briefly introduce high-level notions that will be useful in the weak supervision setting, such as the notion of fairness metrics. 
Two popular choices are demographic parity \cite{dwork2012fairness} and equal opportunity \cite{hardt2016equality}.
Demographic parity is based on the notion that individuals of different groups should have equal treatment, i.e., if $A$ is the group attribute, 
$P(\hat{Y}=1|A=1)=P(\hat{Y}=1|A=0)$.
The equal opportunity principle requires that predictive error should be equal across groups, i.e., 
$P(\hat{Y}=1|Y=1, A=1)=P(\hat{Y}=1|Y=1, A=0)$.
A large number of works study, measure, and seek to improve fairness in different machine learning settings based on these metrics.
Typically, the assumption is that the underlying dataset differs within groups in such a way that a trained model will violate, for example, the equal opportunity principle.
In contrast, in this work, we focus on additional violations of fairness that are induced by weak supervision pipelines---which can create substantial unfairness even when the true dataset is perfectly fair.
In the same spirit, \cite{Wu22PU} considers fairness in positive-and-unlabeled (PU) settings, where true labels are available, but only for one class, while other points are unlabeled. Another related line of research is fairness under noisy labels \cite{wang2021fair, konstantinov2022fairness, wei2023fairness, zhang2023fair}. These works consider the noise rate of labels in fair learning, enhancing the robustness of fair learning methods.
\emph{A crucial difference between such works and ours: in weak supervision, we have multiple sources of noisy labels---and we can exploit these to directly improve dataset fairness.}

\paragraph{Counterfactual Fairness}
Most closely related to the notion of fairness we use in this work is \emph{counterfactual fairness}.
\cite{kusner2017counterfactual} introduced such a counterfactual fairness notion, which implies that changing the sensitive attribute $A$, while keeping other variables causally not dependent on $A$, should not affect the outcome.
While this notion presumes the causal structure behind the ML task, it is related to our work in the sense that our proposed method tries to remove the causal effect by $A$ with particular transformations. 
A more recent line of work has proposed bypassing the need for causal structures and directly tackling counterfactual fairness through optimal transport \cite{gordaliza2019obtaining, black2020fliptest, silvia2020general, si2021testing, buyl2022optimal}.
The idea is to detect or mitigate unfairness by mapping one group to another group via such techniques.
In this paper, we build on these tools to help improve fairness while avoiding the accuracy-fairness tradeoff common to most settings. 
\section{Mitigating Labeling Function-Induced Unfairness}
\label{sec:method}
We are ready to explain our approach to mitigating unfairness in weak supervision sources.
First, we provide a flexible model that captures such behavior, along with empirical evidence supporting it. 
Next, we propose a simple solution to correct unfair source behavior via optimal transport. 
\paragraph{Modeling Group Bias in Weak Supervision}
Weak supervision models the accuracies and correlations in labeling functions. 
The standard model, used in \cite{Ratner19, fu2020fast} and others is 
$P(\lambda^1, \ldots, \lambda^m,y) = \frac{1}{Z}\exp(\theta_y y + \sum_{j=1}^m \theta_j \lambda^j y)$, 
with $\theta_{j} \geq 0$. We leave out the correlations for simplicity; all of our discussion below holds when considering correlations as well. Here, $Z$ is the normalizing partition function. The $\theta$ are \emph{canonical parameters} for the model. $\theta_y$ sets the class balance. The $\theta_i$'s capture how accurate labeling function (LF) $i$ is: if $\theta_i = 0$, the LF produces random guesses. If $\theta_i$ is relatively large, the LF is highly accurate.
A weakness of this model is that it \emph{ignores the feature vector $x$}. It implies that LFs are uniformly accurate over the feature space---a highly unrealistic assumption.
A more general model was presented in \cite{chen2022shoring}, where there is a model for each feature vector $x$, i.e., 
\begin{align}
P_x(\lambda^1, \ldots, \lambda^m, y) = \frac{1}{Z}\exp(\theta_y y + \sum_{j=1}^m \theta_{j,x} \lambda^j(x) y).
\label{eq:ligermodel}
\end{align}
However, as we see only one sample for each $x$, it is impossible to recover the parameters $\theta_x$. Instead, the authors assume a notion of \emph{smoothness}. This means that the $\theta_{j,x}$'s do not vary in small neighborhoods, so that the feature space can be partitioned and a single model learned per part. 
Thus \emph{model \eqref{eq:ligermodel} from \cite{chen2022shoring} is more general, but still requires a strong smoothness assumption}. It also does not encode any notion of bias. Instead, we propose a model that encodes both smoothness and bias. 

Concretely, assume that the data are drawn from some distribution on $\mathcal{Z} \times \mathcal{Y}$, where $\mathcal{Z}$ is a latent space. We do not observe samples from $\mathcal{Z}$. Instead, there are $l$ transformation functions $g_1, \ldots, g_l$, where $g_k : \mathcal{Z} \rightarrow \mathcal{X}$. For each point $z_i$, there is an assigned group $k$ and we observe $x_i = g_k(z_i)$. Then, our model is the following: 
\begin{align}
\label{eq:model_general}
P(\lambda^1(z), \ldots, \lambda^m(z), y) = \frac{1}{Z}\exp\left(\theta_yy + \sum_{j=1}^m \frac{\theta_{j}}{1+d(x^{\text{center}_j}, g_k(z))} \lambda^j(g_k(z)) y\right).
\end{align}
We explain this model as follows. We can think of it as a particular version of \eqref{eq:ligermodel}. However, instead of arbitrary $\theta_{j,x}$ parameters for each $x$, we explicitly model these parameters as two components:  
    a feature-independent accuracy parameter $\theta_j$ and 
    a term that modulates the accuracy based on the distance between feature vector $x$ and some fixed center $x^{\text{center}_j}$.
The center represents, for each LF, a \emph{most accurate point}, where accuracy is maximized at a level set by $\theta_j$. As the feature vector $x = g_k(z)$ moves away from this center, the denominator $1+d(x^{\text{center}_j}, g_k(z))$ increases, and the LF votes increasingly poorly. This is an explicit form of smoothness that we validate empirically below.

For simplicity, we assume that there are two groups, indexed by $0,1$, that $\mathcal{X} = \mathcal{Z}$, and that $g_0(z) = z$. In other words, the transformation for group $0$ is the identity, while this may not be the case for group 1. Simple extensions of our approach can handle cases where none of these assumptions are met.
\paragraph{Labeling Function Bias}
\begin{wrapfigure}{R}{0.5\textwidth}
\vspace{-5mm}
\includegraphics[width=0.5\textwidth]{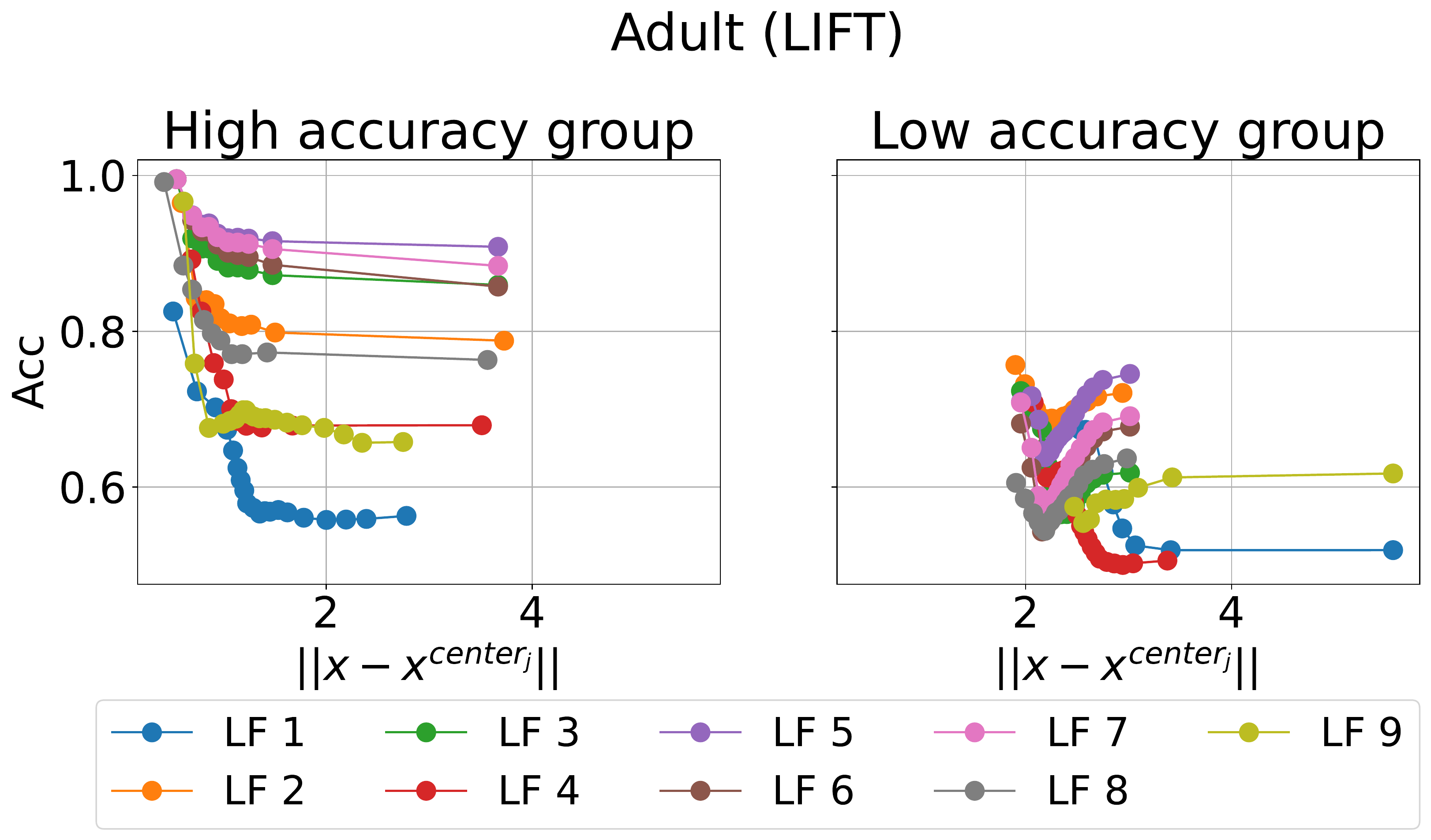}
\captionof{figure}{Average accuracy (y-axis) depending on the distance to the center point (x-axis). The center is obtained by evaluating the accuracy of their neighborhood data points.}
    \label{fig:adultdistance}
\vspace{-8mm}
\end{wrapfigure}

The model \eqref{eq:model_general} explains how and when labeling functions might be biased. Suppose that $g_k$ takes points $z$ far from $x^{\text{center}_j}$. Then, the denominator term in \eqref{eq:model_general} grows---and so the penalty for $\lambda(x)$ to disagree with $y$ is reduced, making the labeling function less accurate. 
This is common in practice. For example, consider a situation where a bank uses features that include credit scores for loan review. 
Suppose that the group variable is the applicant's nationality.
Immigrants typically have a shorter period to build credit; this is reflected in a transformed distribution $g_1(z)$. 
A labeling function using a credit score threshold may be accurate for non-immigrants, but may end up being highly inaccurate when applied to immigrants.

We validate this notion empirically. 
We used the Adult dataset \citep{kohavi1996scaling}, commonly used for fairness studies, with a set of custom-built labeling functions.
In Figure~\ref{fig:adultdistance}, we track the accuracies of these LFs as a function of distance from an empirically-discovered center $x^{\text{center}_{j}}$. On the left is the high-accuracy group in each labeling function; as expected in our model, as the distance from the center $\norm{x-x^{center_j}}$ is increased, the accuracy decreases. On the right-hand side, we see the lower-accuracy group, whose labeling functions are voting $x_i = g_1(z_i)$. This transformation has sent these points further away from the center (note the larger distances). As a result, the overall accuracies have also decreased. Note, for example, how LF 5, in purple, varies between 0.9 and 1.0 accuracy in one group and is much worse---between 0.6 and 0.7---in the other.

\subsection{Correcting Unfair LFs}
\begin{figure}[!t]
\vspace{-5mm}
\begin{algorithm}[H]
    \caption{\textsc{Source Bias Mitigation (SBM)}} \label{alg:sbm}
	\begin{algorithmic}[1]
		\STATE \textbf{Parameters:}
        Features $X_{0}$, $X_{1}$ and LF outputs $\Lambda_{0}=[\lambda^{1}_{0}, \ldots, \lambda^{m}_{0}]$, $\Lambda_{1}=[\lambda^{1}_{1}, \ldots, \lambda^{m}_{1}]$ for groups 0, 1, transport threshold $\varepsilon$
            \STATE \textbf{Returns:} Modified weak labels $\Lambda=[\lambda^{1}, \ldots, \lambda^{m}]$

        \STATE Estimate accuracy of $\lambda^{j}$ in each group, $\hat{a}^{j}_{0}=\hat{\EX}[\lambda_{j}Y|A=0], \hat{a}^{j}_{1}=\hat{\EX}[\lambda_{j}Y|A=1]$ from $\Lambda_{0}, \Lambda_{1}$ with Algorithm \ref{alg:triplet}
        \FOR{$j \in \{1,2,\ldots, m\}$}
        \STATE \textbf{if} $\hat{a}^{j}_{1} \geq \hat{a}^{j}_{0}+\varepsilon$ \textbf{then} update $\lambda^{j}_{0}$ by transporting $X_{0}$ to $X_{1}$ (Algorithm \ref{alg:transport}) \\
        \STATE \textbf{else if} $\hat{a}^{j}_{0} \geq \hat{a}^{j}_{1}+\varepsilon$ \textbf{then} update $\lambda^{j}_{1}$ by transporting $X_{1}$ to $X_{0}$ (Algorithm \ref{alg:transport})
        \ENDFOR
        \RETURN $\Lambda=[\lambda^{1}, \ldots, \lambda^{m}]$
	\end{algorithmic} 
\end{algorithm}
\vspace{-5mm}
\end{figure}

Given the model \eqref{eq:model_general}, how can we reduce the bias induced by the $g_k$ functions?
A simple idea is to \emph{reverse} the effect of the $g_k$'s.
If we could invert these functions, violations of fairness would be mitigated, since the accuracies of labeling functions would be uniformized over the groups.

Concretely, suppose that $g_k$ is invertible and that $h_k$ is this inverse. If we knew $h_k$, then we could ask the labeling functions to vote on $h_k(x) = h_k(g_k(x)) = z$, rather than on $x = g_k(z)$, and we could do so for any group, yielding equal-accuracy estimates for all groups.
The technical challenge is how to estimate the inverses of the $g_k$'s, without any parametric form for these functions.
To do so, we deploy optimal transport (OT) \cite{peyre2019computational}. OT transports a probability distribution to another probability distribution by finding a minimal cost coupling. We use OT to recover the reverse map $h_{k}: \mathcal{X}\to \mathcal{Z}$ by 
$\hat{h}_k = \arg\inf_{T_{\sharp}\nu = \omega}  \left \{ \int_{x \in \mathcal{X}} c(x, T(x))d\nu(x)  \right\}$, 
where $c$ is a cost functon, $\nu$ is a probability measure in $\mathcal{X}$ and $\omega$ is a probability measure in $\mathcal{Z}$. 

Our proposed approach, building on the use of OT, is called \emph{source bias mitigation}  (SBM). It seeks to reverse the group transformation $g_{k}$ via OT. The core routine is described in Algorithm \ref{alg:sbm}. 
The first step of the algorithm is to estimate the accuracies of each group so that we can identify which group is privileged, i.e., which of the transformations $g_0, g_1$ is the identity map. To do this, we use Algorithm \ref{alg:triplet} \citep{fu2020fast} by applying it to each group separately. 

After identifying the high-accuracy group, we transport data points from the low-accuracy group to it. Since not every transported point perfectly matches an existing high-accuracy group point, we find a nearest neighbor and borrow its label. We do this only when there is a sufficient inter-group accuracy gap, since the error in transport might otherwise offset the benefit. 
%
In practice, if the transformation is sufficiently weak, it is possible to skip optimal transport and simply use nearest neighbors. Doing this turned out to be effective in some experiments (Section \ref{subsec:real_exp}). Finally, after running SBM, modified weak labels are used in a standard weak supervision pipeline, which is described in Appendix \ref{appendix:algorithm-details}.

\section{Theoretical Results} \label{sec:theoretical_results}
We provide two types of theoretical results. 
First, we show that labeling function bias can be arbitrarily bad---resulting in substantial unfairness---regardless of whether the underlying dataset is fair. 
Next, we show that in certain settings, we can consistently recover the fair labeling function performance when using Algorithm~\ref{alg:sbm}, and provide a finite-sample error guarantee.
Finally, we comment on extensions. All proofs are located in Appendix~\ref{appendix:theory-details}.

\paragraph{Setting and Assumptions}
We assume that the distributions $P_0(x)$ and $P_1(x')$ are subgaussian with means $\mu_0$ and $\mu_1$ and positive-definite covariance matrices $\Sigma_0$ and $\Sigma_1$, respectively. Note that by assumption, $P_0(x) = P(z)$ and $P_1(x')$ is the pushforward of $P_0(x)$ under $g_1$. Let $\mathbf r(\Sigma)$ denote the effective rank of $\Sigma$ \citep{vershynin2018high}. 
We observe $n_0$ and $n_1$ i.i.d. samples from groups 0 and 1, respectively. We use Euclidean distance as the distance $d(x,y) = \norm{x-y}$ in model (\ref{eq:model_general}). For the unobserved ground truth labels, $y_i$ is drawn from some distribution $P(y|z)$. Finally, the labeling functions voting on our points are drawn via the model~\eqref{eq:model_general}.

\subsection{Labeling Functions can be Arbitrarily Unfair}
We show that, as a result of the transformation $g_1$, the predictions of labeling functions can be arbitrarily unfair even if the dataset is fair. The idea is simple: the average group $0$ accuracy, $\mathbb E_{ z \in \mathcal Z}[P(\lambda(I(z))=y)]$, is independent of $g_1$, so it suffices to show that $\mathbb E_{x' \in g_1(\mathcal{Z})}[P(\lambda(x')=y)]$ can deteriorate when $g_1$ moves data points far from the center $x^{\text{center}_0}$. As such, we consider the change in $\mathbb E_{x' \in g_1(\mathcal{Z})}[P(\lambda(x')=y)]$ as the group $1$ points are transformed increasingly far from $x^{center_0}$ in expectation.


\begin{theorem}\label{thrmUnfair}
Let $g_1^{(k)}$ be an arbitrary sequence of functions such that $\lim_{k\to \infty} \mathbb E_{x' \in g_1^{(k)}(\mathcal Z)} [\norm{x'-x^{\text{center}_{0}}}] \rightarrow \infty$. 
Suppose our assumptions above are met; in particular, that the label $y$ is independent of the observed features $x=I(z)$ or $x' =g_1^{(k)}(z), \forall k,$ conditioned on the latent features $z$. 
    Then, 
\[    \lim_{k \to \infty} \mathbb E_{x' \in g_1^{(k)}(\mathcal{Z})}[P(\lambda(x')=y)]= \frac{1}{2},\]
which corresponds to random guessing.
\end{theorem}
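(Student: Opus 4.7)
The plan is to compute the per-sample accuracy $P(\lambda(x')=y\mid x')$ in closed form from the label model in \eqref{eq:model_general}, reduce the target expectation to a simple integral against the pushforward distance, and then drive that integral down using the hypothesis on $g_1^{(k)}$.

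First I would specialize \eqref{eq:model_general} to a single labeling function and write $\tau(x') = \theta/(1+\norm{x'-x^{\text{center}}})$. Summing $\exp(\theta_y y + \tau(x')\lambda y)$ over the four configurations of $(\lambda,y)\in\{-1,+1\}^2$ and extracting the agreement configurations cancels the class-balance term $\theta_y$ and yields
\[
P(\lambda(x')=y\mid x') \;=\; \frac{\exp(\tau(x'))}{2\cosh(\tau(x'))} \;=\; \sigma(2\tau(x')),
\]
where $\sigma$ is the logistic sigmoid. This quantity lies in $[1/2,1]$, depends on $x'$ only through its Euclidean distance to the center, and equals $1/2$ iff $\tau(x')=0$. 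Using the stated conditional independence of $y$ from the observed feature given the latent $z$, the target expectation collapses via the tower rule to $\mathbb{E}_{z}\bigl[\sigma(2\tau(g_1^{(k)}(z)))\bigr]$.

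Next I would sandwich this expression. The lower bound $1/2$ is immediate from $\tau\geq 0$. For the upper bound, the elementary inequality $\sigma(2t)-1/2=\tanh(t)/2\leq t/2$ for $t\geq 0$ gives
\[
\tfrac{1}{2}\;\leq\;\mathbb{E}_{z}\bigl[\sigma(2\tau(g_1^{(k)}(z)))\bigr]\;\leq\;\tfrac{1}{2}+\tfrac{\theta}{2}\,\mathbb{E}_{z}\!\left[\frac{1}{1+\norm{g_1^{(k)}(z)-x^{\text{center}_{0}}}}\right].
\]
It then suffices to show the tail term vanishes as $k\to\infty$.

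The main obstacle is this final step. In the natural pointwise reading of the hypothesis---that $g_1^{(k)}$ pushes the group-$1$ support to infinity, so $\norm{g_1^{(k)}(z)-x^{\text{center}_{0}}}\to\infty$ almost surely in $z$---the integrand tends to zero pointwise and is dominated by $1$, so dominated convergence closes the argument. If one reads the hypothesis strictly as a bound only on the \emph{expected} distance, Jensen's inequality points the wrong way, and one needs either pointwise escape (which matches the intended geometric picture of $g_1^{(k)}$ translating the group-$1$ points outward) or a mild tightness assumption on the pushforwards to rule out a non-vanishing share of mass concentrating near $x^{\text{center}_{0}}$. Either route yields $\lim_{k\to\infty}\mathbb{E}_{x'}[P(\lambda(x')=y)]=1/2$, completing the argument.
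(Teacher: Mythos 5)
Your proof takes essentially the same route as the paper's: both reduce the per-point accuracy to $\sigma(2\theta_0\phi(x',x^{\text{center}_{0}}))$ with $\phi(x,c)=(1+\norm{x-c})^{-1}$, and then argue that $\mathbb{E}[\phi]\to 0$ forces the expected accuracy to $1/2$. Two differences are worth recording, both in your favor. First, your sandwich $\tfrac{1}{2}\le\mathbb{E}[\sigma(2\tau)]\le\tfrac{1}{2}+\tfrac{\theta}{2}\,\mathbb{E}[\phi]$ via $\tanh(t)\le t$ is a cleaner way to pass the limit through the expectation; the paper instead asserts $\lim_k\mathbb{E}[\exp(\pm\theta_0\phi)]=1$ and implicitly exchanges a limit with the expectation of a ratio, a step your bound makes rigorous. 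Second, the obstacle you flag at the end is genuine: the paper's opening step infers $\mathbb{E}[\phi]\to 0$ directly from $\mathbb{E}[\norm{x'-x^{\text{center}_{0}}}]\to\infty$, which is false in general (put half the mass at the center and half at distance $k$; the mean distance diverges while $\mathbb{E}[\phi]\ge 1/2$). The conclusion does hold under the intended reading---the distance escaping to infinity pointwise or in probability, as in the motivating example $g_1^{(k)}(z)=z+ku$---and your dominated-convergence route closes the argument under that reading, whereas the hypothesis as literally stated does not suffice for either your proof or the paper's.
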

It is easy to construct such a sequence of functions $g_1^{(k)}$, for instance, by letting $g_1^{(k)}(z)=z + ku$, where $u$ is a $d$-dimensional vector of ones.  
When the distribution of group 1 points lies far from $x^{\text{center}_{0}}$ while the distribution of group 0 points lies near to $x^{\text{center}_{0}}$, the accuracy parity of $\lambda$ suffers. With adequately large expected $d(x^{\text{center}_{0}}, g_1^{(k)}(z))$, the performance of $\lambda$ on group 1 points approaches random guessing.

\subsection{Finite-Sample Bound for Mitigating Unfairness}
Next, we provide a result bounding the difference in LF accuracy between group 0 points, $\mathbb E_{x \in \mathcal Z}[P(\lambda(x)=y)]$, and group 1 points transformed using our method, $\mathbb E_{x' \in \mathcal{X}} [P(\lambda(\hat h(x'))=y)]$. A tighter bound on this difference corresponds to better accuracy intra-group parity.

\begin{theorem} \label{thrmHhat}
Set $\tau$ to be $\max \left({\mathbf r(\Sigma_0)}/{n_0}, 
{\mathbf r(\Sigma_1)}/{n_1}, 
{t}/{\min(n_0,n_1)}, 
{t^2}/{\max(n_0,n_1)^2} \right)$, and let $C$ be a constant. Under the assumptions described above, when using Algorithm~\ref{alg:sbm}, for any $t>0$, we have that with probability $1-e^{-t}-{1}/{n_1}$,  
\begin{align*}
    |\mathbb E_{x \in \mathcal Z}[P(\lambda(x)=y)]-\mathbb E&_{x' \in \mathcal{X}} [P(\lambda(\hat h(x'))=y)]| \leq 4\theta_0 C \sqrt{ \tau \mathbf r(\Sigma_1)},
\end{align*}

\label{thm:finite_sample}
\end{theorem}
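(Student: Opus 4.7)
The plan is to split the accuracy gap into a Lipschitz problem on feature distances and then control the remaining distance error via concentration of the empirical optimal-transport map between sub-Gaussian measures. After marginalizing $\lambda$ in the joint model \eqref{eq:model_general}, the per-point accuracy of a single labeling function with parameter $\theta_0$ becomes $P(\lambda(x)=y\mid x) = \sigma\!\left(2\theta_0/(1+\|x^{\text{center}}-x\|)\right)$, where $\sigma$ is the logistic function. The first step is to substitute this identity into both expectations in the theorem and then exploit the facts that $\sigma$ is $\tfrac{1}{4}$-Lipschitz, that $u \mapsto 1/(1+u)$ is $1$-Lipschitz on $[0,\infty)$, and that the reverse triangle inequality gives $|\|x^{\text{center}}-z\|-\|x^{\text{center}}-\hat h(x')\|| \le \|z-\hat h(x')\|$. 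Composing these observations, the left-hand side of the theorem is upper bounded by $\tfrac{\theta_0}{2}\,\mathbb{E}_{z}[\|z-\hat h(g_1(z))\|]$, so the accuracy gap reduces to an $L^1$ bound on the error of the recovered transport $\hat h$ against the population inverse $g_1^{-1}$.

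Next I would analyze the concentration of $\hat h$ around $g_1^{-1}$. Under the sub-Gaussian assumption with positive-definite $\Sigma_0,\Sigma_1$, the plan is to work with the closed-form optimal-transport map (Gaussian Monge coupling, or its linearization for log-concave sub-Gaussians), whose coefficients are smooth functions of $(\mu_0,\mu_1,\Sigma_0,\Sigma_1)$. Applying Vershynin's sub-Gaussian covariance concentration (Theorem~4.7.1 of \cite{vershynin2018high}) to both samples yields deviation bounds of order $\sqrt{\mathbf r(\Sigma_k)/n_k}$ for the covariance factors and $\sqrt{t/n_k}$ for the means; propagating these through the estimated map and squaring where needed reproduces the four terms in $\tau$. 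A Cauchy--Schwarz step lifting the $L^2$ rate to $L^1$ produces the extra $\sqrt{\mathbf r(\Sigma_1)}$ factor that appears in the statement, and combining with the $\theta_0/2$ Lipschitz constant explains the $\theta_0$ prefactor and constant $C$. The nearest-neighbor snap inside Algorithm~\ref{alg:transport} contributes at most the one-sided Hausdorff distance between the transported group-1 cloud and the group-0 sample, which is itself $O(\sqrt{\mathbf r(\Sigma_0)\log(n_1)/n_0})$ under sub-Gaussianity and can be absorbed into $\tau$; tracking its failure event is what produces the $1/n_1$ term in the confidence level, and the $e^{-t}$ piece comes directly from the sub-Gaussian tail in Vershynin's bound.

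The main obstacle will be cleanly propagating the empirical fluctuations of $(\hat\mu_0,\hat\mu_1,\hat\Sigma_0,\hat\Sigma_1)$ through the optimal-transport map without losing the dimension-free behaviour. The target rate scales with the effective rank $\mathbf r(\Sigma_1)$ rather than ambient dimension, so I must invoke either the exact Gaussian Monge formula or a stability result for Monge maps between log-concave measures, together with the effective-rank form of matrix Bernstein; keeping the constants sharp enough to land on $4\theta_0 C$ requires some care in how the sigmoid Lipschitz constant is combined with the covariance-concentration constants. In comparison, the Lipschitz reduction of the first paragraph, the Hausdorff control of the nearest-neighbor step, and the final union bound that assembles the two failure events into $1-e^{-t}-1/n_1$ are comparatively standard once the transport-map stability step is in hand.
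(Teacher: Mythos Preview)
Your proposal follows the same two-step decomposition as the paper: reduce the accuracy gap to a feature-space distance via the Lipschitz property of $x\mapsto P(\lambda(x)=y)=\sigma\!\bigl(2\theta_0/(1+\|x-x^{\text{center}}\|)\bigr)$, then control $\mathbb{E}[\|h(x')-\hat h(x')\|]$ by Monge-map concentration. Your Lipschitz argument is in fact tighter than the paper's---composing the $\tfrac14$-Lipschitz bound on $\sigma$ with the $2\theta_0$-Lipschitz inner map gives $\theta_0/2$, whereas the paper bounds $\sigma(u)(1-\sigma(u))$ by $2$ rather than $\tfrac14$ and lands on $4\theta_0$. Where you diverge is the second step: the paper does not re-derive the transport concentration at all but simply invokes a result of \cite{flamary2019concentration} that directly yields $\mathbb{E}_{x'}[\|h(x')-\hat h(x')\|]\le C\sqrt{\tau\,\mathbf r(\Sigma_1)}$ with probability $1-e^{-t}-1/n_1$. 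Your plan to rebuild this from Vershynin-type covariance concentration plus stability of the Gaussian Monge formula is workable but amounts to re-proving that lemma. One attribution to correct: the $1/n_1$ failure term is not produced by the nearest-neighbor snap---it is already part of the cited Monge-map bound, and the paper's proof in fact ignores the nearest-neighbor step entirely; your Hausdorff analysis is additional rigor that the paper does not supply.
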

Next we interpret Theorem~\ref{thm:finite_sample}. LF accuracy recovery scales with $\max\left(1/\sqrt{n_0}, 1/\sqrt{n_1}\right)$. This does not present any additional difficulties compared to vanilla weak supervision---it is the same rate we need to learn LF accuracies. In other words, there is no sample complexity penalty for using our approach. 
Furthermore,  LF accuracy recovery scales inversely to $\max\left(\sqrt{\mathbf r(\Sigma_0) \mathbf r( \Sigma_1)}, \mathbf r(\Sigma_1)\right)$. That is, when the distributions $P_0(x)$ or $P_1(x')$ have greater spread, it is more difficult to restore fair behavior. 

Finally, we briefly comment on extensions. It is not hard to extend these results to a setting with less strict assumptions. For example, we can take $P$ to be a mixture of Gaussians. In this case, it is possible to combine algorithms for learning mixtures \cite{chen2018optimal} with the approach we presented.  
\section{Experiments} \label{sec:experiment}
\begin{table}[t!]
\caption{Tabular dataset results}
   \label{tab:real_tabular_result}
   \centering
   \begin{tabular}{l|llll | llll}
     ~ & \multicolumn{4}{c|}{\textbf{Adult}} & \multicolumn{4}{c}{\textbf{Bank Marketing}}\\
     \toprule
     ~ & Acc ($\uparrow$) & F1 ($\uparrow$)& $\Delta_{DP}$ ($\downarrow$) & $\Delta_{EO}$ ($\downarrow$) & Acc ($\uparrow$) & F1 ($\uparrow$)& $\Delta_{DP}$ ($\downarrow$) & $\Delta_{EO}$ ($\downarrow$)  \\
     \midrule
     FS & 0.824 & 0.564 & 0.216 & 0.331 & 0.912 & 0.518 & 0.128 & 0.117  \\ \midrule
     WS (Baseline) & 0.717 & 0.587 & 0.475 & 0.325 & 0.674 & 0.258 & 0.543 & 0.450 \\ \cdashlinelr{2-9}
     SBM (w/o OT) & 0.720 & \cellcolor{blue!20}{\textbf{0.592}} & 0.439 & 0.273 & 0.876 & \cellcolor{blue!20}{\textbf{0.550}} & 0.106 & \cellcolor{green!20}\textbf{0.064}  \\
     SBM (OT-L) & 0.560 & 0.472 & 0.893 & 0.980  & \cellcolor{blue!20}{\textbf{0.892}} & 0.304 & 0.095 & 0.124  \\
     SBM (OT-S) & 0.723 & 0.590 & 0.429 & 0.261  & 0.847 & 0.515 & 0.122 & 0.080  \\
\cdashlinelr{2-9}
     SBM (w/o OT) + LIFT & 0.704 & 0.366 & 0.032 & 0.192 & 0.698 & 0.255 & \cellcolor{green!20}\textbf{0.088} & 0.137 \\
     
     SBM (OT-L) + LIFT & 0.700 & 0.520 & 0.015 & \cellcolor{green!20}\textbf{0.138} & \cellcolor{blue!20}{\textbf{0.892}} & 0.305 & 0.104 & 0.121  \\
    SBM (OT-S) + LIFT & \cellcolor{blue!20}{\textbf{0.782}} & 0.448 & \cellcolor{green!20}\textbf{0.000} & 0.178  & 0.698 & 0.080 & 0.109 & 0.072  \\
    \bottomrule
   \end{tabular}
\end{table}
The primary objective of our experiments is to validate that SBM improves fairness while often enhancing model performance as well. In real data experiments, we confirm that our methods work well with real-world fairness datasets (Section \ref{subsec:real_exp}). In the synthetic experiments, we validate our theory claims in a fully controllable setting---showing that our method can achieve perfect fairness and performance recovery (Section \ref{subsec:synthetic_exp}). In addition, we show that our method \textbf{is compatible with other fair ML techniques} developed for fully supervised learning (Section \ref{subsec:compatibility_exp}). Finally, we demonstrate that our method can  improve weak supervision performance beyond fairness by applying techniques to discover underperforming data slices (Section \ref{subsec:wrench_exp}). This enables us to outperform state-of-the-art on a popular weak supervision benchmark \citep{zhang2021wrench}. Our code is available at \href{https://github.com/SprocketLab/fair-ws}{https://github.com/SprocketLab/fair-ws}.

\subsection{Real data experiments}\label{subsec:real_exp}
\paragraph{Claims Investigated}
In real data settings, we hypothesize that our methods can reduce the bias of LFs, leading to better fairness and improved performance of the weak supervision end model. 

\paragraph{Setup and Procedure}
We used 6 datasets in three different domains: tabular (Adult and Bank Marketing), NLP (CivilComments and HateXplain), and vision (CelebA and UTKFace).
Their task and group variables are summarized in Appendix \ref{appendix:experiment-details}, Table \ref{tab:dataset_summary}. LFs are either heuristics or pretrained models. More details are included in Appendix \ref{appendix_subsec:lf_details}.

For the weak supervision pipeline, we followed a standard procedure.
First, we generate weak labels from labeling functions in the training set.
Secondly, we train the label model on weak labels. In this experiment, we used Snorkel \cite{bach2018snorkel} as the label model in weak supervision settings.
Afterwards, we generate pseudolabels from the label model, train the end model on these, and evaluate it on the test set. We used logistic regression as the end model.
The only difference between our method and the original weak supervision pipeline is a procedure to fix weak labels from each labeling function.
As a sanity check, a fully supervised learning result (FS), which is the model performance trained on the true labels, is also provided.
Crucially, however, \emph{in weak supervision, we do not have such labels}, and therefore fully supervised learning is simply an upper bound to performance---and not a baseline.

We ran three variants of our method.
\textit{SBM (w/o OT)} is a 1-nearest neighbor mapping to another group without any transformation.
\textit{SBM (OT-L)} is a 1-nearest neighbor mapping with a linear map learned via optimal transport.
\textit{SBM (OT-S)} is a 1-nearest neighbor mapping with a barycentric mapping learned via the Sinkhorn algorithm.
To see if our method can improve both fairness and performance, we measured the demographic parity gap ($\Delta_{DP}$) and the equal opportunity gap ($\Delta_{EO}$) as fairness metrics, and computed accuracy and F1 score as performance metrics as well.

\begin{table}[t!]
\begin{minipage}[c]{.47\textwidth}\small
\centering
\captionof{table}{NLP dataset results}
   \label{tab:real_nlp_result}
   \centering
   \begin{tabular}{llllll}
     \toprule
Dataset & Methods & Acc & F1 & $\Delta_{DP}$  & $\Delta_{EO}$ \\ \midrule
     \multirow{5}{*}{\textbf{Civil}} & FS & 0.893 & 0.251 & 0.083 & 0.091  \\
                            \cline{2-6}
                            & WS (Baseline) & 0.854 & \cellcolor{blue!20}\textbf{0.223} & 0.560 & 0.546\\
                            & SBM (w/o OT) & 0.879 & 0.068 & 0.048 & 0.047 \\
                            & SBM (OT-L) & 0.880 & 0.070 & 0.042 & 0.039 \\
                            & SBM (OT-S) & \cellcolor{blue!20}\textbf{0.882} & 0.047 & \cellcolor{green!20}\textbf{0.028} & \cellcolor{green!20}\textbf{0.026} \\
     \midrule
     \multirow{5}{*}{\textbf{Hate}} & FS & 0.698 & 0.755 & 0.238 & 0.121  \\
                            \cline{2-6}
                            & WS (Baseline) & 0.584 & 0.590 & 0.170 & 0.133 \\
                            & SBM (w/o OT) & 0.592 & 0.637 & 0.159 & 0.138  \\
                            & SBM (OT-L) & \cellcolor{blue!20}\textbf{0.670} & 0.606 & 0.120 & 0.101  \\
                            & SBM (OT-S) & 0.612 & \cellcolor{blue!20}\textbf{0.687} & \cellcolor{green!20}\textbf{0.072} & \cellcolor{green!20}\textbf{0.037} \\
    \bottomrule
   \end{tabular}
\end{minipage}
\hspace{3mm}
\begin{minipage}[c]{.5\textwidth}\small
\captionof{table}{Vision dataset results}
   \label{tab:real_vision_result}
   \centering
   \begin{tabular}{llllll}
     \toprule
Dataset & Methods & Acc & F1 & $\Delta_{DP}$  & $\Delta_{EO}$ \\ \midrule
     \multirow{5}{*}{\textbf{CelebA}} & FS & 0.897 & 0.913 & 0.307 & 0.125 \\
                            \cline{2-6}
                            & WS (Baseline) & 0.866 & 0.879 & 0.308 & 0.193 \\
                            & SBM (w/o OT) & 0.870 & 0.883 & 0.309 & 0.192 \\
                            & SBM (OT-L) & 0.870 & 0.883 & \cellcolor{green!20}\textbf{0.306} & 0.185 \\
                            & SBM (OT-S) & \cellcolor{blue!20}\textbf{0.872} & \cellcolor{blue!20}\textbf{0.885} & \cellcolor{green!20}\textbf{0.306} & \cellcolor{green!20}\textbf{0.184} \\
     \midrule
     \multirow{5}{*}{\textbf{UTKF}} & FS & 0.810 & 0.801 & 0.133 & 0.056  \\
                            \cline{2-6}
                            & WS (Baseline) & 0.791 & 0.791 & 0.172 & 0.073\\
                            & SBM (w/o OT) & 0.797 & 0.790 & 0.164 & 0.077 \\
                            & SBM (OT-L) & 0.800 & 0.793 &  0.135 & 0.043 \\
                            & SBM (OT-S) & \cellcolor{blue!20}\textbf{0.804} & \cellcolor{blue!20}\textbf{0.798} & \cellcolor{green!20}\textbf{0.130} & \cellcolor{green!20}\textbf{0.041} \\
     \bottomrule
   \end{tabular}
\end{minipage}
\end{table}

\paragraph{Results}

The tabular dataset result is reported in Table \ref{tab:real_tabular_result}.
As expected, our method improves accuracy while reducing demographic parity gap and equal opportunity gap.
However, we observed \textit{SBM (OT-L)} critically fails at Adult dataset, contrary to what we anticipated.
We suspected this originates in one-hot coded features, which might distort computing distances in the nearest neighbor search.
To work around one-hot coded values in nearest neighbor search, we deployed LIFT \citep{dinhlift}, which encodes the input as natural language (e.g. "She/he is <race attribute>. She/he works for <working hour attribute> per week $\ldots$") and embeds them with language models (LMs).
We provide heuristic rules to convert feature columns into languages in Appendix \ref{appendix_subsec:lift}, and we used BERT as the language model.
The result is given in Table \ref{tab:real_tabular_result} under the dashed lines.
While it sacrifices a small amount of accuracy, it substantially reduces the unfairness as expected.

The results for NLP datasets are provided in Table \ref{tab:real_nlp_result}.
In the CivilComments and HateXplain datasets, we observed our methods mitigate bias consistently, as we hoped.
While our methods improve performance as well in the HateXplain dataset, enhancing other metrics in CivilComments results in drops in the F1 score.
We believe that a highly unbalanced class setting ($P(Y=1) \approx 0.1$) is the cause of this result.

The results for vision datasets are given in Table \ref{tab:real_vision_result}.
Though not as dramatic as other datasets since here the LFs are pretrained models, none of which are heavily biased, our methods can still improve accuracy and fairness.
In particular, our approach shows clear improvement over the baseline, which yields performance closer to the fully supervised learning setting while offering less bias.

\begin{figure}[!t]\small
\small
	\centering
	\subfigure [True distribution]{
\includegraphics[width=0.245\textwidth]{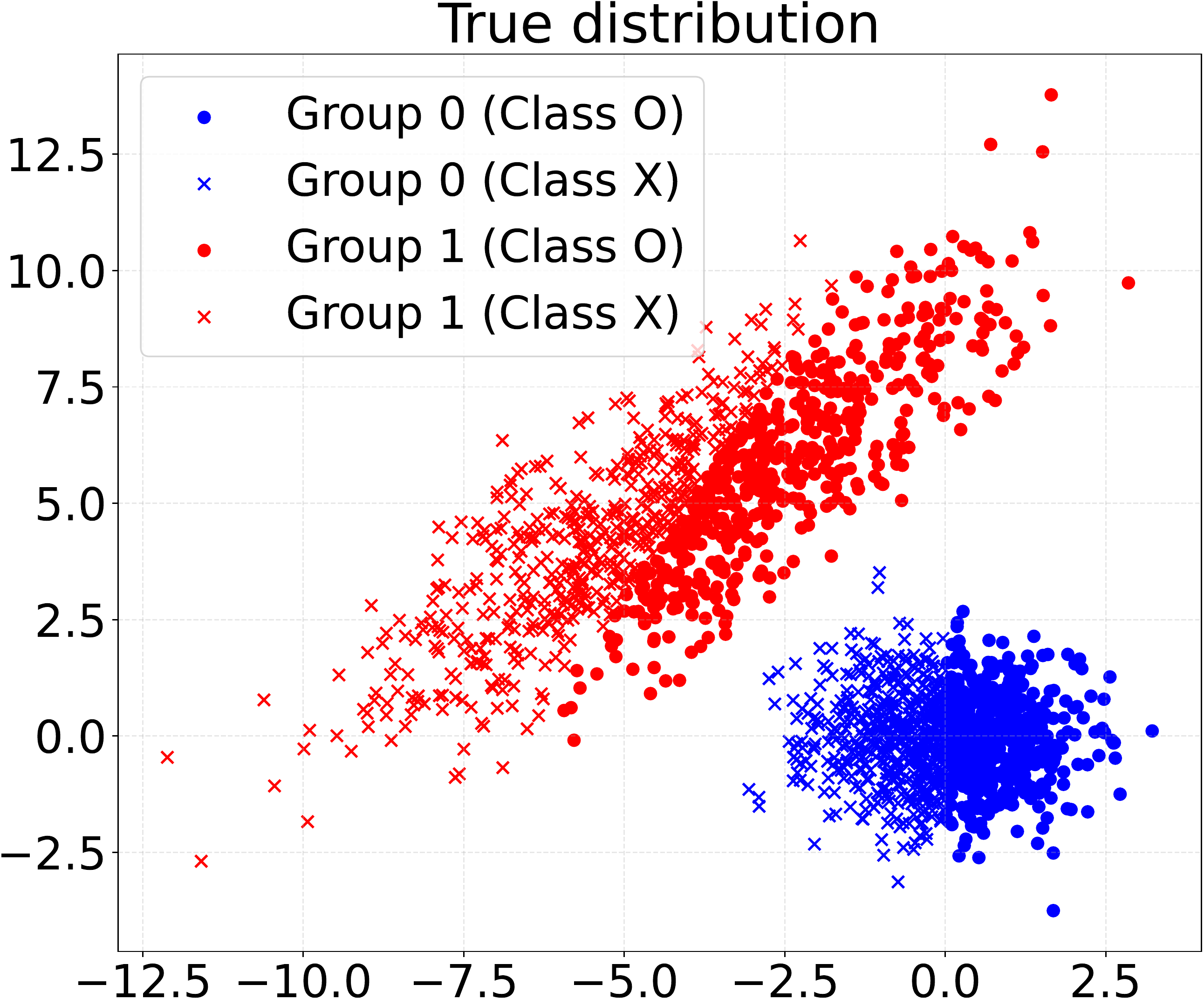}}
\subfigure [LF]{
\includegraphics[width=0.238\textwidth]{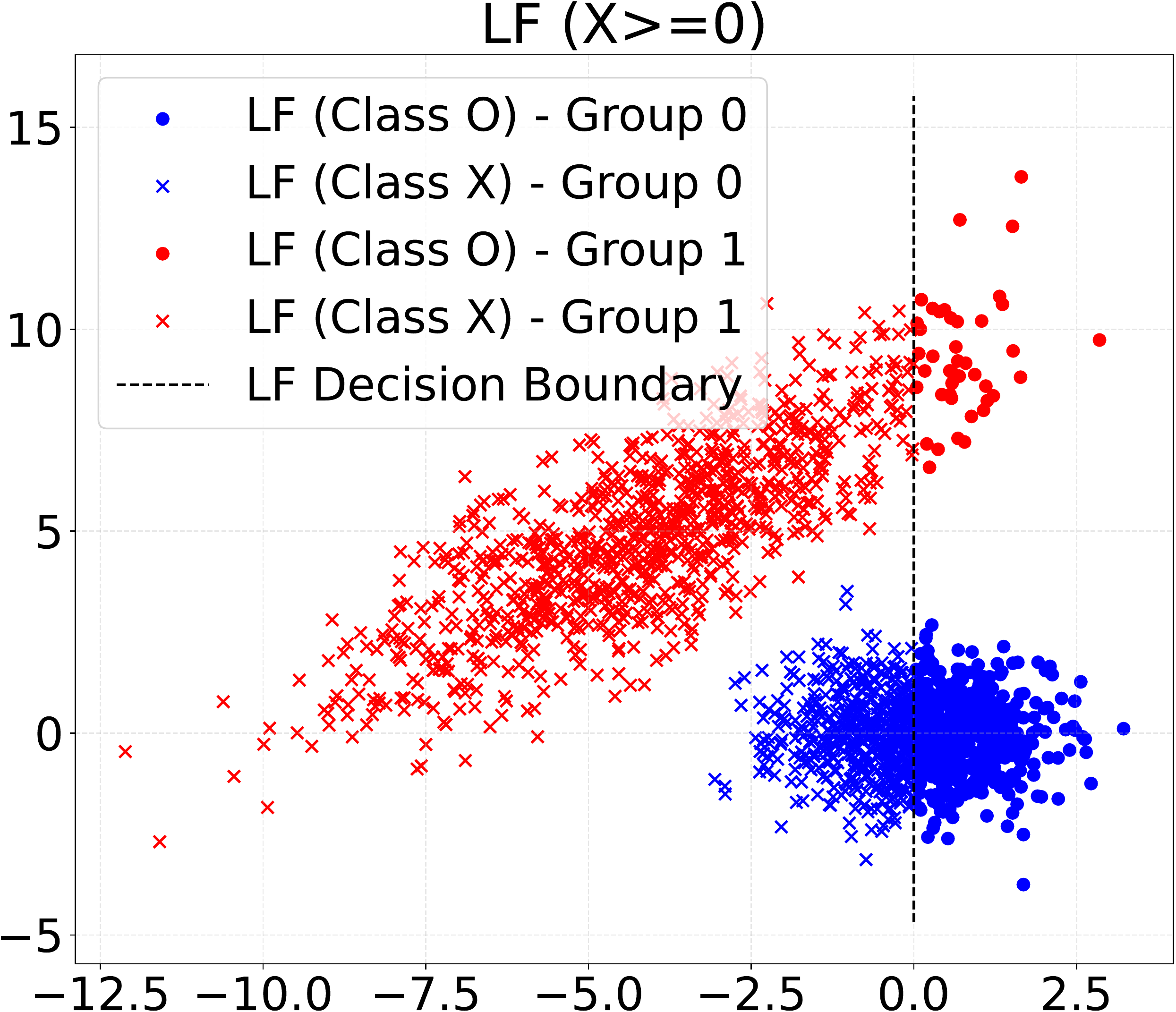}}
\subfigure [Transported LF]{
\includegraphics[width=0.238\textwidth]{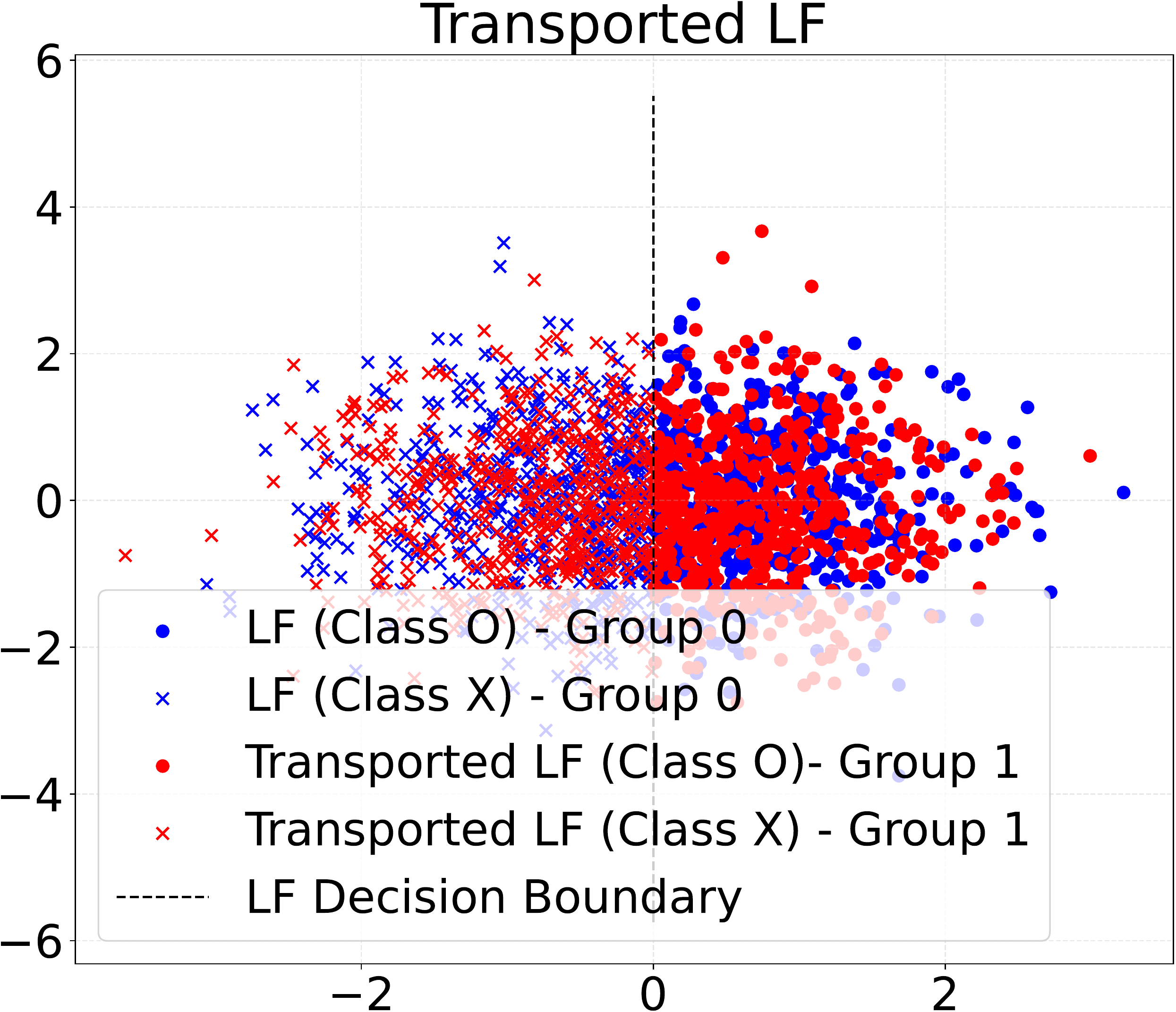}}
\subfigure [Recovered LF]{
\includegraphics[width=0.245
\textwidth]{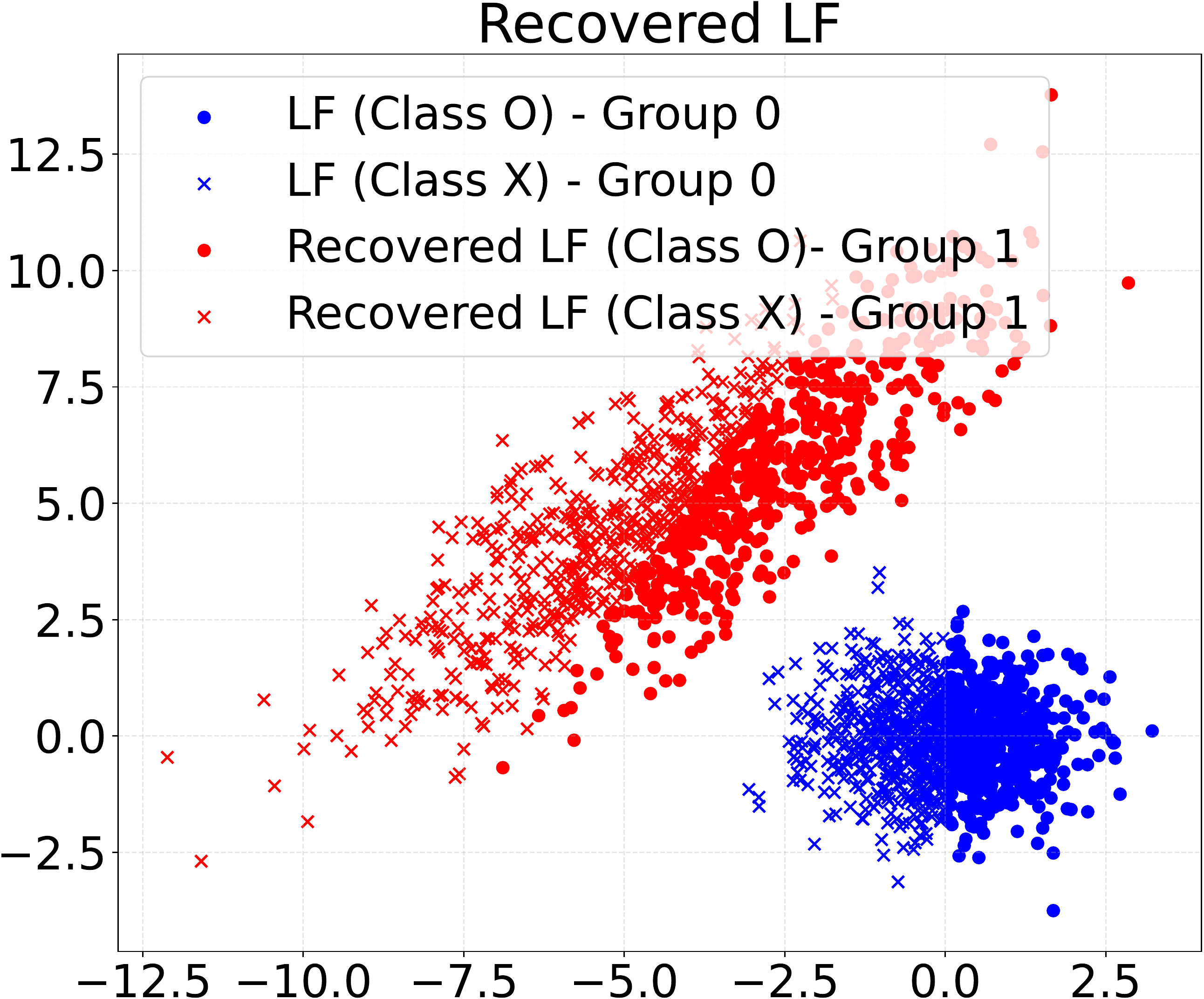}}
\caption{Synthetic datasets. In (a), seemingly different data distributions from the two groups actually have perfect achievable fairness. However, the labeling function in (b) only works well in group 0, which leads to unfairness. Via OT (c), the input distribution can be matched and the LF applied to similar groups---original and recovered. As a result, LFs on group 1 works as well as on 0 (d).}
\label{fig:synthetic_experiment_setup}
\end{figure}

\subsection{Synthetic experiments}\label{subsec:synthetic_exp}

\begin{figure}[h]\small
\small
	\centering
	\subfigure [Number of samples vs. Performance]{
\includegraphics[width=0.4\textwidth]{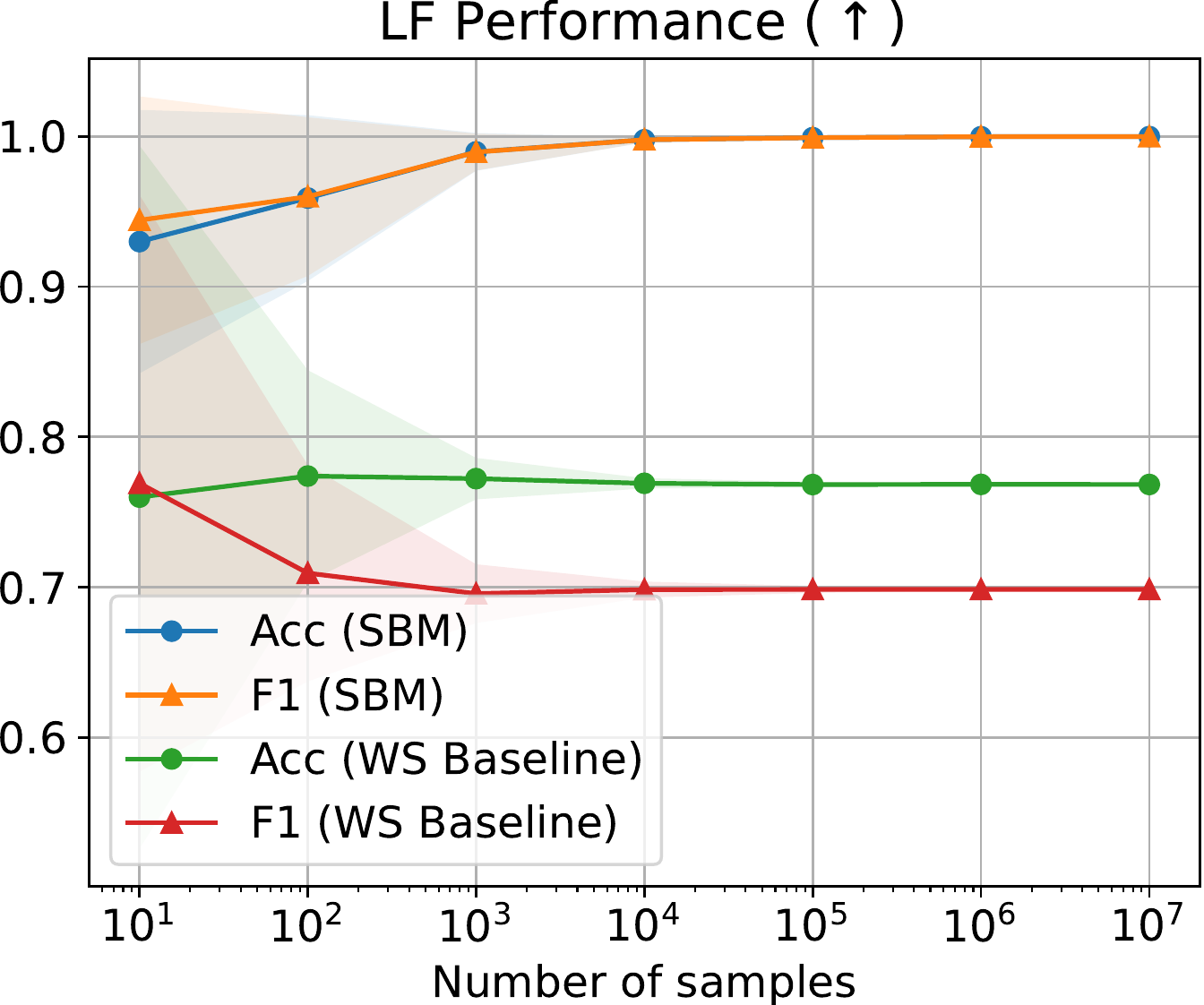}}
\hspace{5mm}
\subfigure [Number of samples vs. Unfairness]{
\includegraphics[width=0.4\textwidth]{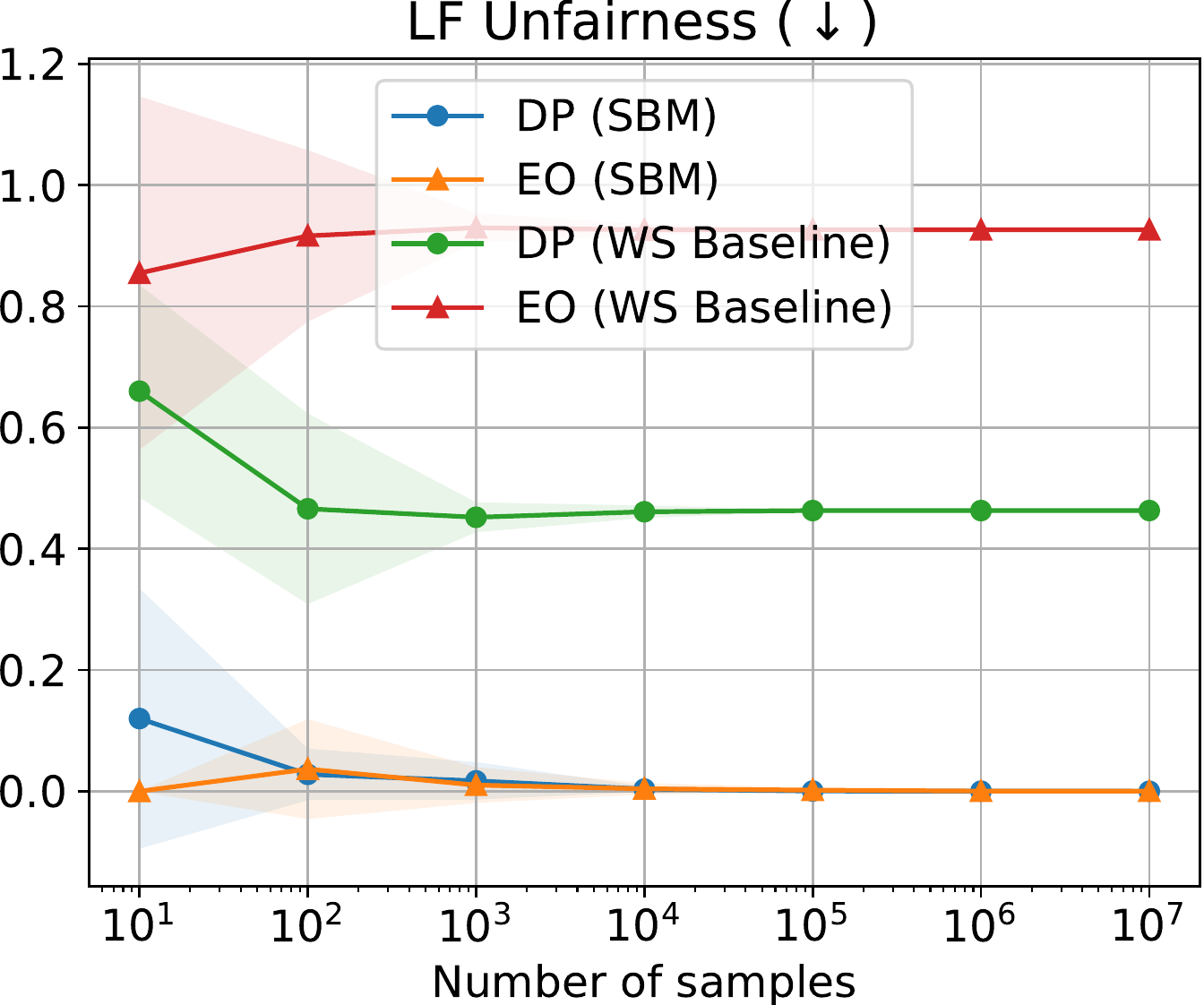}}
\caption{Synthetic experiment on number of samples vs. performance and fairness. Confidence intervals are obtained by $\pm 1.96 \times $ standard deviation of 10 repetition with different seeds.
}
\label{fig:synthetic_experiment_result}
\vspace{-5mm}
\end{figure}

\paragraph{Claim Investigated}
We hypothesized that our method can recover both fairness and accuracy (as a function of the number of samples available) by transporting the distribution of one group to another group when our theoretical assumptions are almost exactly satisfied.
To show this, we generate unfair synthetic data and LFs and see if our method can remedy LF fairness and improve LF performance.

\paragraph{Setup and Procedure}
We generated a synthetic dataset that has perfect fairness as follows.
First, $n$ input features in $\real^{2}$ are sampled as $X_{0}\sim \mathcal{N}(\mathbf{0}, I)$ for group 0, and labels $Y_{0}$ are set by $Y_{0} = \indicator(X_{0}[0] \geq 0.5)$, i.e. 1 if the first dimension is positive or equal. 
Afterwards, $n$ input features in $\real^{2}$ are sampled as $\tilde{X}_{1} \sim \mathcal{N}(0, I)$ for group 1, and the labels are also set by $Y_{1} = \indicator(\tilde{X}_{1}[0] \geq 0.5)$. Then, a linear transformation is applied to the input distribution: $X_{1} = \Sigma \tilde{X}_{1} + \mu$ where $\mu = \begin{bmatrix}
-4 \\
5
\end{bmatrix}$, $\Sigma = \begin{bmatrix}
2 & 1 \\
1 & 2
\end{bmatrix}
$, which is the distribution of group 1.
Clearly, we can see that $X_{1}= \Sigma X_{0}+\mu \sim \mathcal{N}(\mu, \Sigma)$.
Here we applied the same labeling function $\lambda(x)=\indicator(x[0]\geq 0)$, which is the same as the true label distribution in group 0.

We apply our method (SBM OT-L) since our data model fits its basic assumption. 
Again, we evaluated the results by measuring accuracy, F1 score, $\Delta_{DP}$, and $\Delta_{EO}$.
The setup and procedure are illustrated in Figure \ref{fig:synthetic_experiment_setup}.
We varied the number of samples $n$ from $10^{2}$ to $10^{7}$

\paragraph{Results}
The result is reported in Figure \ref{fig:synthetic_experiment_result}.
As we expected, we saw the accuracy and F1 score are consistently improved as the linear Monge map is recovered when the number of samples $n$ increases. Most importantly, we observed that perfect fairness is achieved after only a small number of samples ($10^2$) are obtained.

\subsection{Compatibility with other fair ML methods}\label{subsec:compatibility_exp}
\begin{table}[t!]\small
\begin{minipage}[t]{.4\textwidth}
    \centering
    \captionof{table}{Compatibility with other fair ML methods (HateXplain dataset)}\label{tab:hatexplain_comb_others_main}
    
    \centering
    \begin{tabular}{lllll}
    \toprule
        ~ &  Acc & F1 & $\Delta_{DP}$ & $\Delta_{EO}$ \\ \hline
    FS & 0.698 & 0.755 & 0.238 & 0.121  \\ \cdashlinelr{1-5}
    WS (Baseline) & 0.584 & 0.590 & 0.171 & 0.133 \\
    SBM (OT-S) & \cellcolor{blue!20}\textbf{0.612} & 0.687 & 0.072 & 0.037 \\
    WS (Baseline)   & \multirow{2}{*}{0.539} & \multirow{2}{*}{0.515} & \multirow{2}{*}{0.005} & \multirow{2}{*}{0.047}\\ 
    \quad + OTh-DP & ~ & ~  & ~ & \\
    SBM (OT-S) & ~ & \cellcolor{blue!20}~ & \cellcolor{green!20}~ & \cellcolor{green!20} \\ 
    \quad + OTh-DP &  \multirow{-2}{*}{0.607} & \multirow{-2}{*}{\cellcolor{blue!20}\textbf{0.694}} & \multirow{-2}{*}{\cellcolor{green!20}\textbf{0.002}} & \multirow{-2}{*}{\cellcolor{green!20}\textbf{0.031}} \\
    \bottomrule
    \end{tabular}
\end{minipage}
\hspace{5mm}
\begin{minipage}[t]{.55\textwidth}
\centering
\captionof{table}{Slice discovery with SBM results in WRENCH. Evaluation metric is accuracy for iMDb, F1 for the rest.}
   \label{tab:real_wrench_result}
   \begin{tabular}{llllll}
     \toprule
     \multirow{2}{*}{Methods} & Basket & \multirow{2}{*}{Census} & \multirow{2}{*}{iMDb} & Mush & \multirow{2}{*}{Tennis}\\
     ~ & ball & ~ & ~ & room & ~ \\
      \midrule
      FS           & 0.855 & 0.634 & 0.780 & 0.982 & 0.858 \\\cdashlinelr{1-6}
      WS (HyperLM) & 0.259 & 0.551 & 0.753 & 0.866 & 0.812 \\
      SBM (w/o OT) & \cellcolor{blue!20}\textbf{0.261} & \cellcolor{blue!20}\textbf{0.568} & 0.751 & 0.790 & \cellcolor{blue!20}\textbf{0.819} \\
      SBM (OT-L) & 0.242 & 0.547 & \cellcolor{blue!20}\textbf{0.756} & 0.903 & 0.575 \\
      SBM (OT-S) & 0.260 & 0.552 & \cellcolor{blue!20}\textbf{0.756} & \cellcolor{blue!20}\textbf{0.935} & 0.663 \\
    \bottomrule
   \end{tabular}
\end{minipage}
\end{table}

\paragraph{Claim Investigated} Our method corrects labeling function bias at the individual LF---and not model---level. We expect our methods can work cooperatively, in a constructive way, with other fair ML methods developed for fully supervised learning settings. 

\paragraph{Setup and Procedure} We used the same real datasets, procedures, and metrics as before. We combined the optimal threshold method \citep{hardt2016equality} with WS (baseline) and our approach, SBM (Sinkhorn). We denote the optimal threshold with demographic parity criteria as OTh-DP.

\paragraph{Results} The results are shown in Table \ref{tab:hatexplain_comb_others_main}.
As we expected, we saw the effect of optimal threshold method, which produces an accuracy-fairness (DP) tradeoff. This has the same effect upon our method. Thus, when optimal threshold is applied to both, our method has better performance and fairness aligned with the result without optimal threshold. More experimental results with other real datasets and additional fair ML methods are reported in Appendix \ref{appendix_subsec:comb_others}.

\subsection{Beyond fairness: maximizing performance with slice discovery}
\label{subsec:wrench_exp}

\paragraph{Claim Investigated} We postulated that even outside the context of improving fairness, our techniques can be used to boost the performance of weak supervision approaches. In these scenarios, there are no pre-specified groups. Instead, underperforming latent groups (slices) must first be discovered. Our approach then uses transport to improve labeling function performance on these groups.

\paragraph{Setup and Procedure} We used Basketball, Census, iMDb, Mushroom, and Tennis dataset from the WRENCH benchmark \citep{zhang2021wrench}, which is a well-known weak supervision benchmark but does not include any group information.
We generated group annotations by slice discovery \cite{kim2019multiaccuracy, singla2021understanding, d2022spotlight, eyuboglu2022domino}, which is an approach to discover data slices that share a common characteristic. 
To find groups with a large accuracy gap, we used Domino \cite{eyuboglu2022domino}. It discovers regions of the embedding space based on the accuracy of model.
Since the WS setting does not allow access to true labels, we replaced true labels with pseudolabels obtained from the label model, and model scores with label model probabilities.
In order to show we can increase performance \emph{ even for state-of-the-art weak supervision}, we used the recently-proposed state-of-the-art Hyper Label Model \cite{wu2022learning} as the label model. 
We used the group information generated by the two discovered slices to apply our methods.
We used logistic regression as the end model, and used the same weak supervision pipeline and metrics as in the other experiments, excluding fairness.

\paragraph{Results} The results can be seen in Table \ref{tab:real_wrench_result}. As expected, even without known group divisions, we still observed improvements in accuracy and F1 score. We see the most significant improvements on the Mushroom dataset, where we substantially close the gap to fully-supervised. These gains suggest that it is possible to generically combine our approach with other principled methods for subpopulation discovery to substantially improve weak supervision in general settings. 
\section{Conclusion} \label{sec:conclusion}
Weak supervision has been successful in overcoming manual labeling bottlenecks, but its impact on fairness has not been adequately studied.
Our work has found that WS can easily induce additional bias due to unfair LFs.
In order to address this issue, we have proposed a novel approach towards mitigating bias in LFs and further improving model performance. 
We have demonstrated the effectiveness of our approach using both synthetic and real datasets and have shown that it is compatible with traditional fair ML methods.
We believe that our proposed technique can make weak supervision safer to apply in important societal settings and so encourages its wider adoption.

\subsubsection*{Acknowledgments}
We are grateful for the support of the NSF under CCF2106707 (Program Synthesis for Weak Supervision) and the Wisconsin Alumni Research Foundation (WARF). We thank Nick Roberts, Harit Vishwakarma, Tzu-Heng Huang, Jitian Zhao, and John Cooper for their helpful feedback and discussion.

\bibliographystyle{alpha}
\bibliography{ref}

\newpage
\appendix
\section*{Appendix}
The appendix contains additional details, proofs, and experimental results. The glossary contains a convenient reminder of our terminology (Appendix \ref{appendix:glossary}).
Appendix \ref{appendix:more_backgrounds} provides more related works and discussion about the relationship between our work and related papers.
In Appendix \ref{appendix:algorithm-details}, we describe the details of our algorithm and discuss their implementations.
Appendix \ref{appendix:theory-details} provides the proofs of theorems that appeared in Section \ref{sec:theoretical_results}.
Finally, we give more details and analysis of the experiments and provide additional experiment results.

\section{Glossary} \label{appendix:glossary}
\label{sec:gloss}
The glossary is given in Table~\ref{table:glossary} below.
\begin{table*}[h]
\centering
\begin{tabular}{l l}
\toprule
Symbol & Definition \\
\midrule
$\mathcal{X}$ & Feature space \\
$\mathcal{Y}$ & Label metric space\\
$\mathcal{Z}$ & Latent space of feature space \\
$A$ & Group variable, assumed to have one of two values \{0, 1\} for simplicity \\ 
$X_{k}$ & Inputs from group $k$.\\
$X$ & Inputs from all groups $k \in \{1, \ldots, l\}$\\
$Y_{k}$ & True labels from group $k$ \\
$Y$ & True labels from all groups $k \in \{1, \ldots, l\}$\\
$P$ & Probability distribution in the latent space\\
$P_k$ & Probability distribution in the group k\\

\multirow{2}{*}{$\lambda^{j}_{k}$} & Noisy labels of LF $j$ of group $k$. If it is used with input (e.g. $\lambda^{j}_{k}(x)$), \\
& it denotes LF $j$ from group $k$ such that its outputs are noisy labels\\
\multirow{2}{*}{$\lambda^{j}$} & Noisy labels of LF $j$ of all groups $k \in \{1, \ldots, l\}$. If it is used with input (e.g. $\lambda^{j}(x)$), \\
& it denotes LF $j$ such that its outputs are noisy labels\\
$\Lambda_{k}$ & Collection of noisy labels from group $k$, $\Lambda_{k} = [\lambda^1_{k}, \ldots, \lambda^{m}_{k}]$ \\
$\Lambda$ & Collection of noisy labels from all groups $k \in \{1, \ldots, l\}$, $\Lambda=[\lambda^{1}, \ldots, \lambda^{m}]$\\
$g_{k}$ & $g_{k}:\mathcal{Z} \to \mathcal{X}$, $k$-th group transformation function\\
$h_{k}$ & $h_{k}:\mathcal{X} \to \mathcal{Z}$ the inverse transformation of $g_k$, i.e. $h_{k}g_{k}(x)=x$ for $x \in \mathcal{Z}$\\
$\theta_{y}$ & Prior parameter for $Y$ in label model \\
$\theta_{j}$ & Accuracy parameter for $\lambda^{j}$ in label model \\ 
$\theta_{j, x}$ & Accuracy parameter for $x$ of $\lambda^{j}$ in label model \citep{chen2022shoring} \\ 
$a^{j}_{k}$ & Accuracy of LF $j$ in group $k$, $a^{j}_{k} = \E{\lambda_{j}Y|A=k}$\\
$a^{j}$ & Accuracy of LF $j$, $a^{j} = \E{\lambda_{j}Y}$\\
$\hat{a}^{j}_{k}, \hat{a}^{j}$ & Estimates of $a^{j}_{k}, a^{j}$ \\
$\mu_{k}$ & Mean of features in group $k$\\
$\Sigma_{k}$ & Covariance of features in group $k$\\
$I$ & Identity transformation, i.e. $I(x)=x$\\
$tr(\Sigma)$ & Trace of $\Sigma$ \\
$\lambda_{\max}(\Sigma), \lambda_{\min}(\Sigma)$ & Maximum, minimum values of $\Sigma$ \\

$\mathbf r(\Sigma)$ & Effective rank of $\Sigma$, i.e. $\mathbf r(\Sigma) = \frac{tr(\Sigma)}{\lambda_{\max}(\Sigma)}$\\
\toprule
\end{tabular}
\caption{
	Glossary of variables and symbols used in this paper.
}
\label{table:glossary}
\end{table*}
\section{Extended related work} \label{appendix:more_backgrounds}

\paragraph{Weak supervision}
In weak supervision, we assume the true label $Y\in \mathcal{Y}$ cannot be accessed, but labeling functions $\lambda^{1}, \lambda^{2}, \ldots, \lambda^{m} \in \mathcal{Y}$ which are noisy versions of true labels, are provided. These weak label sources include code snippets expressing heuristics about $Y$, crowdworkers, external knowledge bases, pretrained models, and others \citep{karger2011iterative, mintz2009distant, gupta2014improved, Dehghani2017, Ratner18}.
Given $\lambda^{1}, \lambda^{2}, \cdots, \lambda^{m}$, WS often takes a two-step procedure to produce an end model \cite{dawid1979maximum, Ratner16, fu2020fast, ratner2019training, shin21universalizing, vishwakarmalifting}.
The first step, the label model, obtains fine-grained pseudolabels by modeling accuracies and correlations of label sources.
The second step is training or fine-tuning the end model with pseudolabels to yield better generalization than the label model.
Not only can our method improve performance and fairness, but it is also compatible with the previous WS label models, as it works at the label source level---prior to the label model.

Another related line of work is WS using embeddings of inputs \cite{langtraining, chen2022shoring, zhang2023leveraging}. The first work, \cite{langtraining}, uses embeddings for subset selection, where high-confidence subsets are selected based on the proximity to the same pseudolabels. The second, \cite{chen2022shoring}, exploits embeddings to estimate local accuracy and improve coverage. The third \cite{zhang2023leveraging} also incorporates instance features into label model via Gaussian process and Bayesian mixture models.
Similarly, our method uses embeddings to improve fairness by mapping points in different groups in the embedded space. Embedding spaces formed from a pre-trained model typically offer better distance metrics than the input space, which provides an advantage when modeling the relationships between input points. 
\paragraph{Fairness in machine learning}
Fairness in machine learning is an active research area to detect and address biases in ML algorithms. There have been many suggested fairness (bias) notions and solutions for them \cite{dwork2012fairness, hardt2016equality, kusner2017counterfactual, heidari2019long, gupta2019equalizing, huan2020fairness, guldogan2022equal}.
Popular approaches include adopting constrained optimization or regularization with fairness metrics \cite{dwork2012fairness, hardt2016equality, agarwal2018reductions, heidari2019long, gupta2019equalizing, huan2020fairness} and postprocessing model outputs to guarantee fairness notions \cite{menon2018cost, zeng2022bayes}.
While they have been successful in reducing bias, those methods often have a fundamental tradeoff between accuracy and fairness.

A more recent line of works using Optimal Transport (OT) \cite{gordaliza2019obtaining, black2020fliptest, silvia2020general, si2021testing, buyl2022optimal} has shown that it is possible to improve fairness without such tradeoffs by uniformizing distributions in different groups.
Our method improves fairness and performance simultaneously by using OT.
However, OT-based methods can suffer from additional errors induced by the optimal transport mapping.
In our method, OT-induced errors that occur at the label sources can be covered by the label model, since those labeling functions that are made worse are down-weighted in label model fitting.
A limitation of our method is that its resulting fairness scores are unpredictable in advance, while methods having fairness-accuracy tradeoffs typically come with tunable parameters to a priori control this tradeoff. 
However, as we argued in Section \ref{sec:experiment}, our method is compatible with such methods, so that it is possible to control the fairness of the end model by adopting other methods as well.

Another related line of research is fairness under noisy labels \cite{wang2021fair, konstantinov2022fairness, wei2023fairness, zhang2023fair}. \cite{wang2021fair, wu2022fair} suggests a new loss function that considers the noise level in labels to improve fairness. \cite{konstantinov2022fairness} studies PAC learning under a noisy label setting with fairness considerations. \cite{wei2023fairness} introduces a regularizer to boost fairness under label noise, focusing on the performance difference in subpopulations. Finally, \cite{zhang2023fair} also proposes a regularizer based on mutual information to improve fairness of representations under label noise. While our method shares the assumption of label noise, the main difference is that we have access to multiple noisy estimates---whose noise rates we can learn---and where we can apply fairness notions to each separate estimate. In contrast, other methods consider only one noisy label source---which corresponds to pseudolabels generated by label models in weak supervision. Thus, their methods can be used additionally with SBM, like other previous fair ML methods such as optimal threshold. Also, the typical weak supervision pipeline does not include training for individual label sources --- so that techniques involving optimizing loss functions and regularization do not fit this setting.

\paragraph{Data slice discovery}
Data slice denotes a group of data examples that share a common attribute or characteristic \cite{eyuboglu2022domino}. Slice discovery aims to discover semantically meaningful subgroups from unstructured input data. Typical usage of slice discovery is to identify data slices where the model underperforms. For example, \cite{de2019does} analyzed off-the-shelf object recognition models' performance with additional context annotations, revealing that model performance can vary depending on the geographical location where the image is taken. \cite{winkler2019association} shows that the melanoma detection models do not work well for dermascopic images with skin markings since they use those markings as a spurious feature. \cite{oakden2020hidden} found pneumothorax classification models in chest X-rays underperform for chest x-ray images with chest drains. To discover such data slices without additional annotations, a variety of strategies have been employed. \cite{oakden2020hidden, sohoni2020no, eyuboglu2022domino} used clustering or mixture modeling with dimensionality reduction such as U-MAP \cite{sohoni2020no}, SVD \cite{eyuboglu2022domino}. \cite{d2022spotlight} solves an optimization problem to find high loss data points given a subgroup size.

While typical slice discovery methods require access to true labels to find underperforming data slices, weak superivison setting does not allow it. To work this around, we replace true labels with pseudolabels generated by label model. And, slice discovery methods are applied for outputs of each LF. Thus, the discovered data slices represent data points where each LF disagrees with the aggregated pseudolabels.
\section{Algorithm details} \label{appendix:algorithm-details}

\begin{figure}[!ht]
    \centering
        \includegraphics[width=\textwidth]{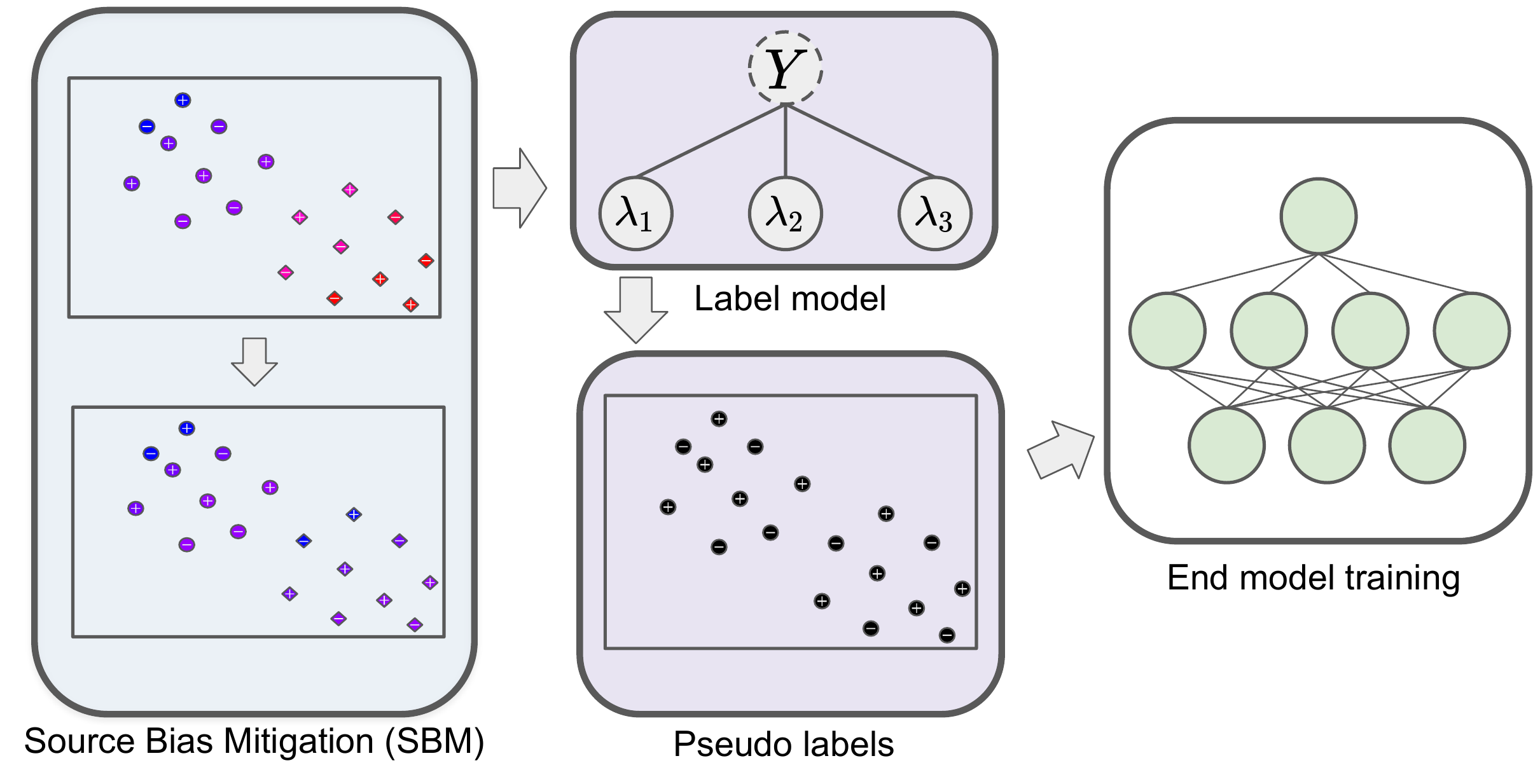}
    \caption{Overall WS pipeline involving SBM}\label{fig:overall_pipeline}
\end{figure}

\begin{algorithm}[!ht]
    \caption{\textsc{Estimate Accuracy (Triplet) \citep{fu2020fast}}} \label{alg:triplet}
	\begin{algorithmic}
		\STATE \textbf{Parameters:}
        Weak label values $\lambda^{1}, \ldots, \lambda^{m}$

        \FOR{$i,j,k \in \{1,2,\ldots, m\}$ $(i\neq j \neq k)$}
        \STATE $|\hat{a}^{i}| \gets \sqrt{\hat{\mathbb{E}}[\lambda^i \lambda^j]\hat{\mathbb{E}}[\lambda^i \lambda^k]/\hat{\mathbb{E}}[\lambda^j \lambda^k]}$
        \STATE $|\hat{a}^{j}| \gets \sqrt{\hat{\mathbb{E}}[\lambda^i \lambda^j]\hat{\mathbb{E}}[\lambda^j \lambda^k]/\hat{\mathbb{E}}[\lambda^i\lambda^k]}$
        \STATE $|\hat{a}^{k}| \gets \sqrt{\hat{\mathbb{E}}[\lambda^i \lambda^k]\hat{\mathbb{E}}[\lambda^j \lambda^k]/\hat{\mathbb{E}}[\lambda^i \lambda^j]}$
        \ENDFOR

        \RETURN \text{ResolveSign} $(\hat{a}^{i}) \qquad \forall i \in \{1,2,\ldots, m\}$
  
	\end{algorithmic}
\end{algorithm}

\begin{algorithm}[!ht]
    \caption{\textsc{Transport}} \label{alg:transport}
	\begin{algorithmic}
		\STATE \textbf{Parameters:} Source input $X_{src}$, destination input $X_{dst}$, source weak label values $\lambda_{src}$, destination weak label values $\lambda_{dst}$, Optimal Transport type $O$, the number of nearest neighbors $k=1$
            \IF{OT type $O$ is linear}
            \STATE{$\tilde{X}_{src} \gets$ \textsc{LinearOT}$(X_{src}, X_{dst})$} \citep{knott1984optimal}
            \ELSIF{OT type $O$ is sinkhorn}
            \STATE{$\tilde{X}_{src} \gets$ \textsc{SinkhornOT}$(X_{src}, X_{dst})$} \citep{cuturi2013sinkhorn}
            \ELSE
            \STATE{$\tilde{X}_{src} \gets X_{src}$}
            \ENDIF
            \STATE{$\tilde{\lambda}_{src} \gets kNN(\tilde{X}_{src}, X_{dst}, \lambda_{dst}, k)$}
            \RETURN $\tilde{\lambda}_{src}$
	\end{algorithmic}

\end{algorithm}

\begin{algorithm}[!ht]
    \caption{\textsc{LinearOT (OT-L)}\cite{knott1984optimal}}\label{alg:linearOT}
	\begin{algorithmic}
		\STATE \textbf{Parameters:} Source input $X_{src}$, destination input $X_{dst}$        \STATE $\mu_{s}, \mu_{t} \gets \textsc{mean}(X_{src}), \textsc{mean}(X_{dst})$ 
            \STATE $\Sigma_{s}, \Sigma_{t} \gets \textsc{Cov}(X_{src}), \textsc{Cov}(X_{dst})$
            \STATE $A \gets \Sigma_{s}^{-1/2}(\Sigma_{s}^{1/2}\Sigma_{t}\Sigma_{s}^{1/2})^{1/2}\Sigma_{s}^{-1/2}$
            \STATE $b \gets \mu_t - A\mu_{s}$
            \STATE $n_{src} \gets \size(X_{src})$
            \FOR{$i \in \{1,2,\ldots, n_{src}\}$}
            \STATE $\tilde{X}_{src}[i] \gets AX_{src}[i]+b$
            \ENDFOR
            \RETURN $\tilde{X}_{src}$
	\end{algorithmic}
\end{algorithm}

\begin{algorithm}[!ht]
    \caption{\textsc{SinkhornOT (OT-S)}\cite{cuturi2013sinkhorn}}\label{alg:sinkhornOT}
	\begin{algorithmic}
		  \STATE \textbf{Parameters:} Source input $X_{src}$, destination input $X_{dst}$, Entropic regularization parameter $\eta$=1
            \STATE $M \gets \textsc{PairwiseDistance}(X_{src}, X_{dst})$

            \STATE $n_{src}, n_{dst} \gets \size(X_{src}), \size(X_{dst})$
            \STATE $a \gets (1/n_{src},\ldots, 1/n_{src})^{T} \in \real^{n_{src}}$
            \STATE $b \gets (1/n_{dst},\ldots, 1/n_{dst})^{T} \in \real^{n_{dst}}$
            
            \STATE Set $K \in \real^{n_{src} \times n_{dst}}$ such that $K_{ij} = \exp{(-M_{ij}/\eta)}$
            \STATE Initialize $v=(1,\ldots,1)^T \in \real^{n_{dst}}$
            \WHILE{not converged}
            \STATE $u\gets\cfrac{a}{Kv}$
            \STATE $v\gets\cfrac{b}{K^Tu}$
            \ENDWHILE
            \STATE $\pi \gets \diag(u)K\diag(v)$
            \STATE $\tilde{\pi}=n_{src}\pi$
            \FOR{$i \in \{1,2,\ldots, n_{src}\}$}
            \STATE $\tilde{X}_{src}[i] \gets \sum_{j=1}^{n_{dst}}\tilde{\pi}_{ij}X_{dst}[j]$
            \ENDFOR
            \RETURN $\tilde{X}_{src}$
          
	\end{algorithmic}
\end{algorithm}

In this appendix section, we discuss the details of the algorithms we used. The overall WS pipeline including SBM is illustrated in Figure \ref{fig:overall_pipeline}. Our method is placed in the first step as a refinement of noisy labels. In this step, we improve the noisy labels by transporting the low accuracy group to the high accuracy group. To identify low accuracy and high accuracy groups, we estimate accuracy using Algorithm \ref{alg:triplet}. After estimating accuracies, we transport the low accuracy group to the high accuracy group and get refined labels for the low accuracy group by Algorithm \ref{alg:transport}. Linear OT (Optimal Transport) is used to estimate a mapping. Sinkhorn OT approximates optimal coupling using Sinkhorn iteration. After obtaining the optimal coupling, data points from the low accuracy group are mapped to the coupled data points in the high accuracy group. If OT type is not given, data points from the low accuracy group are mapped to the 1-nearest neighbor in the high accuracy group. After obtaining mapped points, weak labels associated with mapped points in the target group are used for the data points from the low accuracy group.

The next step is a standard weak supervision pipeline step -- training a label model and running inference to obtain pseudolabels. Finally, we train the end model. We used the off-the-shelf label model Snorkel \citep{bach2018snorkel} for the label model and logistic regression as the end model in all experiments unless otherwise stated. For the optimal transport algorithm implementation, we used the Python POT package \citep{flamary2021pot}.

\section{Theory details} \label{appendix:theory-details}

Accuracy is defined as $P(\lambda(x)=y)=P(\lambda(x)=1, y=1)+P(\lambda(x)=-1, y=-1)$. For ease of notation, below we set $\phi(x, x^{\text{center}_{0}}) = (1+\norm{x - x^{\text{center}_{0}}})^{-1}$.

\newtheorem*{T1}{Theorem~\ref{thrmUnfair}}

\newtheorem*{T2}{Theorem~\ref{thrmHhat}}

\subsection{Proof of Theorem~\ref{thrmUnfair}}

\begin{T1}
Let $g_1^{(k)}$ be an arbitrary sequence of functions such that $\lim_{k\to \infty} \mathbb E_{x' \in g_1^{(k)}(\mathcal X)} [\norm{x'-x^{\text{center}_{0}}}] \rightarrow \infty$.
Suppose our assumptions above are met; in particular, that the label $y$ is independent of the observed features $x=I(z)$ or $x' =g_1^{(k)}(z), \forall k,$ conditioned on the latent features $z$. 
    Then, 
\[    \lim_{k \to \infty} \mathbb E_{x' \in g_1^{(k)}(\mathcal{Z})}[P(\lambda(x')= y)]= \frac{1}{2},\]
which corresponds to random guessing.
\end{T1}

\begin{proof}[Proof of Theorem~\ref{thrmUnfair}]
    Because $\lim_{k\to \infty} \mathbb E_{x' \in g_1^{(k)}(\mathcal{Z})} [\norm{x'-x^{\text{center}_{0}}}]\to \infty$, we have 
    $$\lim_{k\to \infty} \mathbb E_{x' \in g_1^{(k)}(\mathcal{Z})} 
        [\phi(x', x^{\text{center}_{0}})] =0$$
        and because $Z=\sum_{i\in\{0,1\}} \sum_{j\in \{0,1\}} P(\lambda(x')=i, y=j)$,

\begin{align*}
    \lim_{k\to \infty} \mathbb E_{x' \in g_1^{(k)}(\mathcal{Z})} 
        [P(\lambda(x')=y)] &= \lim_{k\to \infty} \mathbb E_{x' \in g_1^{(k)}(\mathcal{Z})} \left[ P(\lambda(x')=1,y=1)+ P(\lambda(x')=-1, y=-1) \right]\\
        &= \lim_{k\to \infty} \mathbb E_{x' \in g_1^{(k)}(\mathcal{Z})} \left[\frac{1}{Z} \exp{(\theta_0 (1)(1) \phi(x', x^{\text{center}_{0}}) )} + \frac{1}{Z} \exp{(\theta_0 (-1)(-1) \phi(x', x^{\text{center}_{0}}) )}\right]\\
        &= \lim_{k\to \infty} \mathbb E_{x' \in g_1^{(k)}(\mathcal{Z})} \left[\frac{2}{Z} \exp{(\theta_0 \phi(x', x^{\text{center}_{0}}) )}\right]\\
        &= \lim_{k\to \infty} \mathbb E_{x' \in g_1^{(k)}(\mathcal{Z})} 
            \left[ \frac{2 \exp (\theta_0 \phi(x', x^{\text{center}_{0}}))}{2(\exp (\theta_0 \phi(x', x^{\text{center}_{0}})) + \exp (-\theta_0 \phi(x', x^{\text{center}_{0}})))} \right] \\
        &= \lim_{k\to \infty} \mathbb E_{x' \in g_1^{(k)}(\mathcal{Z})} 
            \left[ \frac{\exp (\theta_0 \phi(x', x^{\text{center}_{0}}))}{\exp (\theta_0 \phi(x', x^{\text{center}_{0}})) + \exp (-\theta_0 \phi(x', x^{\text{center}_{0}}))} \right] \\
        &=\frac{1}{2}.
\end{align*}
since we get \[\lim_{k\to \infty} \mathbb E_{x' \in g_1^{(k)}(\mathcal{Z})} \exp (\theta_0 \phi(x', x^{\text{center}_{0}})) =1\]
\[\lim_{k\to \infty} \mathbb E_{x' \in g_1^{(k)}(\mathcal{Z})} \exp (-\theta_0 \phi(x', x^{\text{center}_{0}})) = 1\] from $\lim_{k\to \infty} \mathbb E_{x' \in g_1^{(k)}(\mathcal{Z})} 
        [\phi(x', x^{\text{center}_{0}})] =0$
\end{proof}

\subsection{Proof of Theorem~\ref{thrmHhat}}

\begin{T2}
Let $\tau=\max \left(\frac{\mathbf r(\Sigma_0)}{n_0}, 
\frac{\mathbf r(\Sigma_1)}{n_1}, 
\frac{t}{\min(n_0,n_1)}, 
\frac{t^2}{\max(n_0,n_1)^2} \right)$ and $C$ be a constant. Using Algorithm~\ref{alg:sbm}, for any $t>0$, we bound the difference 

\begin{align*}
    |\mathbb E_{x \in \mathcal Z}[P(\lambda(x)=y)]-\mathbb E&_{x' \in \mathcal{X}} [P(\lambda(\hat h(x'))=y)]| \\
    &\leq 4\theta_0 C \sqrt{ \tau \mathbf r(\Sigma_1)}.
\end{align*}

with probability $1-e^{-t}-\frac{1}{n_1}$. 
\end{T2}

We use the result from \cite{flamary2019concentration}, which bounds the difference of the true $h$ and empirical $\hat h$ Monge estimators between two distributions $P_0(x)$ and $P_1(x')$. Let $\mathbf r(\Sigma)$, $\lambda_{\min}(\Sigma)$ and $\lambda_{\max}(\Sigma)$ denote the effective rank, minimum and maximum eigenvalues of matrix $\Sigma$ respectively. Then,

\begin{lemma}[\cite{flamary2019concentration}] \label{lem1}
Let $P_0(x)$ and $P_1(x')$ be sub-Gaussian distributions on $\mathcal X=\mathbb R^d$ with expectations $\mu_0, \mu_1$ and positive-definite covariance operators $\Sigma_0, \Sigma_1$ respectively. We observe $n_0$ points from the distribution $P_0(x)$ and $n_1$ points from the distribution $P_1(x')$. We assume that
\begin{equation*}
    c_1 < \min_{j\in \{0,1\}} \lambda_{\min}(\Sigma_j) \leq \max_{j\in \{0,1\}} \lambda_{\max} (\Sigma_j) \leq c_2,
\end{equation*}
for fixed constants $0 <c_1\leq c_2 < \infty$. Further, we assume $n_0\geq c \mathbf r(\Sigma_0)$ and $n_1 \geq d$ for a sufficiently large constant $c>0$.

Then, for any $t>0$, we have with probability at least $1-e^{-t}-\frac{1}{n_1}$ that

\begin{equation*}
    \mathbb E_{x' \in \mathcal{X}} [\norm{h(x') - \hat h(x')}] \leq C \sqrt{ \tau \mathbf r(\Sigma_1)},
\end{equation*}
where $C$ is a constant independent of $n_0, n_1, d, \mathbf r(\Sigma_0), \mathbf r(\Sigma_1)$ and $\tau=\max \left(\frac{\mathbf r(\Sigma_0)}{n_0}, 
\frac{\mathbf r(\Sigma_1)}{n_1}, 
\frac{t}{\min(n_0,n_1)}, 
\frac{t^2}{\max(n_0,n_1)^2} \right)$.
\end{lemma}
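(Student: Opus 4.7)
The plan is to reduce the gap in labeling-function accuracy to the $L^1$ transport error $\mathbb E_{x' \in \mathcal X}[\|h(x')-\hat h(x')\|]$ and then invoke Lemma~\ref{lem1} directly. First I would write the per-input accuracy in closed form: marginalizing the joint in Eq.~(2) (with the class-balance exponent $\theta_y y$ factoring symmetrically out of numerator and denominator, as in the proof of Theorem~\ref{thrmUnfair}) gives
\[P(\lambda(x)=y) = \sigma\bigl(2\theta_0\,\phi(x,x^{\text{center}_0})\bigr),\]
where $\sigma$ is the logistic sigmoid and $\phi(x,c)=(1+\|x-c\|)^{-1}$. Second, since $P_1(x')$ is the pushforward of $P_0(z)$ under $g_1$ and $h$ is its (true) inverse, a change of variables gives $\mathbb E_{x \in \mathcal Z}[P(\lambda(x)=y)] = \mathbb E_{x' \in \mathcal X}[\sigma(2\theta_0\phi(h(x'),x^{\text{center}_0}))]$. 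Subtracting the analogous expression with $\hat h$ and applying Jensen's inequality reduces the target to $\mathbb E_{x'}\bigl[|\sigma(2\theta_0\phi(h(x'),x^{\text{center}_0})) - \sigma(2\theta_0\phi(\hat h(x'),x^{\text{center}_0}))|\bigr]$.

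Next I would run the integrand through a Lipschitz chain. The logistic sigmoid has derivative at most $1/4$, so $u\mapsto\sigma(2\theta_0 u)$ is $\theta_0/2$-Lipschitz. The map $x\mapsto\phi(x,c)$ is $1$-Lipschitz in $\|\cdot\|$: a direct computation gives $|\phi(u,c)-\phi(v,c)| = |\|v-c\| - \|u-c\||/[(1+\|u-c\|)(1+\|v-c\|)] \le \|u-v\|$ by the reverse triangle inequality, since the denominator is at least $1$. Composing, the integrand is pointwise at most $(\theta_0/2)\,\|h(x')-\hat h(x')\|$. Taking expectation and applying Lemma~\ref{lem1}---whose hypotheses on subgaussian $P_0,P_1$, bounded spectra, and sample sizes are exactly the ``Setting and Assumptions'' paragraph of Section~\ref{sec:theoretical_results}---yields the claimed form
\[\bigl|\mathbb E_{x}[P(\lambda(x)=y)]-\mathbb E_{x'}[P(\lambda(\hat h(x'))=y)]\bigr| \le \tfrac{\theta_0}{2}\cdot C\sqrt{\tau\,\mathbf r(\Sigma_1)}\]
with probability $1-e^{-t}-1/n_1$.

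The main technical point is controlling the constant and cleanly identifying $h$. My chain produces $\theta_0/2$ as the leading factor, which is tighter than the theorem's stated $4\theta_0$; the looser constant presumably absorbs (i) the multi-LF form of Eq.~(2), in which the partition function no longer factors over labeling functions and the $\theta_y y$ prior can re-enter the Lipschitz constant, and (ii) the additional error from Algorithm~\ref{alg:sbm}'s 1-nearest-neighbor ``snap'' after the OT step, which is controllable under a mild density assumption on $P_0$ but introduces an $O(n_0^{-1/d})$ slack that must be merged into the same $C\sqrt{\tau\,\mathbf r(\Sigma_1)}$ rate by tuning constants. A subtler point is that the $h$ appearing in Lemma~\ref{lem1} is the Monge map between $P_0$ and $P_1$, not literally $g_1^{-1}$; identifying the two requires a Brenier-type regularity condition on $g_1$, which is standard and does not affect the rate but should be acknowledged.
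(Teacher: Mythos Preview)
Your proposal does not prove the stated lemma. Lemma~\ref{lem1} is a concentration bound for the empirical Monge map, quoted verbatim from \cite{flamary2019concentration}; the paper does not prove it and treats it as a black box. What you have written is instead a proof of Theorem~\ref{thrmHhat}, which \emph{uses} Lemma~\ref{lem1} as an input. So as a proof of the statement actually given, your proposal is a category error: you invoke the lemma rather than establish it.

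That said, if your target was really Theorem~\ref{thrmHhat}, your argument matches the paper's proof almost exactly. The paper also writes $P(\lambda(x)=y)=\sigma(2\theta_0\phi(x,x^{\text{center}_0}))$, proves in a separate Lemma (their Lemma~\ref{lemLipschitz}) that this map is Lipschitz in $x$, and then chains with Lemma~\ref{lem1} after the same change of variables $I(z)=h(g_1(z))$. The only substantive difference is the Lipschitz constant: the paper bounds $\|\nabla_x P(\lambda(x)=y)\|$ by $4\theta_0$ via the crude estimate $\sigma(u)(1-\sigma(u))<1\cdot(1-(-1))=2$, whereas you use the sharp $\sigma'\le 1/4$ to get $\theta_0/2$. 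Your constant is correct and tighter. Your speculation that the paper's $4\theta_0$ absorbs multi-LF coupling or the nearest-neighbor snap is unfounded---the paper's proof is for a single LF, ignores the $k$NN step entirely, and the looser constant comes purely from the weaker bound on $\sigma'$. Your remarks about identifying the Monge map with $g_1^{-1}$ via Brenier regularity are reasonable caveats that the paper does not address.
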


We will also use the following:

\begin{lemma}\label{lemLipschitz}
    The probability $P(\lambda(x)=y)$ is $L$-Lipschitz with respect to $x \in \mathcal{X}$. Specifically, $\forall x_1, x_2 \in \mathcal X$,
    \begin{equation*}
        |P(\lambda(x_1)=y)-P(\lambda(x_2)=y)| \leq 4\theta_0 ||x_1-x_2||.
    \end{equation*}
\end{lemma}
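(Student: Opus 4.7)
The plan is to express $P(\lambda(x)=y)$ in closed form, then bound its Lipschitz constant by composing simple bounds on the pieces.

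First, I would write out $P(\lambda(x)=y) = P(\lambda(x)=1,y=1) + P(\lambda(x)=-1,y=-1)$ using model \eqref{eq:model_general} (dropping $\theta_y$, which does not affect the calculation). As was done in the proof of Theorem~\ref{thrmUnfair}, the partition function sums the four $(\lambda,y)$ configurations, and the two agreeing configurations each contribute $\exp(\theta_0 \phi(x,x^{\text{center}_0}))$ while the two disagreeing ones each contribute $\exp(-\theta_0 \phi(x,x^{\text{center}_0}))$. After cancelling the factors of $2$, this gives the sigmoid form
\begin{equation*}
P(\lambda(x)=y) \;=\; \frac{\exp(\theta_0 \phi(x,x^{\text{center}_0}))}{\exp(\theta_0 \phi(x,x^{\text{center}_0})) + \exp(-\theta_0 \phi(x,x^{\text{center}_0}))} \;=\; \sigma\!\bigl(2\theta_0 \phi(x,x^{\text{center}_0})\bigr),
\end{equation*}
where $\sigma$ is the logistic sigmoid.

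Next, I would prove two simple Lipschitz facts. (i) The map $x \mapsto \phi(x,x^{\text{center}_0}) = (1+\|x-x^{\text{center}_0}\|)^{-1}$ is $1$-Lipschitz: writing the difference over a common denominator, the numerator is bounded by $|\,\|x_1-x^{\text{center}_0}\| - \|x_2-x^{\text{center}_0}\|\,| \leq \|x_1-x_2\|$ by the reverse triangle inequality, while the denominator $(1+\|x_1-x^{\text{center}_0}\|)(1+\|x_2-x^{\text{center}_0}\|) \geq 1$. (ii) The sigmoid $\sigma$ is $(1/4)$-Lipschitz since $\sigma'(t) = \sigma(t)(1-\sigma(t)) \leq 1/4$.

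Finally, I would chain these bounds: composing a $(1/4)$-Lipschitz function with an affine rescaling by $2\theta_0$ and a $1$-Lipschitz function yields
\begin{equation*}
|P(\lambda(x_1)=y) - P(\lambda(x_2)=y)| \;\leq\; \tfrac{1}{4}\cdot 2\theta_0 \cdot \|x_1 - x_2\| \;=\; \tfrac{\theta_0}{2}\|x_1-x_2\| \;\leq\; 4\theta_0 \|x_1-x_2\|,
\end{equation*}
which is actually a tighter bound than claimed, and in particular gives the stated inequality. There is no real obstacle here: the only subtlety is rewriting the joint model in its sigmoid form before differentiating, which avoids any direct calculation of gradients of the exponential parameterization.
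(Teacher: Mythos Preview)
Your proof is correct and shares the key first step with the paper: rewriting $P(\lambda(x)=y)$ as $\sigma(2\theta_0\phi(x,x^{\text{center}_0}))$. After that, the two diverge slightly. The paper computes the gradient of the full composition explicitly via the chain rule, then bounds its norm by $4\theta_0$ using rather loose estimates (in particular bounding $\sigma(u)(1-\sigma(u))$ by $2$ and $\phi^2$ by $1$), and finally converts the gradient bound into a Lipschitz bound via a line-integral argument. You instead bound the Lipschitz constants of the pieces directly---$\phi$ is $1$-Lipschitz by the reverse triangle inequality, $\sigma$ is $\tfrac14$-Lipschitz---and compose, avoiding any explicit gradient calculation. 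Your route is more elementary, sidesteps the non-differentiability of the norm at $x=x^{\text{center}_0}$, and yields the sharper constant $\theta_0/2$; the paper's route is essentially the differentiated version of the same composition but with slacker bounds at each step.
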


\begin{proof}
    We will demonstrate that, because $\norm{\nabla_{x} P(\lambda(x)=y)}$ is bounded above by $4\theta_0$, $P(\lambda(x)=y)$ must be $4\theta_0$-Lipschitsz with respect to $x$. First,
    \begin{align*}
        P(\lambda(x)=y) &= \frac{\exp(\theta_0 \phi(x, x^{\text{center}_{0}}))}{\exp(\theta_0 \phi(x, x^{\text{center}_{0}})) + \exp(-\theta_0 \phi(x, x^{\text{center}_{0}}))}\\
        &= \frac{1}{1+\exp(-2\theta_0 \phi(x, x^{\text{center}_{0}}))}\\
        &= \sigma(2\theta_0 \phi(x, x^{\text{center}_{0}})),
    \end{align*}
    where $\sigma$ denotes the sigmoid function. Note that $\frac{d}{d u} \sigma (u)=\frac{\exp(-u)}{(1+\exp(-u))^2}$ for $u\in \mathbb R$ and that $\nabla_{x} [2\theta_0 \phi(x, x^{\text{center}_{0}})] = 2 \theta_0 \nabla_{x} \left[ \frac{1}{1+\norm{x-x^{\text{center}_{0}}}} \right] = 2 \theta_0 \frac{\nabla_{x} [\norm{x-x^{\text{center}_{0}}}]}{(1+\norm{x-x^{\text{center}_{0}}})^2} $. Further, note that $\nabla_{x} [\norm{x-x^{\text{center}_{0}}}]=\frac{x-x^{\text{center}_{0}}}{\norm{x-x^{\text{center}_{0}}}}$ because we use Euclidean norm. Thus,
    
    \begin{align*}
        \norm{\nabla_{x} P(\lambda(x)=y)} &= \norm{\nabla_{x} \sigma (2\theta_0 \phi(x, x^{\text{center}_{0}}))}\\
        &= \left|\left| 2\theta_0  \frac{exp(-2\theta_0 \phi(x, x^{\text{center}_{0}}))}{(1+exp(-2\theta_0 \phi(x, x^{\text{center}_{0}})))^2(1+\norm{x-x^{\text{center}_{0}}})^2} \cdot \nabla_{x} [\norm{x-x^{\text{center}_{0}}}]\right|\right| \\
        &= \left|\left| 2\theta_0  \frac{exp(-2\theta_0 \phi(x, x^{\text{center}_{0}}))}{(1+exp(-2\theta_0 \phi(x, x^{\text{center}_{0}})))^2} \phi(x, x^{\text{center}_{0}})^2 \cdot \frac{x-x^{\text{center}_{0}}}{\norm{x-x^{\text{center}_{0}}}}\right|\right| \\
        &= \left|\left| 2\theta_0  \sigma(2\theta_0 \phi(x, x^{\text{center}_{0}}))(1-\sigma(2\theta_0 \phi(x, x^{\text{center}_{0}}))) \phi(x, x^{\text{center}_{0}})^2 \cdot \frac{x-x^{\text{center}_{0}}}{\norm{x-x^{\text{center}_{0}}}} \right|\right|\\
        &< 2\theta_0 \left|\left|  1\cdot (1-(-1)) \phi(x, x^{\text{center}_{0}})^2 \cdot \frac{x-x^{\text{center}_{0}}}{\norm{x-x^{\text{center}_{0}}}} \right|\right|\\
        &= 4\theta_0 \left|\left|  \phi(x, x^{\text{center}_{0}})^2 \cdot 1 \right|\right|\\
        &\leq 4\theta_0.
    \end{align*}

Thus, $\norm{\nabla_{x} P(\lambda(x)=y)}$ is bounded above by $4\theta_0$. We now use this fact to demonstrate that $P(\lambda(x)=y)$ is $4\theta_0$-Lipschitz  with respect to $x$. 

For $0\leq v\leq 1$, let $s(v)=x_1 + (x_2-x_1)v$. For $x$ between $x_1$ and $x_2$ inclusive and for $v$ such that $s(v)=x$, since $\norm{\nabla_{x} P(\lambda(x)=y)}=|\frac{d}{dv}P(\lambda(s(v))=y)|\leq4\theta_0$, we have 
\begin{align*}
    \left|P(\lambda(x_1)=y) - P(\lambda(x_2)=y)\right| &=  \left| \int_{v=0}^{1} \left[\frac{d}{dv} P(\lambda(s(v))=y)\right] \space \norm{s(v)}dv \right| \\
    &\leq  \int_{v=0}^{1} \left|\frac{d}{dv} P(\lambda(s(v))=y)\right| \space \norm{s(v)}dv  \\
    &\leq \int_{v=0}^{1} 4\theta_0 \norm{s(v)}dv  \\
    &= 4\theta_0 \norm{x_1-x_2}.
\end{align*}

\end{proof}

\begin{proof}[Proof of Theorem~\ref{thrmHhat}]
Now we are ready to complete our proof. We have,
\begin{align*}
|\mathbb E&_{x \in \mathcal Z}[P(\lambda(x)=y)]-\mathbb E_{x' \in \mathcal{X}} [P(\lambda(\hat h(x'))=y)]|\\
    &\leq \mathbb E_{z \in \mathcal{Z}} [|P(\lambda(I(z))=y) - P(\lambda(\hat h(g_1(z)))=y|]\\
    &=    \mathbb E_{z \in \mathcal{Z}} [|P(\lambda(h(g_1(z)))=y) - P(\lambda(\hat h(g_1(z)))=y|] \\
    &\leq \mathbb E_{z \in \mathcal{Z}} [4\theta_0|| h(g_1(z))-\hat h(g_1(z)) ||] \text{ (by Lemma~\ref{lemLipschitz})} \\
    &=\mathbb E_{x' \in \mathcal X} [4\theta_0|| h(x') - \hat h(x') ||]\\
    &\leq 4\theta_0 C \sqrt{ \tau \mathbf r(\Sigma_1)}, 
\end{align*}
where the last line holds with probability at least $1-e^{-t}-\frac{1}{n_1}$ by Lemma~\ref{lem1}.

\end{proof}

\section{Experiment details} \label{appendix:experiment-details}
\subsection{Dataset details}
\begin{table}[ht!]\small
\caption{Summary of real datasets}
   \label{tab:exp1_seen}
   \centering
   \begin{tabular}{lllllll}
     \toprule
    Domain & Datasets & Y & A & $P(Y=1|A=1)$ & $P(Y=1|A=0)$ & $\Delta_{DP}$ \\
     \midrule
     \multirow{2}{*}{Tabular} & Adult & Income ($\geq 50000$) & Sex   & 0.3038 & 0.1093 & 0.1945 \\ 
                              &  Bank & Subscription & Age ($\geq 25$) & 0.1093 & 0.2399 & 0.1306 \\ 
     \midrule
     \multirow{2}{*}{NLP} & CivilComments & Toxicity & Black    & 0.3154 & 0.1057 & 0.2097 \\ 
                          & HateXplain  & Toxicity & Race  & 0.8040 & 0.4672 & 0.3368 \\
    \midrule
     \multirow{2}{*}{Vision} & CelebA & Gender & Young   & 0.3244 & 0.6590 & 0.3346 \\ 
                              & UTKFace & Gender & Asian& 0.4856 & 0.4610 & 0.0246 \\ \bottomrule
   \end{tabular}\label{tab:dataset_summary}
\end{table}


\paragraph{Adult} The adult dataset \citep{kohavi1996scaling} has information about the annual income of people and their demographics.
The classification task is to predict whether the annual income of a person exceeds \$50,000 based on demographic features.
Demographic features include age, work class, education, marital status, occupation, relationship, race, capital gain and loss, work hours per week, and native country.
The group variable is whether age is greater than 25 or not. The training data has 32,561 examples, and the test data has 16,281 examples.

\paragraph{BankMarketing} The bank marketing dataset \citep{moro2014data} was collected during the direct marketing campaigns of a Portuguese banking institution from 2008 to 2013.
Features consist of demographic attributes, financial attributes, and the attributes related to the interaction with the campaign.
The task is to classify whether a client will make a deposit subscription or not.
The group variable is, whether age is greater than 25 or not.
The total number of instances is 41,188, where the training dataset has 28,831 rows and the test dataset has 12,357 rows.

\paragraph{CivilComments} The CivilComments dataset \citep{borkan2019nuanced} is constructed from data collected in Jigsaw online conversattion platform.
We used the dataset downloaded from \citep{koh2021wilds}.
Features are extracted using BERT \cite{devlin2018bert} embeddings from comments, and the task is to classify whether a given comment is toxic or not.
This dataset has annotations about the content type of comments, e.g. race, age, religion, etc.
We chose ``black'' as the group variable.
The total number of instances is 402,820, where the training dataset has 269,038 rows and the test dataset has 133,782 rows.

\paragraph{HateXplain} The HateXplain dataset \citep{mathew2021hatexplain} is comments collected from Twitter and Gab and labeled through Amazon Mechanical Turk.
Features are extracted using BERT embeddings from comments.
The task is to classify whether a given comment is toxic or not.
This dataset has three label types - hate, offensive, or normal.
We used hate and offensive label types as ``toxic'' label, that is, if one of two labels types is 1 for a row, the row has label 1.
The dataset has annotations about the target communities - race, religion, gender, LGBTQ.
We used the race group as the group variable.
The total number of instances is 17,281, where the training set has 15,360 rows and the test set has 1,921 rows.

\paragraph{CelebA} The CelebA dataset \citep{liu2015faceattributes} is the image datasets that has images of faces with various attributes.
Features are extracted using CLIP\citep{radford2021learning} embeddings from images.
We tried gender classification task from this dataset---various other tasks can be constructed.
We used ``Young'' as the group variable.
The total number of instances is 202,599, where the training set has 162,770 rows and the test set has 39,829 rows.

\paragraph{UTKFace} UTKFace dataset \citep{zhifei2017cvpr} consists of faces images in the wild with annotations of age, gender, and ethnicity.
Features are extracted using CLIP embeddings from images.
This dataset has attributes of age, gender, race and we chose gender as label (Male:0, Female: 1), solving gender classification.
We used the race as the group variable, encoding ``Asian'' group as 1, other race groups as 0.
The total number of instances is 23,705, where the training set has 18,964 rows and the test set has 4,741 rows.

\subsection{Application of LIFT} \label{appendix_subsec:lift} To use smooth embeddings in the transport step of tabular datasets (adult and bank marketing), we convert tabular data rows into text sentences and then get BERT embeddings.

\paragraph{Adult} First we decided r.pronoun for each row r from <r.sex> and then generated sentences with the following template. "<r.pronoun> is in her <r.age\_range>s. <r.pronoun> is from <r.nationality>. <r.pronoun> works in <r.workclass>. Specifically, <r.pronoun> has a <r.job> job. <r.pronoun> works <r.working\_hour> hours per week. <r.pronoun> total education year is <r.education\_years>. <r.pronoun> is <r.marital\_status>.  <r.pronoun> is <r.race>".

An example sentence is "She is in her 20s. She is from United-States. She works in private sector. Specifically, she has a sales job. She works 30 hours per week. Her total education year is 6 years. She is married. She is White."

\paragraph{Bank marketing} Similarly, text sentences for each row r are generated using the following template. "This person is <r.age> years old. This person works in a <r.job> job. This person is <r.marital\_status>. This person has a <r.education\_degree> degree. This person <if r.has\_housing\_loan "has" else "doesn't have"> a housing loan. This person <if r.has\_personal\_loan "has" else "doesn't have"> a personal loan. THis person's contact is <r.contact\_type>. The last contact was on <r.last\_contact\_weekday> in <r.last\_contact\_month>. The call lasted for <r.duration> seconds. <r.campaign> times of contacts performed during this campaign. Before this campaign, <r.previous> times of contacts have been performed for this person. The employment variation rate of this person is <r.emp\_var\_rate>\%. The consumer price index of this person is <r.cons\_price\_idx>\%. The consumer confidence index of this person is <r.cons\_conf\_idx>\%. The euribor 3 month rate of this person is <r.euribor3m>. The number of employees is <r.nr\_employed>".

An example sentence is "This person is 37 years old. This person works in a management job. This person is married. This person has a university degree. This person doesn't have a housing loan. This person has a personal loan. This person's contact is a cellular. The last contact was on a Thursday in May. The call lasted for 195 seconds. 1 times of contacts performed during this campaign. Before this campaign, 0 times of contacts have been performed for this person. The employment variation rate of this person is -1.8\%. The consumer price index of this person is 92.893\%. The consumer confidence index of this person is -46.2\%. The euribor 3 month rate of this person is 1.327\%. The number of employees is 5099."

\subsection{LF details} \label{appendix_subsec:lf_details}
In this subsection, we describe the labeling functions used in the experiments. The performances of labeling functions, including our SBM results, are reported in Table \ref{tab:tabular_LF} $\sim$ \ref{tab:vision_sinkhorn_LF}. In SBM result tables, performance, fairness improvement by SBM are green colored.

\paragraph{Adult} We generated heuristic labeling functions based on the features as follows.
\begin{itemize}
    \item LF 1 (age LF): True if the age is between 30 and 60. False otherwise.
    \item LF 2 (education LF): True if the person has a Bachelor, Master, PhD degree. False otherwise
    \item LF 3 (marital LF): True if marital status is married. False otherwise.
    \item LF 4 (relationship LF): True if relationship status is Wife, Own-child, or Husband, False otherwise.
    \item LF 5 (capital LF): True if the capital gain is >5000. False otherwise.
    \item LF 6 (race LF): True if Asian-Pac-Islander or race-Other. False otherwise.
    \item LF 7 (country LF): True if Germany, Japan, Greece, or China. False otherwise.
    \item LF 8 (workclass LF): True if the workclass is Self employed, federal government, local government, or state government. False otherwise.
    \item LF 9 (occupation LF): True if the occupation is sales, executive managerial, professional, or machine operator. False otherwise.
\end{itemize}

\paragraph{Bank marketing} Similar to Adult dataset, we generated heuristic labeling functions based on the features as follows.
\begin{itemize}
    \item LF 1 (loan LF): True if the person has loan previously.False otherwise.
    \item LF 2 (previous contact LF): True if the previous contact number is >1.1. False otherwise.
    \item LF 3 (duration LF): True if the duration of bank marketing phone call is > 6 min. False otherwise.
    \item LF 4 (marital LF): True if marital status is single. False otherwise.
    \item LF 5 (previous outcome LF): True if the previous campaign was successful. False otherwise.
    \item LF 6 (education LF): True if the education level is university degree or professional course taken. False otherwise.
\end{itemize}

\paragraph{CivilComments} We generated heuristic labeling functions based on the inclusion of specific word lists. If a comment has a word that is included in the word list of LF, it gets True for that LF.
\begin{itemize}
    \item LF 1 (sex LF): ["man", "men", "woman", "women", "male", "female",\
                 "guy", "boy", "girl", "daughter", "sex", "gender",\
                 "husband", "wife", "father", "mother", "rape", "mr.",\
                 "feminist", "feminism", "pregnant", "misogy", "pussy",\
                 "penis", "vagina", "butt", "dick"]
    \item LF 2 (LGBTQ LF): ["gay", "lesbian", "transgender", "homosexual",\
                 "homophobic", "heterosexual", "anti-gay", "same-sex",\
                 "bisexual", "biological"]
    \item LF 3 (religion LF): ["muslim", "catholic", "church", "christian", \
                 "god", "jesus", "christ", "jew", "islam", "israel",\
                 "bible", "bishop", "gospel", "clergy", "protestant", "islam"]
    \item LF 4 (race LF): ["white", "black", "racist", "trump",\
               "supremacist", "american", "canada", "kkk",\
               "nazi", "facist", "african", "non-white", "discrimination",\
               "hate", "neo-nazi", "asia", "china", "chinese"]
    \item LF 5 (toxic LF): ["crap", "damn", "bitch", "ass", "fuck", "bullshit", "hell", "jerk"]
    \item LF 6 (threat LF): ["shoot", "kill", "shot", "burn",\
                 "stab", "murder", "gun", "fire",\
                 "rape", "punch", "hurt", "hunt", "bullet", "hammer"] 
    \item LF 7 (insult LF): ["stupid", "idiot", "dumb", "liar",\
                 "poor", "disgusting", "moron", "nasty",\
                 "lack", "brain", "incompetent", "sociopath"]
\end{itemize}

\paragraph{HateXplain} We used heuristic the same labeling functions with CivilComments. However, their performance was close to random guess (accuracy close to 0.5), so we added 5 pretrained model LFs from detoxify \cite{Detoxify} repository. We used models listed as original, unbiased, multilingual, original-small, unbiased-small.

\paragraph{CelebA} We used models pretrained from other datasets as LFs.
\begin{itemize}
    \item LF 1: ResNet-18 fine-tuned on gender classification dataset \footnote{https://www.kaggle.com/datasets/cashutosh/gender-classification-dataset}
    \item LF 2: ResNet-34 fine-tuned on FairFace dataset
    \item LF 3: ResNet-34 fine-tuned on UTKFace dataset
    \item LF 4: ResNet-34 fine-tuned on UTKFace dataset (White only)
    \item LF 5: ResNet-34 fine-tuned on UTKFace dataset (non-White only)
    \item LF 6: ResNet-34 fine-tuned on UTKFace dataset (age $\geq 50$ only)
    \item LF 7: ResNet-34 fine-tuned on UTKFace dataset (age $< 50$ only)
\end{itemize}

\paragraph{UTKFace} We used models pretrained from other datasets as LFs.
\begin{itemize}
    \item LF 1: ResNet-18 fine-tuned on gender classification dataset 
    \item LF 2: ResNet-34 fine-tuned on gender classification dataset 
    \item LF 3: ResNet-34 fine-tuned on CelebA dataset
    \item LF 4: ResNet-34 fine-tuned on CelebA dataset (attractive only)
    \item LF 5: ResNet-34 fine-tuned on CelebA dataset (non-attractive only)
    \item LF 6: ResNet-34 fine-tuned on CelebA dataset (young only)
    \item LF 7: ResNet-34 fine-tuned on CelebA dataset (non-young only)
    \item LF 8: ResNet-34 fine-tuned on CelebA dataset (unfair sampling)
\end{itemize}

\begin{table}
\caption{Tabular dataset raw LF performance}
   \label{tab:tabular_LF}
   \centering
   \begin{tabular}{llllll}
     \toprule
    Dataset & LF & Acc & F1 & $\Delta_{DP}$ & $\Delta_{EO} $  \\
     \midrule
     \multirow{5}{*}{Adult} & LF 1 (age LF) & 0.549 & 0.476 & 0.100 & 0.023 \\
                            & LF 2 (education LF) & 0.743 & 0.455 & 0.033 & 0.044 \\
                            & LF 3 (marital LF) & 0.699 & 0.579 & 0.447 & 0.241 \\
                            & LF 4 (relationship LF) & 0.564 & 0.486 & 0.381 & 0.243 \\
                            & LF 5 (capital LF) & 0.800 & 0.315 & 0.035 & 0.019 \\
                            & LF 6 (race LF) & 0.737 & 0.066 & 0.003 & 0.004 \\
                            & LF 7 (country LF) & 0.756 & 0.024 & 0.001 & 0.004 \\
                            & LF 8 (workclass LF) & 0.678 & 0.399 & 0.066 & 0.003 \\
                            & LF 9 (occupation LF) & 0.644 & 0.466 & 0.013 & 0.012 \\
     \midrule
     \multirow{5}{*}{Bank Marketing} & LF 1 (loan LF) & 0.769 & 0.124 & 0.004 & 0.013 \\
                            & LF 2 (previous contact LF) & 0.888 & 0.187 & 0.055 & 0.068 \\
                            & LF 3 (duration LF) & 0.815 & 0.415 & 0.019 & 0.125 \\
                            & LF 4 (marital LF): & 0.682 & 0.197 & 0.602 & 0.643 \\
                            & LF 5 (previous outcome LF) & 0.898 & 0.301 & 0.052 & 0.062 \\
                            & LF 6 (education LF) & 0.575 & 0.206 & 0.183 & 0.219 \\
    \bottomrule
   \end{tabular}
 \end{table}

\begin{table} \label{tab:tabular_naive_LF}

\caption{Tabular dataset SBM (w/o OT) LF performance. $\Delta$s are obtained by comparison with raw LF performance.}

    \centering
    \begin{tabular}{llllllll}
    \toprule
        Dataset & LF & Acc ($\Delta$) & F1 ($\Delta$) & $\Delta_{DP}$ ($\Delta$) & $\Delta_{EO}$ ($\Delta$) \\
        \midrule 
        \multirow{5}{*}{Adult} &
        LF 1 & \cellcolor{green!30}0.597 (0.048) & \cellcolor{green!30}0.594 (0.118) & 0.119 (0.019) & 0.106 (0.083) \\ 
        & LF 2 & 0.601 (-0.142) & 0.346 (-0.109) & 0.035 (0.002) & \cellcolor{green!30}0.019 (-0.025) \\
        & LF 3 & \cellcolor{green!30}0.998 (0.299) & \cellcolor{green!30}0.998 (0.419) & \cellcolor{green!30}0.303 (-0.144) & \cellcolor{green!30}0.001 (-0.240) \\ 
        & LF 4 & \cellcolor{green!30}0.719 (0.155) & \cellcolor{green!30}0.723 (0.237) & \cellcolor{green!30}0.304 (-0.077) & \cellcolor{green!30}0.091 (-0.152) \\ 
        & LF 5 & 0.649 (-0.151) & 0.183 (-0.132) & 0.035 (0.000) & 0.023 (0.004) \\ 
        & LF 6 & 0.615 (-0.122) & \cellcolor{green!30}0.082 (0.016) & 0.003 (0.000) & 0.029 (0.025) \\ 
        & LF 7 & 0.621 (-0.135) & 0.023 (-0.001) & 0.001 (0.000) & 0.005 (0.001) \\ 
        & LF 8 & 0.612 (-0.066) & 0.318 (-0.021) & 0.019 \cellcolor{green!30}(-0.047) & 0.027 (0.024) \\
        & LF 9 & 0.565 (-0.079) & 0.460 (-0.006) & 0.013 (0.000) & 0.026 (0.014) \\ 
        \midrule
        \multirow{5}{*}{Adult (LIFT)} & LF 1 & 0.506 (-0.043) & 0.415 (-0.061) & 0.309 (0.209) & 0.198 (0.175) \\
        & LF 2 & \cellcolor{green!30}0.861 (0.118) & \cellcolor{green!30}0.691 (0.236) & 0.080 (0.047) & 0.126 (0.082) \\
        & LF 3 & 0.691 (-0.008) & 0.188 (-0.391) & \cellcolor{green!30}0.130 (-0.317) & \cellcolor{green!30}0.186 (-0.055) \\
        & LF 4 & 0.391 (-0.173) & 0.307 (-0.179) & 0.455 (0.074) & 0.314 (0.071) \\
        & LF 5 & 0.725 (-0.075) & 0.117 (-0.198) & \cellcolor{green!30}0.014 (-0.021) & 0.029 (0.010) \\
        & LF 6 & 0.703 (-0.034) & \cellcolor{green!30}0.109 (0.043) & 0.003 (0.000) & 0.004 (0.000) \\
        & LF 7 & 0.708 (-0.048) & \cellcolor{green!30}0.036 (0.012) & 0.001 (0.000) & \cellcolor{green!30}0.000 (-0.004) \\
        & LF 8 & \cellcolor{green!30}0.897 (0.219) & \cellcolor{green!30}0.800 (0.461) & \cellcolor{green!30}0.033 (-0.033) & 0.235 (0.232) \\
        & LF 9 & 0.601 (-0.043) & 0.446 (-0.020) & 0.013 (0.000) & 0.068 (0.056) \\ 
        \midrule
        \multirow{5}{*}{Bank} & LF 1 & 0.706 (-0.063) & \cellcolor{green!30}0.165 (0.041) & 0.004 (0.000) & \cellcolor{green!30}0.003 (-0.010) \\
        & LF 2 & 0.824 (-0.064) & \cellcolor{green!30}0.221 (0.034) & \cellcolor{green!30}0.037 (-0.018) & 0.113 (0.045) \\ 
        & LF 3 & \cellcolor{green!30}0.931 (0.116) & \cellcolor{green!30}0.829 (0.414) & 0.019 (0.000) & \cellcolor{green!30}0.110 (-0.015) \\ 
        & LF 4 & 0.427 (-0.255) & \cellcolor{green!30}0.411 (0.214) & \cellcolor{green!30}0.100 (-0.502) & \cellcolor{green!30}0.004 (-0.639) \\
        & LF 5 & 0.833 (-0.065) & 0.287 (-0.014)& 0.061 (0.009) & 0.195 (0.133) \\ 
        & LF 6 & \cellcolor{green!30}0.624 (0.049) & 0.172 (-0.034) & \cellcolor{green!30}0.009 (-0.174) & \cellcolor{green!30}0.046 (-0.173) \\
        \midrule
        \multirow{5}{*}{Bank (LIFT)} & LF 1 & 0.768 (-0.001) & 0.105 (-0.019) & 0.000 (-0.003) & 0.013 (0.000) \\ 
        & LF 2 & 0.888 (0.000) & \cellcolor{green!30}0.188 (0.001) & \cellcolor{green!30}0.037 (-0.017) & 0.068 (0.000)\\
        & LF 3 & 0.815 (0.000) & 0.415 (0.000) & 0.019 (0.000) & 0.125 (0.000) \\
        & LF 4 & 0.317 (-0.365) & \cellcolor{green!30}0.228 (0.031) & \cellcolor{green!30}0.100 (-0.502) &\cellcolor{green!30} 0.065 (-0.578) \\
        & LF 5 & 0.898 (0.000) & \cellcolor{green!30}0.303 (0.001) & 0.061 (0.009) & 0.086 (0.023) \\ 
        & LF 6 & \cellcolor{green!30}0.680 (0.105) & 0.126 (-0.081) & \cellcolor{green!30}0.009 (-0.175) & \cellcolor{green!30}0.084 (-0.134) \\ 
    \bottomrule
    \end{tabular}
\end{table}

\begin{table}

\label{tab:tabular_linear_LF}
\caption{Tabular dataset SBM (OT-L) LF performance. $\Delta$s are obtained by comparison with raw LF performance.}
    \centering
    \begin{tabular}{llllllll}
    \toprule
    Dataset & LF & Acc ($\Delta$) & F1 ($\Delta$) & $\Delta_{DP}$ ($\Delta$) & $\Delta_{EO}$ ($\Delta$) \\ 
    \midrule
    \multirow{5}{*}{Adult} & LF 1 & \cellcolor{green!30}0.841 (0.292) & \cellcolor{green!30}0.846 (0.370) & 0.654 (0.554) & 0.733 (0.710) \\
    & LF 2 & 0.334 (-0.409) & 0.000 (-0.455) & 0.208 (0.175) & \cellcolor{green!30}0.000 (-0.044) \\
    & LF 3 & 0.346 (-0.353) & 0.000 (-0.579) & \cellcolor{green!30}0.174 (-0.273) & \cellcolor{green!30}0.000 (-0.241) \\ 
    & LF 4 & \cellcolor{green!30}0.895 (0.331) & \cellcolor{green!30}0.904 (0.418) & 0.735 (0.354) & 0.824 (0.581) \\ 
    & LF 5 & 0.394 (-0.406) & 0.000 (-0.315) & \cellcolor{green!30}0.027 (-0.008) & \cellcolor{green!30}0.000 (-0.019) \\ 
    & LF 6 & 0.408 (-0.329) & \cellcolor{green!30}0.070 (0.004) & 0.003 (0.000) & 0.037 (0.033) \\
    & LF 7 & 0.405 (-0.351) & 0.019 (-0.005) & 0.001 (0.000) & 0.010 (0.006) \\
    & LF 8 & 0.336 (-0.342) & 0.000 (-0.339) & 0.202 (0.136) & \cellcolor{green!30}0.000 (-0.003) \\
    & LF 9 & 0.501 (-0.143) & 0.512 (0.046) & 0.013 (0.000) & 0.439 (0.427) \\
    \midrule
    \multirow{5}{*}{Adult (LIFT)} & LF 1 & \cellcolor{green!30}0.628 (0.079) & \cellcolor{green!30}0.655 (0.179) & 0.036 (-0.064) & 0.027 (0.004) \\ 
    & LF 2 & \cellcolor{green!30}0.780 (0.037) & \cellcolor{green!30}0.664 (0.209) & \cellcolor{green!30}0.014 (-0.019) & \cellcolor{green!30}0.032 (-0.012) \\ 
    & LF 3 & 0.525 (-0.174) & 0.296 (-0.283)  & \cellcolor{green!30}0.097 (-0.350) & \cellcolor{green!30}0.013 (-0.228) \\ 
    & LF 4 & 0.497 (-0.067) & \cellcolor{green!30}0.554 (0.068)  & \cellcolor{green!30}0.137 (-0.244) & \cellcolor{green!30}0.150 (-0.093) \\ 
    & LF 5 & 0.606 (-0.194) & 0.177 (-0.138) & \cellcolor{green!30}0.023 (-0.012) & 0.042 (0.023) \\ 
    & LF 6 & 0.565 (-0.172) & \cellcolor{green!30}0.089 (0.023) & 0.003 (0.000) & 0.008 (0.004) \\ 
    & LF 7 & 0.567 (-0.189) & \cellcolor{green!30}0.029 (0.005) & 0.001 (0.000) & \cellcolor{green!30}0.000 (-0.004) \\
    & LF 8 & 0.618 (-0.060) & \cellcolor{green!30}0.409 (0.070) & \cellcolor{green!30}0.011 (-0.055) & 0.017 (0.014) \\
    & LF 9 & \cellcolor{green!30}0.843 (0.199) & \cellcolor{green!30}0.817 (0.351) & 0.013 (0.000) & 0.046 (0.034) \\ 
    \midrule
    \multirow{5}{*}{Bank} & LF 1 & \cellcolor{green!30}0.818 (0.049) & 0.062 (-0.062) & 0.004 (0.000) & 0.140 (0.127) \\ 
    & LF 2 & \cellcolor{green!30}0.980 (0.092) & \cellcolor{green!30}0.703 (0.516)  & \cellcolor{green!30}0.024 (-0.031) & 0.542 (0.474) \\ 
    & LF 3 & 0.777 (-0.038) & 0.093 (-0.322)& 0.019 (0.000) & 0.265 (0.139) \\
    & LF 4 & 0.047 (-0.635) & \cellcolor{green!30}0.083 (-0.114) & \cellcolor{green!30}0.131 (-0.471) & 1.000 (0.357) \\
    & LF 5 & \cellcolor{green!30}0.988 (0.090) & \cellcolor{green!30}0.839 (0.538)  & \cellcolor{green!30}0.032 (-0.020) & 0.723 (0.660) \\
    & LF 6 & 0.950 (0.375) & 0.000 (-0.206) & 0.244 (0.061) & \cellcolor{green!30}0.000 (-0.219) \\
    \midrule
    \multirow{5}{*}{Bank (LIFT)} & LF 1 & 0.125 (-0.645) & \cellcolor{green!30}0.197 (0.073) & 0.853 (0.849) & 0.867 (0.854) \\
    & LF 2 & 0.888 (0.000) & 0.180 (-0.007) & \cellcolor{green!30}0.000 (-0.055) & \cellcolor{green!30}0.036 (-0.032) \\
    & LF 3 & 0.815 (0.000) & 0.415 (0.000) & \cellcolor{green!30}0.018 (-0.001) & 0.137 (0.012) \\
    & LF 4 & \cellcolor{green!30}0.876 (0.194) & 0.085 (-0.112) & 0.869 (0.268) & 0.954 (0.311) \\
    & LF 5 & 0.898 (0.000) & 0.300 (-0.002) & \cellcolor{green!30}0.047 (-0.005) & \cellcolor{green!30}0.039 (-0.023) \\
    \bottomrule
    \end{tabular}
\end{table}

\begin{table}

\label{tab:tabular_sinkhorn_LF}
\caption{Tabular dataset SBM (OT-S) LF performance. $\Delta$s are obtained by comparison with raw LF performance.}
    \centering
    \begin{tabular}{llllll}
    \toprule
        Dataset & LF & Acc ($\Delta$) & F1 ($\Delta$) & $\Delta_{DP}$ ($\Delta$) & $\Delta_{EO}$ ($\Delta$) \\ 
        \midrule
        \multirow{5}{*}{Adult} & LF 1 & \cellcolor{green!30}0.604 (0.055) & \cellcolor{green!30}0.595 (0.119) & 0.145 (0.045) & 0.125 (0.102) \\ 
        & LF 2 & 0.601 (-0.142) & 0.340 (-0.115) & 0.036 (0.003) & \cellcolor{green!30}0.015 (-0.029) \\
        &LF 3 & \cellcolor{green!30}0.998 (0.299) & \cellcolor{green!30}0.998 (0.419) &\cellcolor{green!30} 0.293 (-0.154) & \cellcolor{green!30}0.002 (-0.239) \\ 
        &LF 4 & \cellcolor{green!30}0.690 (0.126) & \cellcolor{green!30}0.700 (0.214) & \cellcolor{green!30}0.221 (-0.160) & \cellcolor{green!30}0.050 (-0.193) \\ 
        &LF 5 & 0.656 (-0.144) & 0.181 (-0.134) & \cellcolor{green!30}0.032 (-0.003) & 0.021 (0.002) \\ 
        &LF 6 & 0.621 (-0.116) & 0.081 (0.015) & 0.003 (0.000) & 0.030 (0.026) \\ 
        &LF 7 & 0.627 (-0.129) & \cellcolor{green!30} 0.023 (-0.001) & 0.001 (0.000) & 0.005 (0.001) \\ 
        &LF 8 & 0.607 (-0.071) & 0.256 (-0.083) & 0.069 (0.003) & 0.095 (0.092) \\ 
        &LF 9 & 0.558 (-0.086) & \cellcolor{green!30}0.446 (-0.020) & 0.013 (0.000) & 0.014 (0.002) \\
        \midrule
        \multirow{5}{*}{Adult (LIFT)} &LF 1 & 0.475 (-0.074) & 0.436 (-0.040) & \cellcolor{green!30}0.031 (-0.069) & 0.032 (0.009) \\ 
        &LF 2 & \cellcolor{green!30}0.953 (0.210) & \cellcolor{green!30}0.910 (0.455) & 0.047 (0.014) & 0.079 (0.035) \\ 
        &LF 3 & \cellcolor{green!30}0.712 (0.013) & 0.191 (-0.388) & \cellcolor{green!30}0.157 (-0.290) & 0.250 (0.009) \\ 
        &LF 4 & 0.373 (-0.191) & 0.425 (-0.061) & \cellcolor{green!30}0.204 (-0.177) & \cellcolor{green!30}0.120 (-0.123) \\
        &LF 5 & 0.764 (-0.036) & 0.301 (-0.014) & 0.035 (0.000) & 0.107 (0.088) \\ 
        &LF 6 & 0.711 (-0.026) & \cellcolor{green!30}0.117 (0.051) & 0.003 (0.000) & 0.018 (0.014) \\ 
        &LF 7 & 0.715 (-0.041) & \cellcolor{green!30}0.037 (0.013) & 0.001 (0.000) & \cellcolor{green!30}0.002 (-0.002) \\ 
        &LF 8 & \cellcolor{green!30}0.716 (0.038) & 0.272 (-0.067) & 0.147 (0.081) & 0.284 (0.281) \\ 
        &LF 9 & 0.640 (-0.004) & 0.495 (0.029) & 0.013 (0.000) & 0.213 (0.201) \\ 
        \midrule
        \multirow{5}{*}{Bank} &LF 1 & 0.694 (-0.075) & \cellcolor{green!30}0.173 (0.049) & 0.004 (0.000) & 0.017 (0.004) \\ 
        &LF 2 & 0.806 (-0.082) & \cellcolor{green!30}0.209 (0.022) & 0.059 (0.004) & 0.211 (0.143) \\ 
        &LF 3 & \cellcolor{green!30}0.949 (0.134) & \cellcolor{green!30}0.881 (0.466) & 0.019 (0.000) & 0.136 (0.011) \\ 
        &LF 4 & 0.351 (-0.331) & \cellcolor{green!30}0.382 (0.185) & \cellcolor{green!30}0.040 (-0.562) & \cellcolor{green!30}0.016 (-0.627) \\ 
        &LF 5 & 0.814 (-0.084) & 0.267 (-0.034) & 0.069 (0.017) & 0.247 (0.185) \\ 
        &LF 6 & \cellcolor{green!30}0.583 (0.008) & 0.020 (-0.186) & \cellcolor{green!30}0.040 (-0.143) & \cellcolor{green!30}0.103 (-0.116) \\
        \midrule
        \multirow{5}{*}{Bank (LIFT)} & LF 1 & 0.769(-0.001) & 0.087 (-0.038) & 0.006 (0.003) & 0.038 (0.025)\\ 
        &LF 2 & \cellcolor{green!30}0.888 (0.001) & \cellcolor{green!30}0.204 (0.017) & 0.141 (0.087) & 0.305 (0.237) \\
        &LF 3 & 0.814 (-0.001) & 0.409 (-0.006) & 0.054 (0.035) & 0.357 (0.231) \\ 
        &LF 4 & 0.307 (-0.375) & \cellcolor{green!30}0.229 (0.032) & \cellcolor{green!30}0.085 (-0.517) & \cellcolor{green!30}0.042 (-0.600) \\ 
        &LF 5 & 0.898 (0.000) & \cellcolor{green!30}0.311 (0.010) & 0.133 (0.081) & 0.236 (0.173) \\
        &LF 6 & \cellcolor{green!30}0.677 (0.102) & 0.105 (-0.102) & \cellcolor{green!30}0.003 (-0.180) & \cellcolor{green!30}0.122 (-0.096) \\ 
    \bottomrule
    \end{tabular}
\end{table}

\begin{table}

\caption{NLP dataset raw LF performance}
   \label{tab:NLP_LF}
   \centering
   \begin{tabular}{llllll}
     \toprule
    Dataset & LF & Acc & F1 & $\Delta_{DP}$ & $\Delta_{EO} $  \\
     \midrule
     \multirow{5}{*}{CivilComments} & LF 1 (sex LF) & 0.755 & 0.187 & 0.046 & 0.019 \\
                            & LF 2 (LGBTQ LF) & 0.877 & 0.073 & 0.001 & 0.017 \\
                            & LF 3 (religion LF) & 0.861 & 0.049 & 0.012 & 0.013 \\
                            & LF 4 (race LF) & 0.847 & 0.234 & 0.634 & 0.574 \\
                            & LF 5 (toxic LF) & 0.886 & 0.068 & 0.006 & 0.012 \\
                            & LF 6 (threat LF) & 0.862 & 0.102 & 0.055 & 0.054 \\
                            & LF 7 (insult LF) & 0.872 & 0.176 & 0.028 & 0.035 \\
     \midrule
     \multirow{5}{*}{HateXplain} & LF 1 (sex LF) & 0.427 & 0.253 & 0.015 & 0.002 \\
                            & LF 2 (LGBTQ LF) & 0.405 & 0.077 & 0.047 & 0.041 \\
                            & LF 3 (religion LF) & 0.437 & 0.197 & 0.001 & 0.003 \\
                            & LF 4 (race LF) & 0.419 & 0.327 & 0.139 & 0.168 \\
                            & LF 5 (toxic LF) & 0.451 & 0.233 & 0.007 & 0.016 \\
                            & LF 6 (threat LF) & 0.415 & 0.097 & 0.000 & 0.004 \\
                            & LF 7 (insult LF) & 0.427 & 0.100 & 0.015 & 0.005 \\
                            & LF 8 (Detoxify - original) & 0.645 & 0.704 & 0.165 & 0.086 \\
                            & LF 9 (Detoxify - unbiased) & 0.625 & 0.668 & 0.150 & 0.078 \\
                            & LF 10 (Detoxify - multilingual) & 0.649 & 0.700 & 0.168 & 0.077 \\
                            & LF 11 (Detoxify - original-small) & 0.644 & 0.705 & 0.152 & 0.076 \\
                            & LF 12 (Detoxify - unbiased-small) & 0.643 & 0.699 & 0.186 & 0.113 \\
    \bottomrule
   \end{tabular}
 \end{table}

\begin{table}

\label{tab:nlp_naive_LF}
\caption{NLP dataset SBM (w/o OT) LF performance. $\Delta$s are obtained by comparison with raw LF performance.}
    \centering
    \begin{tabular}{llllll}
    \toprule
        Dataset & LF & Acc ($\Delta$) & F1 ($\Delta$) & $\Delta_{DP}$ ($\Delta$) & $\Delta_{EO}$ ($\Delta$) \\ 
        \midrule
        \multirow{5}{*}{Civil Comments} & LF 1 & \cellcolor{green!30}0.790 (0.035) & \cellcolor{green!30}0.325 (0.138) & 0.048 (0.002) & 0.041 (0.022) \\
        &LF 2 & \cellcolor{green!30}0.896 (0.019) & \cellcolor{green!30}0.268 (0.195) & 0.001 (0.000) & 0.053 (0.036) \\ 
        &LF 3 & \cellcolor{green!30}0.868 (0.007) & \cellcolor{green!30}0.155 (0.106) & 0.012 (0.000) & 0.043 (0.030) \\ 
        &LF 4 &\cellcolor{green!30} 0.858 (0.011) & \cellcolor{green!30}0.252 (0.018) & \cellcolor{green!30}0.114 (-0.520) & \cellcolor{green!30}0.160 (-0.414) \\ 
        &LF 5 & 0.886 (0.000) & \cellcolor{green!30} 0.137 (0.069) & 0.006 (0.000) & \cellcolor{green!30}0.003 (-0.009) \\ 
        &LF 6 & \cellcolor{green!30}0.916 (0.054) & \cellcolor{green!30}0.482 (0.380) & \cellcolor{green!30}0.023 (-0.032) & \cellcolor{green!30}0.002 (-0.052) \\ 
        &LF 7 &\cellcolor{green!30} 0.918 (0.046) &\cellcolor{green!30} 0.501 (0.325) & \cellcolor{green!30}0.022 (-0.006) & \cellcolor{green!30}0.012 (-0.023) \\ 
        \midrule
        \multirow{5}{*}{HateXplain} & LF 1 & \cellcolor{green!30}0.483 (0.056) & \cellcolor{green!30}0.273 (0.020) & 0.015 (0.000) & \cellcolor{green!30}0.001 (-0.001) \\
        &LF 2 & \cellcolor{green!30} 0.473 (0.068) & 0.058 (-0.019) & \cellcolor{green!30}0.000 (-0.047) & \cellcolor{green!30}0.004 (-0.037) \\ 
        &LF 3 & \cellcolor{green!30} 0.460 (0.023) & 0.163 (-0.034) & 0.001 (0.000) & 0.004 (0.001) \\ 
        &LF 4 & \cellcolor{green!30} 0.481 (0.062) & \cellcolor{green!30} 0.328 (0.001) & \cellcolor{green!30}0.044 (-0.095) & \cellcolor{green!30}0.039 (-0.129) \\ 
        &LF 5 & \cellcolor{green!30} 0.515 (0.064) & \cellcolor{green!30} 0.267 (0.034) & 0.014 (0.007) & 0.021 (0.005) \\ 
        &LF 6 & \cellcolor{green!30} 0.471 (0.056) & \cellcolor{green!30} 0.106 (0.009) & 0.000 (0.000) & 0.015 (0.011) \\ 
        &LF 7 & \cellcolor{green!30} 0.472 (0.045) & 0.088 (-0.012) & \cellcolor{green!30}0.012 (-0.003) & 0.014 (0.009) \\ 
        &LF 8 & \cellcolor{green!30} 0.831 (0.186) & \cellcolor{green!30} 0.864 (0.160) & \cellcolor{green!30}0.006 (-0.159) & \cellcolor{green!30}0.000 (-0.086) \\ 
        &LF 9 & \cellcolor{green!30} 0.917 (0.292) & \cellcolor{green!30} 0.928 (0.260) & \cellcolor{green!30}0.015 (-0.135) & \cellcolor{green!30}0.000 (-0.078) \\ 
        &LF 10 & \cellcolor{green!30} 0.864 (0.215) & \cellcolor{green!30} 0.888 (0.188) & \cellcolor{green!30}0.009 (-0.159) & \cellcolor{green!30}0.000 (-0.077) \\ 
        &LF 11 & \cellcolor{green!30} 0.839 (0.195) & \cellcolor{green!30} 0.870 (0.165) & \cellcolor{green!30}0.015 (-0.137) & \cellcolor{green!30}0.000 (-0.076) \\ 
        &LF 12 & \cellcolor{green!30} 0.837 (0.194) & \cellcolor{green!30} 0.868 (0.169) & \cellcolor{green!30}0.014 (-0.172) & \cellcolor{green!30}0.000 (-0.113) \\ 
        \bottomrule
    \end{tabular}
\end{table}

\begin{table}

\label{tab:nlp_linear_LF}
\caption{NLP dataset SBM (OT-L) LF performance. $\Delta$s are obtained by comparison with raw LF performance.}
    \centering
    \begin{tabular}{llllll}
    \toprule
        Dataset & LF & Acc ($\Delta$) & F1 ($\Delta$) & $\Delta_{DP}$ ($\Delta$) & $\Delta_{EO}$ ($\Delta$) \\ 
        \midrule
        \multirow{5}{*}{Civil Comments} & LF 1 & \cellcolor{green!30} 0.791 (0.036) & \cellcolor{green!30} 0.321 (0.134) & \cellcolor{green!30}0.003 (-0.043) & 0.022 (0.003) \\
        &LF 2 & \cellcolor{green!30} 0.898 (0.021) & \cellcolor{green!30} 0.272 (0.199) & 0.001 (0.000) & 0.020 (0.003) \\ 
        &LF 3 & \cellcolor{green!30} 0.870 (0.009) & \cellcolor{green!30} 0.156 (0.107) & 0.012 (0.000) & 0.041 (0.028) \\ 
        &LF 4 & \cellcolor{green!30} 0.860 (0.013) & \cellcolor{green!30} 0.244 (0.010) & \cellcolor{green!30}0.017 (-0.617) & \cellcolor{green!30}0.017 (-0.557) \\ 
        &LF 5 & \cellcolor{green!30} 0.887 (0.001) & \cellcolor{green!30} 0.139 (0.071) & 0.006 (0.000) & 0.027 (0.015) \\ 
        &LF 6 & \cellcolor{green!30} 0.917 (0.055) & \cellcolor{green!30} 0.484 (0.382) & \cellcolor{green!30}0.012 (-0.043) & \cellcolor{green!30}0.026 (-0.028) \\ 
        &LF 7 & \cellcolor{green!30} 0.919 (0.047) & \cellcolor{green!30} 0.501 (0.325) & \cellcolor{green!30}0.007 (-0.021) & \cellcolor{green!30}0.013 (-0.022) \\ 
        \midrule
        \multirow{5}{*}{HateXplain} & LF 1 & \cellcolor{green!30} 0.457 (0.030) & \cellcolor{green!30} 0.279 (0.026) & 0.015 (0.000) & \cellcolor{green!30}0.001 (-0.001) \\
        &LF 2 & \cellcolor{green!30} 0.433 (0.028) & 0.062 (-0.015) & \cellcolor{green!30}0.002 (-0.045) & \cellcolor{green!30}0.006 (-0.035) \\ 
        &LF 3 & 0.429 (-0.008) & 0.171 (-0.026) & 0.001 (0.000) & \cellcolor{green!30}0.001 (-0.002) \\ 
        &LF 4 & \cellcolor{green!30} 0.457 (0.038) & 0.323 (-0.004) & \cellcolor{green!30}0.011 (-0.128) & \cellcolor{green!30}0.009 (-0.159) \\ 
        &LF 5 & \cellcolor{green!30} 0.479 (0.028) & \cellcolor{green!30} 0.263 (0.030) & 0.017 (0.010) & 0.029 (0.013) \\ 
        &LF 6 & \cellcolor{green!30} 0.436 (0.021) & \cellcolor{green!30} 0.111 (0.014) & 0.000 (0.000) & 0.011 (0.007) \\ 
        &LF 7 & \cellcolor{green!30} 0.434 (0.007) & 0.091 (-0.009) & \cellcolor{green!30} 0.012 (-0.003) & 0.017 (0.012) \\
        &LF 8 & \cellcolor{green!30} 0.845 (0.200) & \cellcolor{green!30} 0.882 (0.178) & 0.790 (0.093) & \cellcolor{green!30}0.000 (-0.086) \\ 
        &LF 9 & \cellcolor{green!30}0.923 (0.298) & \cellcolor{green!30}0.938 (0.270) & 0.883 (0.179) & \cellcolor{green!30}0.000 (-0.078) \\ 
        &LF 10 & \cellcolor{green!30} 0.875 (0.226) & \cellcolor{green!30} 0.903 (0.203) & 0.823 (0.111) & \cellcolor{green!30}0.000 (-0.077) \\
        &LF 11 & \cellcolor{green!30} 0.848 (0.204) & \cellcolor{green!30} 0.884 (0.179) & 0.792 (0.098) & \cellcolor{green!30}0.000 (-0.076) \\
        &LF 12 & \cellcolor{green!30} 0.845 (0.202) & \cellcolor{green!30} 0.882 (0.183) & 0.789 (0.090) & \cellcolor{green!30}0.000 (-0.113) \\
        \bottomrule
    \end{tabular}
\end{table}

\begin{table}

\label{tab:nlp_sinkhorn_LF}
\caption{NLP dataset SBM (OT-S) LF performance. $\Delta$s are obtained by comparison with raw LF performance.}
    \centering
    \begin{tabular}{llllll}
    \toprule
        Dataset & LF & Acc ($\Delta$) & F1 ($\Delta$) & $\Delta_{DP}$ ($\Delta$) & $\Delta_{EO}$ ($\Delta$) \\ 
        \midrule
        \multirow{5}{*}{Civil Comments} &LF 1 & \cellcolor{green!30} 0.791 (0.036) & \cellcolor{green!30} 0.320 (0.133) & \cellcolor{green!30}0.015 (-0.031) & 0.082 (0.063) \\
        &LF 2 & \cellcolor{green!30} 0.897 (0.020) & \cellcolor{green!30} 0.271 (0.198) & 0.001 (0.000) & 0.029 (0.012) \\ 
        &LF 3 & \cellcolor{green!30} 0.870 (0.009) & \cellcolor{green!30} 0.156 (0.107) & 0.012 (0.000) & 0.043 (0.030) \\ 
        &LF 4 & \cellcolor{green!30} 0.860 (0.013) & \cellcolor{green!30} 0.245 (0.011) & \cellcolor{green!30}0.017 (-0.617) & \cellcolor{green!30}0.016 (-0.558) \\
        &LF 5 & \cellcolor{green!30} 0.887 (0.001) & \cellcolor{green!30} 0.138 (0.070) & 0.006 (0.000) & 0.020 (0.008) \\ 
        &LF 6 & \cellcolor{green!30} 0.917 (0.055) & \cellcolor{green!30} 0.482 (0.380) & \cellcolor{green!30}0.011 (-0.044) & \cellcolor{green!30}0.006 (-0.048) \\
        &LF 7 & \cellcolor{green!30} 0.919 (0.047) & \cellcolor{green!30} 0.504 (0.328) & \cellcolor{green!30}0.019 (-0.009) & 0.039 (0.004) \\ 
        \midrule
        \multirow{5}{*}{HateXplain} & LF 1 & \cellcolor{green!30} 0.444 (0.017) & \cellcolor{green!30} 0.277 (0.024) & 0.015 (0.000) & 0.002 (0.000) \\
        &LF 2 & \cellcolor{green!30} 0.417 (0.012) & 0.056 (-0.021) & \cellcolor{green!30}0.001 (-0.046) & \cellcolor{green!30}0.001 (-0.040) \\
        &LF 3 & 0.418 (-0.019) & 0.172 (-0.025) & 0.001 (0.000) &\cellcolor{green!30} 0.002 (-0.001) \\
        &LF 4 & \cellcolor{green!30} 0.448 (0.029) & \cellcolor{green!30} 0.332 (0.005) & 0.032 \cellcolor{green!30}(-0.107) & \cellcolor{green!30}0.045 (-0.123) \\
        &LF 5 & \cellcolor{green!30} 0.459 (0.008) & \cellcolor{green!30} 0.256 (0.023) & 0.030 (0.023) & 0.041 (0.025) \\ 
        &LF 6 & \cellcolor{green!30} 0.422 (0.007) & \cellcolor{green!30} 0.110 (0.013) & 0.000 (0.000) & 0.011 (0.007) \\ 
        &LF 7 & 0.418 (-0.009) & 0.091 (-0.009) & 0.018 (0.003) & 0.025 (0.020) \\
        &LF 8 & \cellcolor{green!30} 0.851 (0.206) & \cellcolor{green!30} 0.889 (0.185) & \cellcolor{green!30}0.056 (-0.109) & \cellcolor{green!30}0.000 (-0.086) \\
        &LF 9 & \cellcolor{green!30} 0.927 (0.302) & \cellcolor{green!30} 0.942 (0.274) & \cellcolor{green!30}0.061 (-0.089) & \cellcolor{green!30}0.000 (-0.078) \\
        &LF 10 & \cellcolor{green!30} 0.884 (0.235) & \cellcolor{green!30} 0.911 (0.211) & \cellcolor{green!30}0.052 (-0.116) & \cellcolor{green!30}0.000 (-0.077) \\
        &LF 11 & \cellcolor{green!30} 0.853 (0.209) & \cellcolor{green!30}0.890 (0.185) & \cellcolor{green!30}0.056 (-0.096) & \cellcolor{green!30}0.000 (-0.076) \\
        &LF 12 & \cellcolor{green!30} 0.847 (0.204) & \cellcolor{green!30} 0.886 (0.187) & \cellcolor{green!30}0.063 (-0.123) & \cellcolor{green!30}0.000 (-0.113) \\
        \bottomrule
    \end{tabular}
\end{table}

\begin{table}

\caption{Vision dataset raw LF performance}
   \label{tab:vision_LF}
   \centering
   \begin{tabular}{llllll}
     \toprule
    Dataset & LF & Acc & F1 & $\Delta_{DP}$ & $\Delta_{EO} $  \\
     \midrule 
     \multirow{5}{*}{CelebA} & LF 1 (ResNet-18 fine-tuned on gender classification dataset) & 0.798 & 0.794 & 0.328 & 0.284 \\
                             & LF 2 (ResNet-34 fine-tuned on FairFace dataset)& 0.890 & 0.901 &  0.314 & 0.105 \\
                             & LF 3 (ResNet-34 fine-tuned on UTKFace dataset)& 0.826 & 0.831 & 0.309 & 0.195 \\
                             & LF 4 (ResNet-34 fine-tuned on UTKFace dataset (White only))& 0.825 & 0.832 & 0.277 & 0.131 \\
                             & LF 5 (ResNet-34 fine-tuned on UTKFace dataset (non-White only))& 0.818 & 0.832& 0.271 & 0.134 \\
                             & LF 6 (ResNet-34 fine-tuned on UTKFace dataset (age $\geq 50$ only)& 0.764 & 0.750 & 0.279 & 0.194 \\
                             & LF 7 (ResNet-34 fine-tuned on UTKFace dataset (age $< 50$ only))& 0.830 & 0.845 & 0.299 & 0.175 \\
     \midrule
     \multirow{5}{*}{UTKFace} & LF 1 (ResNet-18 fine-tuned on gender classification dataset) & 0.869 & 0.856 & 0.060 & 0.039 \\
                            & LF 2 (ResNet-34 fine-tuned on gender classification dataset) & 0.854 & 0.842 & 0.060 & 0.060 \\
                            & LF 3 (ResNet-34 fine-tuned on CelebA dataset) & 0.742 & 0.758 & 0.158 & 0.032 \\
                            & LF 4 (ResNet-34 fine-tuned on CelebA dataset (attractive only)) & 0.580 & 0.692 & 0.065 & 0.002 \\
                            & LF 5 (ResNet-34 fine-tuned on CelebA dataset (non-attractive only)) & 0.687 & 0.608 & 0.129 & 0.034 \\
                            & LF 6 (ResNet-34 fine-tuned on CelebA dataset (young only)) & 0.664 & 0.729 & 0.116 & 0.012 \\
                            & LF 7 (ResNet-34 fine-tuned on CelebA dataset (non-young only))& 0.619 & 0.429 & 0.136 & 0.081 \\
                            & LF 8 (ResNet-34 fine-tuned on CelebA dataset (unfair sampling))& 0.631 & 0.676 & 0.113 & 0.053 \\
    \bottomrule
   \end{tabular}
 \end{table}

\begin{table}

\label{tab:vision_naive_LF}
\caption{Vision dataset SBM LF (w/o OT) performance. $\Delta$s are obtained by comparison with raw LF performance.}
    \centering
    \begin{tabular}{llllll}
    \toprule
        Dataset & LF & Acc ($\Delta$) & F1 ($\Delta$) & $\Delta_{DP}$ ($\Delta$) & $\Delta_{EO}$ ($\Delta$) \\ 
        \midrule
        \multirow{5}{*}{CelebA} & LF 1 & \cellcolor{green!30}0.847 (0.049) & \cellcolor{green!30}0.832 (0.038) & 0.328 (0.000) & \cellcolor{green!30}0.267 (-0.017) \\ 
        &LF 2 & 0.890 (0.000) & 0.895 (-0.006) & 0.314 (0.000) & \cellcolor{green!30}0.101 (-0.004) \\ 
        &LF 3 & \cellcolor{green!30}0.926 (0.100) & \cellcolor{green!30}0.923 (0.092) & 0.309 (0.000) & \cellcolor{green!30}0.097 (-0.098) \\ 
        &LF 4 & \cellcolor{green!30}0.914 (0.089) & \cellcolor{green!30}0.912 (0.080) & 0.277 (0.000) & \cellcolor{green!30}0.027 (-0.104) \\ 
        &LF 5 & \cellcolor{green!30}0.899 (0.081) & \cellcolor{green!30}0.900 (0.068) & 0.271 (0.000) & \cellcolor{green!30}0.030 (-0.104) \\ 
        &LF 6 & 0.705 (-0.059) & 0.629 (-0.121) & \cellcolor{green!30}0.177 (-0.102) & \cellcolor{green!30}0.052 (-0.142) \\
        &LF 7 & \cellcolor{green!30}0.913 (0.083) & \cellcolor{green!30}0.915 (0.070) & 0.299 (0.000) & 0.056 (-0.119) \\ 
        \midrule
        \multirow{5}{*}{UTKFace} & LF 1 & \cellcolor{green!30}0.929 (0.060) & \cellcolor{green!30}0.924 (0.068) & 0.102 (0.042) & \cellcolor{green!30}0.011 (-0.028) \\
        &LF 2 & \cellcolor{green!30}0.939 (0.085) & \cellcolor{green!30}0.935 (0.093) & 0.102 (0.042) & \cellcolor{green!30}0.007 (-0.053) \\ 
        &LF 3 & 0.631 (-0.111) & 0.678 (-0.080) & \cellcolor{green!30}0.078 (-0.080) & 0.034 (0.002) \\ 
        &LF 4 & 0.549 (-0.031) & 0.681 (-0.011) & \cellcolor{green!30}0.017 (-0.048) & 0.002 (0.000) \\ 
        &LF 5 & \cellcolor{green!30}0.740 (0.053) & \cellcolor{green!30}0.679 (0.071) & 0.129 (0.000) & 0.040 (0.006) \\ 
        &LF 6 & \cellcolor{green!30}0.694 (0.030) & \cellcolor{green!30}0.755 (0.026) & 0.116 (0.000) & 0.037 (0.025) \\ 
        &LF 7 & \cellcolor{green!30}0.694 (0.075) & \cellcolor{green!30}0.541 (0.112) & \cellcolor{green!30}0.061 (-0.075) & \cellcolor{green!30}0.033 (-0.048) \\ 
        &LF 8 & 0.591 (-0.040) & 0.654 (-0.022) & \cellcolor{green!30}0.071 (-0.042) & \cellcolor{green!30}0.054 (0.001) \\ 
        \bottomrule
    \end{tabular}
\end{table}

\begin{table}

\label{tab:vision_linear_LF}
\caption{Vision dataset SBM (OT-L) LF performance. $\Delta$s are obtained by comparison with raw LF performance.}
    \centering
    \begin{tabular}{llllll}
    \toprule
        Dataset & LF & Acc ($\Delta$) & F1 ($\Delta$) & $\Delta_{DP}$ ($\Delta$) & $\Delta_{EO}$ ($\Delta$) \\ 
        \midrule
        \multirow{5}{*}{CelebA} & LF 1 & \cellcolor{green!30}0.847 (0.049) & \cellcolor{green!30}0.832 (0.038) & 0.328 (0.000) & \cellcolor{green!30}0.268 (-0.016) \\
        &LF 2 & 0.890 (0.000) & 0.894 (-0.007) & 0.314 (0.000) & \cellcolor{green!30}0.102 (-0.003) \\ 
        &LF 3 & \cellcolor{green!30}0.926 (0.100) & \cellcolor{green!30}0.922 (0.091) & 0.309 (0.000) & \cellcolor{green!30}0.098 (-0.097) \\ 
        &LF 4 & \cellcolor{green!30}0.915 (0.090) & \cellcolor{green!30}0.913 (0.081) & 0.277 (0.000) & \cellcolor{green!30}0.029 (-0.102) \\ 
        &LF 5 & \cellcolor{green!30}0.898 (0.080) & \cellcolor{green!30}0.900 (0.068) & 0.271 (0.000) & \cellcolor{green!30}0.031 (-0.103) \\ 
        &LF 6 & 0.648 (-0.116) & 0.498 (-0.252) & \cellcolor{green!30}0.059 (-0.220) & 0.217 (0.023) \\ 
        &LF 7 & \cellcolor{green!30}0.914 (0.084) & \cellcolor{green!30}0.916 (0.071) & 0.299 (0.000) & \cellcolor{green!30}0.058 (-0.117) \\ 
        \midrule
        \multirow{5}{*}{UTKFace} & LF 1 & \cellcolor{green!30}0.931 (0.062) & \cellcolor{green!30}0.925 (0.069) & \cellcolor{green!30}0.026 (-0.034) & \cellcolor{green!30}0.012 (-0.027) \\ 
        & LF 2 & \cellcolor{green!30}0.945 (0.091) & \cellcolor{green!30}0.940 (0.098) & \cellcolor{green!30}0.017 (-0.043) & \cellcolor{green!30}0.006 (-0.054) \\ 
        & LF 3 & 0.599 (-0.143) & 0.667 (-0.091) & \cellcolor{green!30}0.004 (-0.154) & \cellcolor{green!30}0.001 (-0.031) \\
        &LF 4 & 0.523 (-0.057) & 0.667 (-0.025) & \cellcolor{green!30}0.008 (-0.057) & 0.002 (0.000) \\ 
        &LF 5 & \cellcolor{green!30}0.738 (0.051) & \cellcolor{green!30}0.674 (0.066) & 0.129 (0.000) & \cellcolor{green!30}0.029 (-0.005) \\ 
        &LF 6 & \cellcolor{green!30}0.691 (0.027) & \cellcolor{green!30}0.752 (0.023) & 0.116 (0.000) & 0.037 (0.025) \\ 
        &LF 7 & \cellcolor{green!30}0.690 (0.071) & \cellcolor{green!30}0.525 (0.096) & \cellcolor{green!30}0.000 (-0.136) & \cellcolor{green!30}0.016 (-0.065) \\ 
        &LF 8 & 0.572 (-0.059) & 0.655 (-0.021) & \cellcolor{green!30}0.001 (-0.112) & \cellcolor{green!30}0.014 (-0.039) \\
        \bottomrule 
    \end{tabular}
\end{table}

\begin{table}
    \caption{Vision dataset SBM (OT-S) LF performance. $\Delta$s are obtained by comparison with raw LF performance.}
    \label{tab:vision_sinkhorn_LF}
    \centering
    \begin{tabular}{llllll}
    \toprule
        Dataset & LF & Acc ($\Delta$) & F1 ($\Delta$) & $\Delta_{DP}$ ($\Delta$) & $\Delta_{EO}$ ($\Delta$) \\ 
        \midrule
        \multirow{5}{*}{CelebA} & LF 1 & \cellcolor{green!30}0.850 (0.052) & \cellcolor{green!30}0.835 (0.041) & 0.328 (0.000) & \cellcolor{green!30}0.266 (-0.018) \\
        &LF 2 & \cellcolor{green!30}0.893 (0.003) & 0.897 (-0.004) & 0.314 (0.000) & \cellcolor{green!30}0.101 (-0.004) \\ 
        &LF 3 & \cellcolor{green!30}0.923 (0.097) & \cellcolor{green!30}0.920 (0.089) & 0.309 (0.000) & \cellcolor{green!30}0.098 (-0.097) \\ 
        &LF 4 & \cellcolor{green!30}0.913 (0.088) & \cellcolor{green!30}0.910 (0.078) & 0.277 (0.000) & \cellcolor{green!30}0.029 (-0.102) \\ 
        &LF 5 & \cellcolor{green!30}0.901 (0.083) & \cellcolor{green!30}0.903 (0.071) & 0.271 (0.000) & \cellcolor{green!30}0.031 (-0.103) \\ 
        &LF 6 & 0.618 (-0.146) & 0.430 (-0.320) & \cellcolor{green!30}0.016 (-0.263) & 0.282 (0.088) \\ 
        &LF 7 & \cellcolor{green!30}0.911 (0.081) & \cellcolor{green!30}0.913 (0.068) & 0.299 (0.000) & \cellcolor{green!30}0.059 (-0.116) \\ 
        \midrule
        \multirow{5}{*}{UTKFace} &LF 1 & \cellcolor{green!30}0.931 (0.062) & \cellcolor{green!30}0.926 (0.070) & \cellcolor{green!30}0.010 (-0.050) & \cellcolor{green!30}0.001 (-0.038) \\
        &LF 2 & \cellcolor{green!30}0.949 (0.095) & \cellcolor{green!30}0.945 (0.103) & \cellcolor{green!30}0.015 (-0.045) & \cellcolor{green!30}0.001 (-0.059) \\ 
        &LF 3 & 0.577 (-0.165) & 0.652 (-0.106) & \cellcolor{green!30}0.003 (-0.155) & 0\cellcolor{green!30}.013 (-0.019) \\
        &LF 4 & 0.517 (-0.063) & 0.664 (-0.028) & \cellcolor{green!30}0.006 (-0.059) & 0.007 (0.005) \\ 
        &LF 5 & \cellcolor{green!30}0.730 (0.043) & \cellcolor{green!30}0.669 (0.061) & 0.129 (0.000) & \cellcolor{green!30}0.018 (-0.016) \\ 
        &LF 6 & \cellcolor{green!30}0.694 (0.030) & \cellcolor{green!30}0.756 (0.027) & 0.116 (0.000) & 0.036 (0.024) \\ 
        &LF 7 & \cellcolor{green!30}0.683 (0.064) & \cellcolor{green!30}0.523 (0.094) & \cellcolor{green!30}0.010 (-0.126) & \cellcolor{green!30}0.007 (-0.074) \\ 
        &LF 8 & 0.562 (-0.069) & 0.652 (-0.024) & \cellcolor{green!30}0.009 (-0.104) & \cellcolor{green!30}0.012 (-0.041) \\
        \bottomrule
    \end{tabular}
\end{table}

\clearpage
\subsection{Identification of centers}
While our method improves performance by matching one group distribution and attempting to make these uniform, it does not imply accuracy improvement.
A presumption of our method is that the group with high (estimated) accuracy possesses the high accuracy regime and our method can transport data points to this high accuracy regime while keeping their structure, which results in accuracy improvements.
To empirically support this hypothesis, we used the following procedure and visualized the results in Figure \ref{fig:lf_degradation_tabular}, \ref{fig:lf_degradation_NLP datasets}, \ref{fig:lf_degradation_vision datasets}.

\begin{enumerate}
    
    \item Find the best accuracy center by evaluating the accuracy of the nearest 10\% of data points for each center candidate point.
    \item Expand 2\% percent of data points closest to the center each time, compute their accumulated average accuracy (y-axis) and the farthest distance (x-axis) from the neighborhood
    \item Find the group with better accuracy group and visualize their accumulated average accuracy (y-axis) and the farthest distance (x-axis) in each group.
\end{enumerate}

\begin{figure}

	\centering
	\subfigure [Adult (raw)]{
        \includegraphics[width=0.95\textwidth]{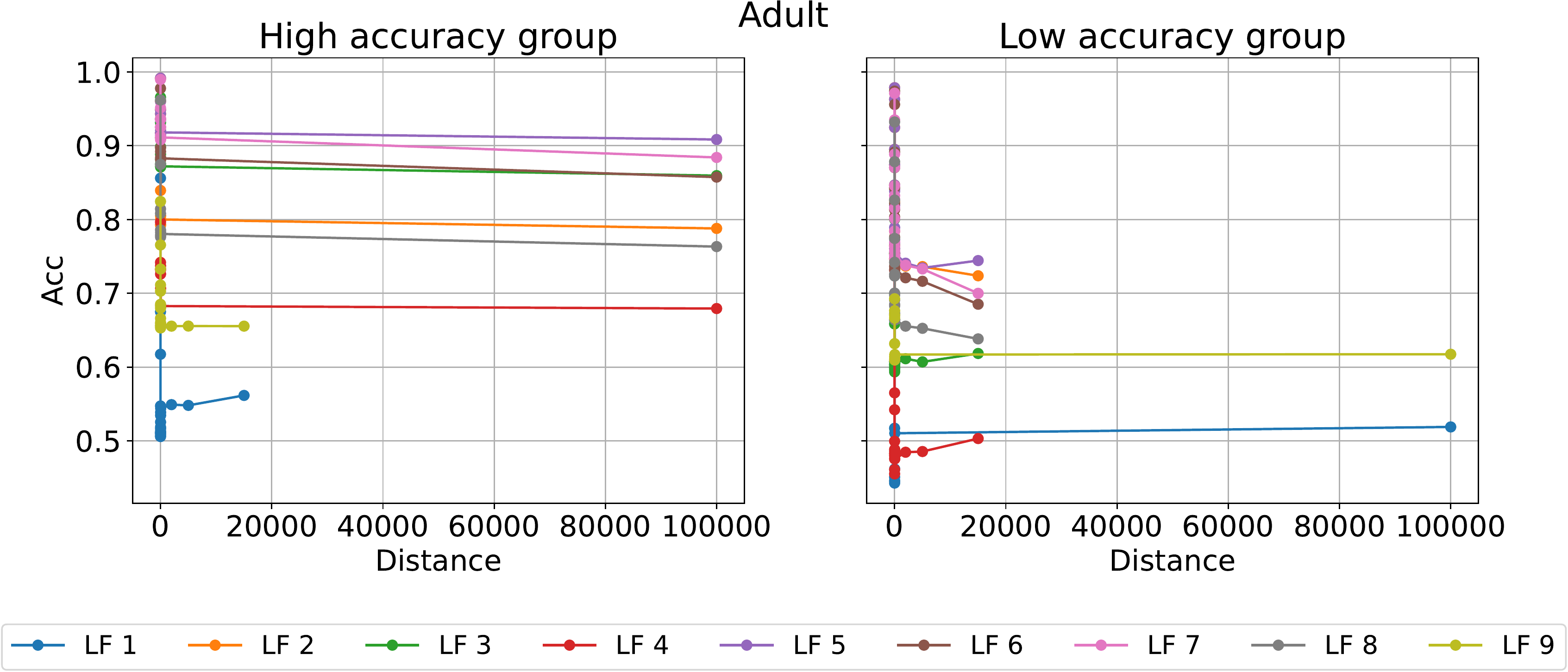}}
        
        \subfigure [Bank Marketing (raw)]{
        \includegraphics[width=0.95\textwidth]{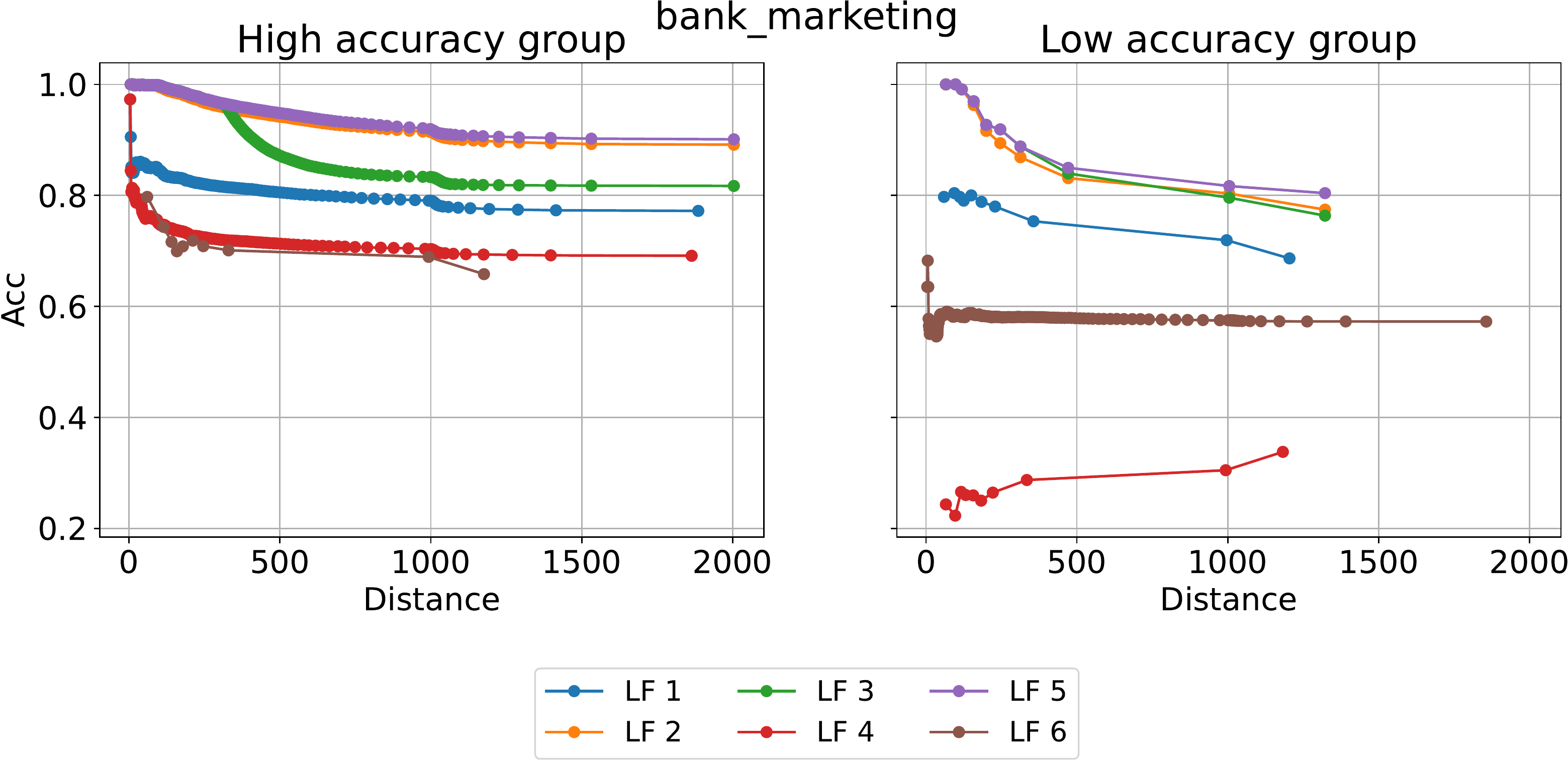}}
\caption{Identification of high accuracy regimes for tabular datasets.}
\label{fig:lf_degradation_tabular}
\end{figure}

\begin{figure}

	\centering
	\subfigure [Adult (LIFT)]{
        \includegraphics[width=0.95\textwidth]{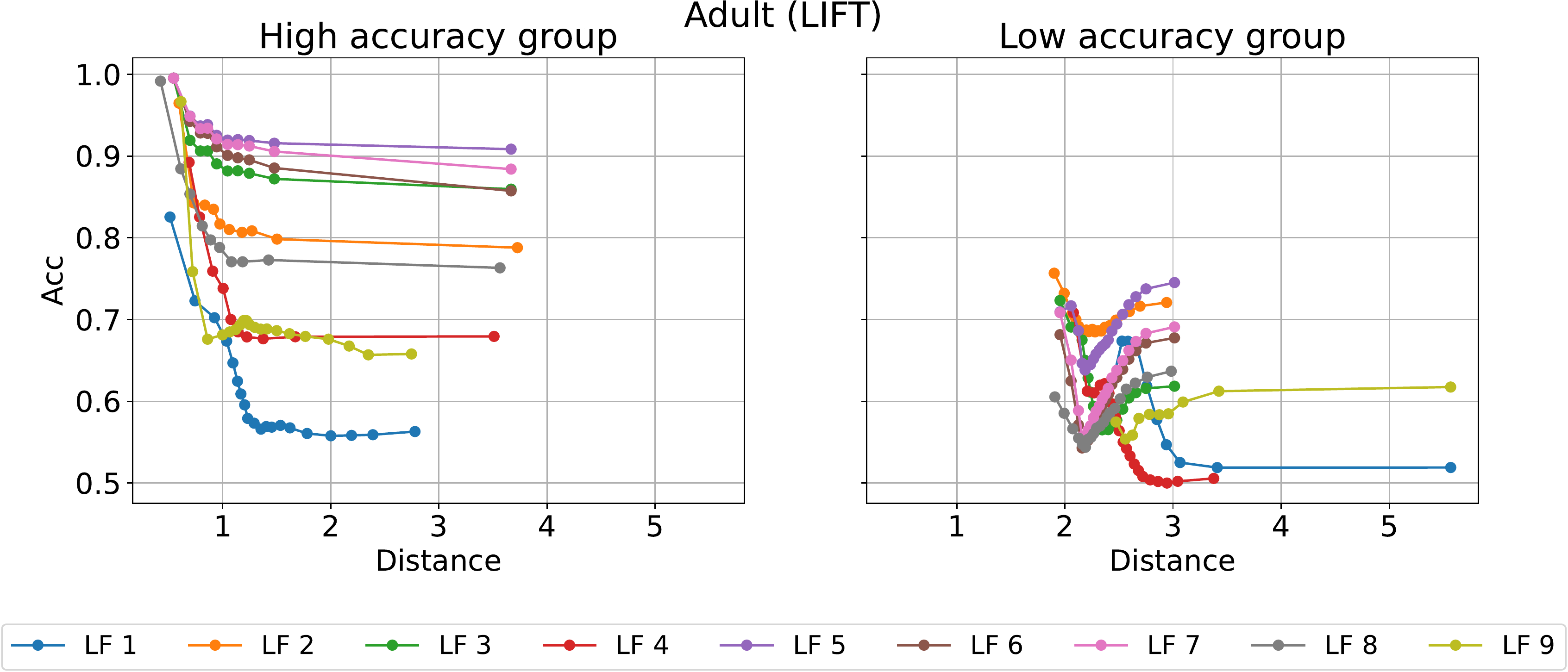}}
        \subfigure [Bank Marketing (LIFT)]{
        \includegraphics[width=0.95\textwidth]{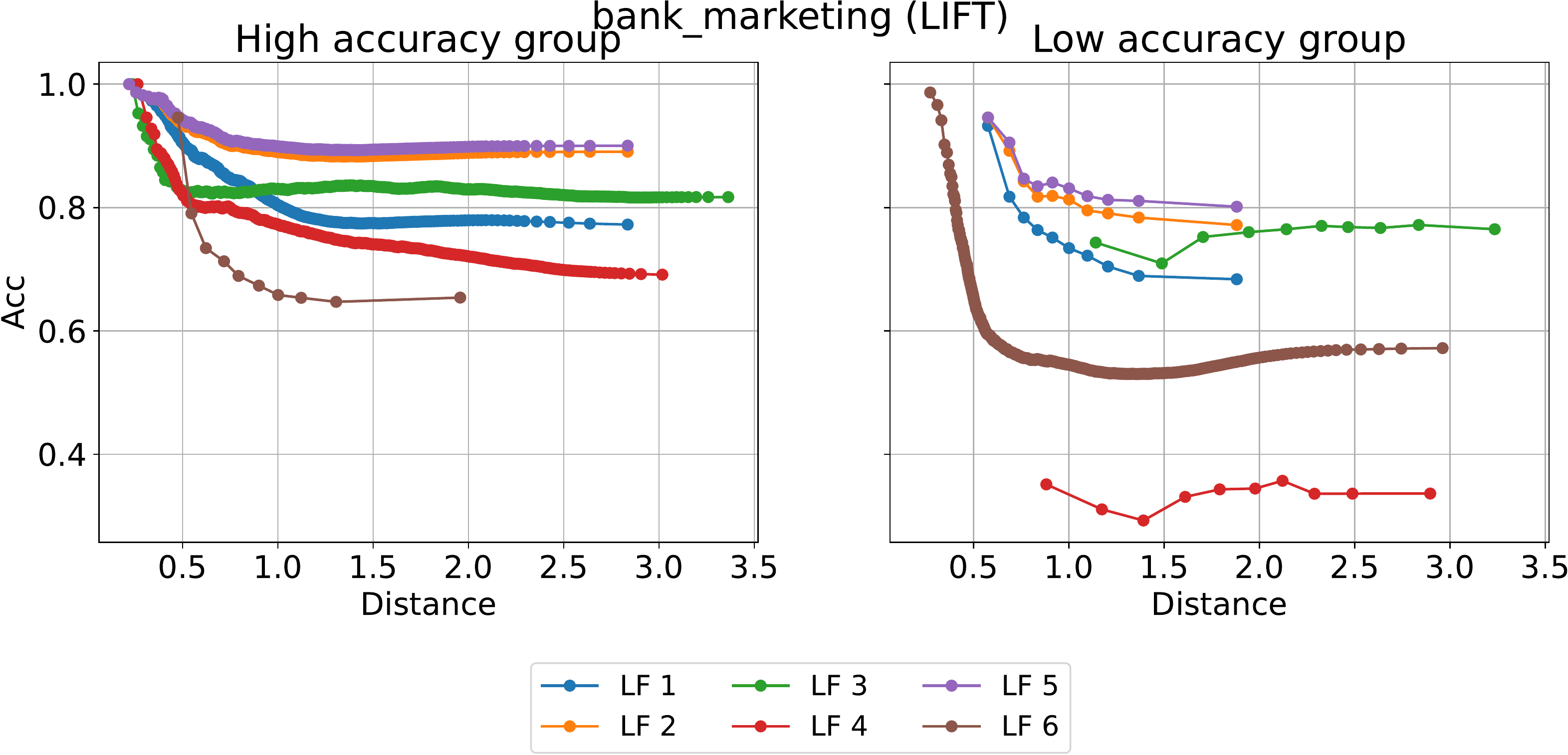}}
\caption{Identification of high accuracy regimes for tabular datasets (LIFT).}
\label{fig:lf_degradation_tabular_lift}
\end{figure}

\begin{figure}

	\centering
	\subfigure [CivilComments]{
        \includegraphics[width=0.95\textwidth]{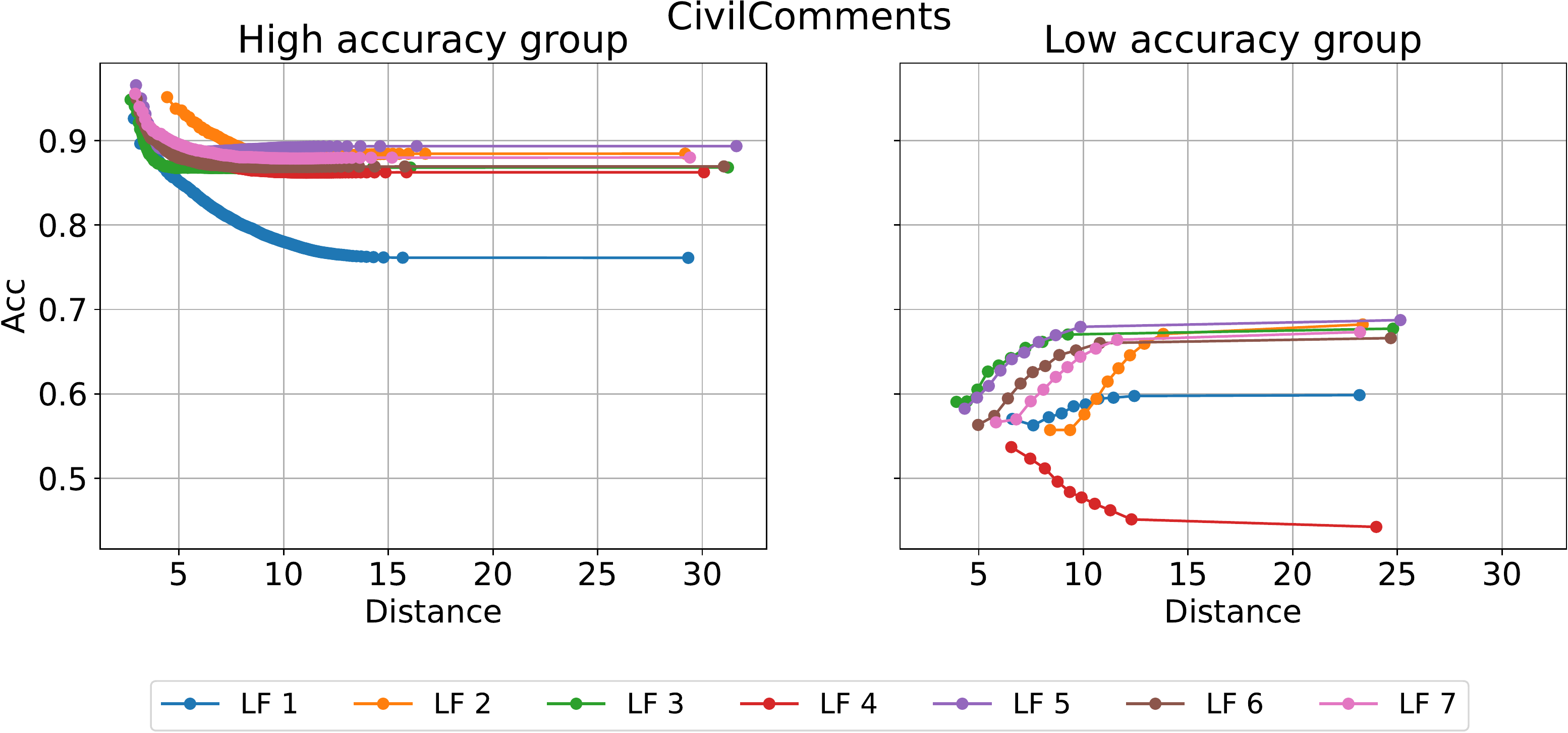}}
        \subfigure [HateXplain]{
        \includegraphics[width=0.95\textwidth]{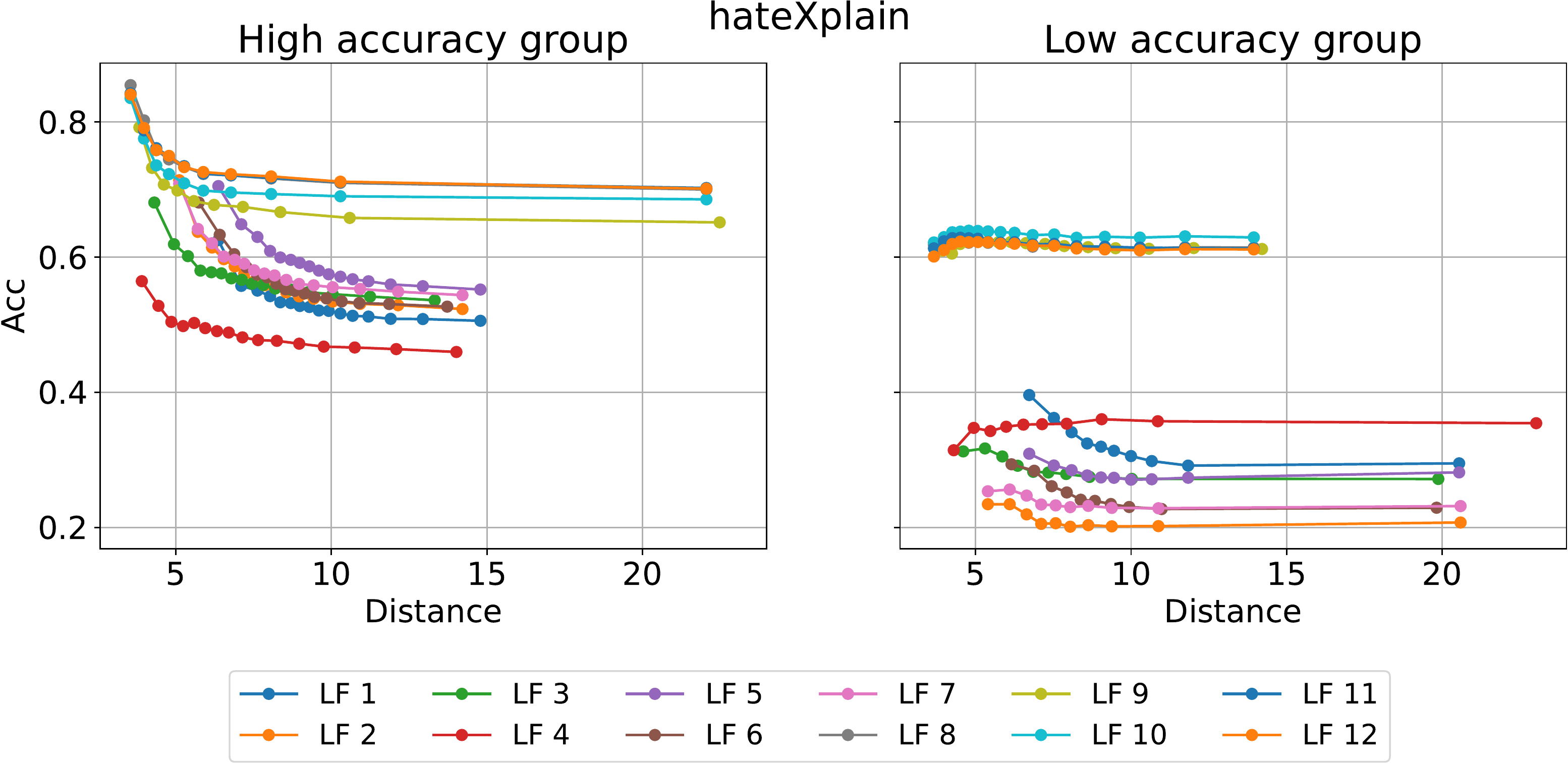}}
\caption{Identification of high accuracy regimes for NLP datasets.}
\label{fig:lf_degradation_NLP datasets}
\end{figure}

\begin{figure}

        \subfigure [CelebA]{
        \includegraphics[width=0.95\textwidth]{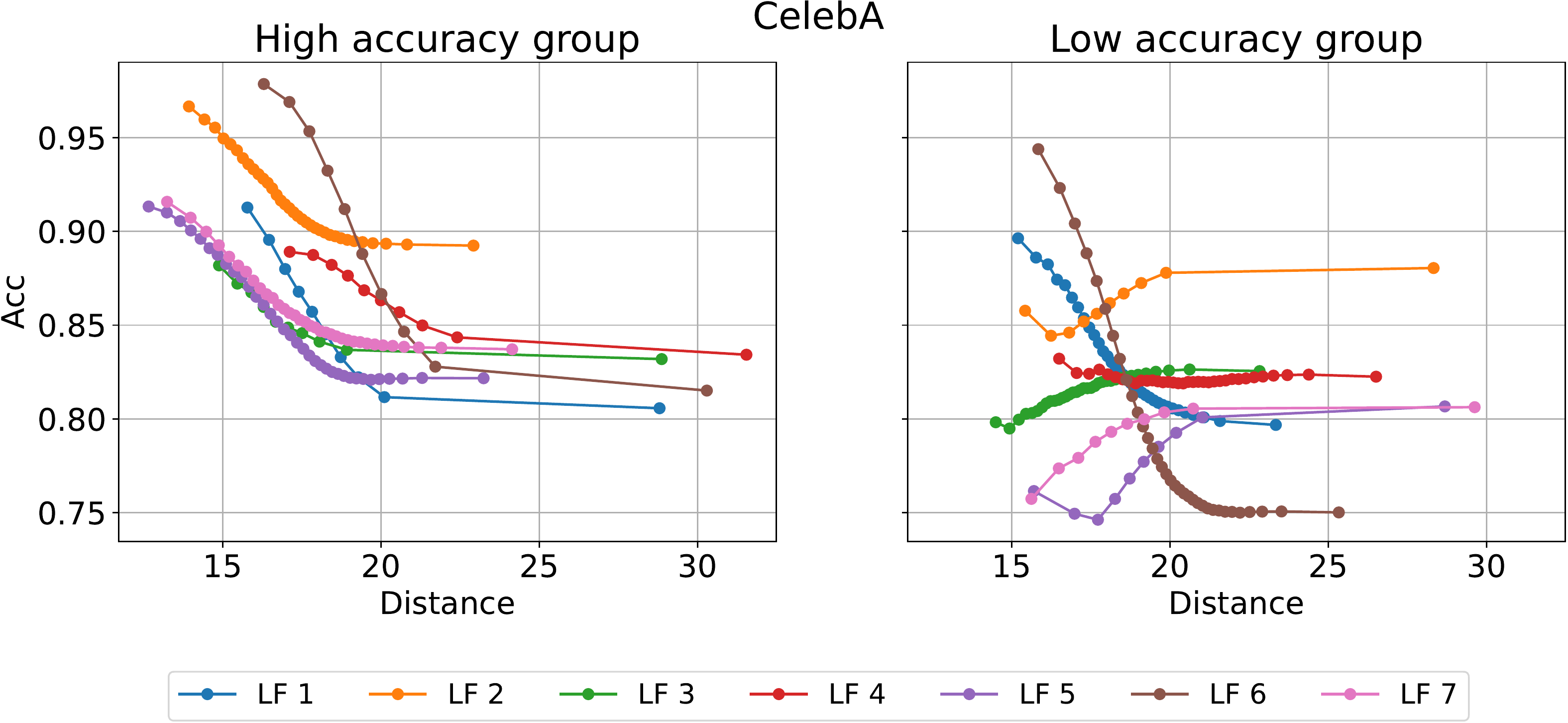}}
        \subfigure [UTKFace]{
        \includegraphics[width=0.95\textwidth]{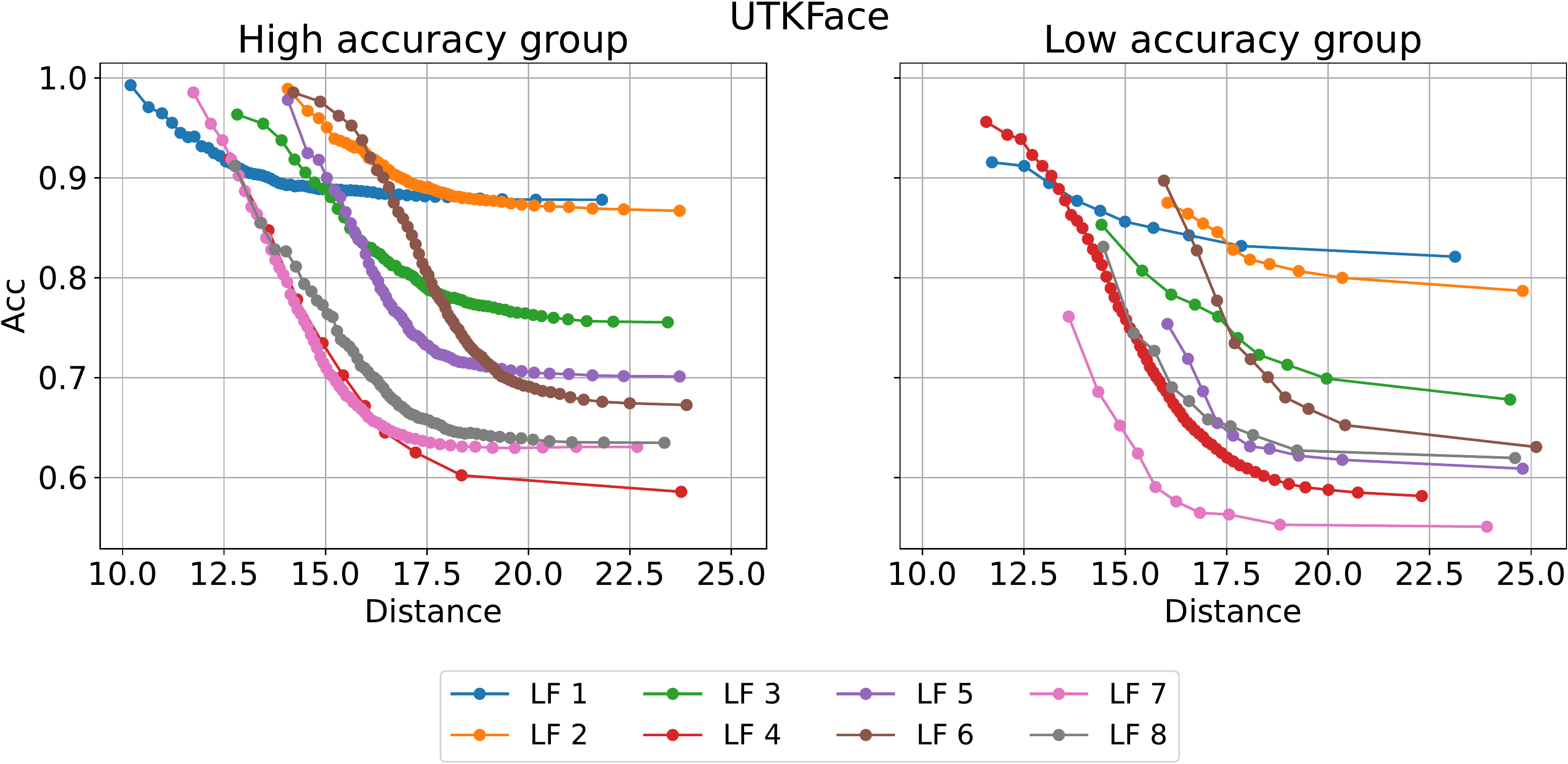}}
\caption{Identification of high accuracy regimes for vision datasets.}
\label{fig:lf_degradation_vision datasets}
\end{figure}

We are able to obtain two insights from the visualization.
First, the high accuracy regime typically exists in the group with the high estimated accuracy, which supports our hypothesis.
Thus accuracy improvement by optimal transport can be justified.
Secondly, the groups actually show the distributional difference in the input space $\mathcal{X}$. Given center points, lines in the high accuracy group start with a smaller distance to the center than the low accuracy group.

\clearpage
\subsection{Compatibility with other fair ML methods}\label{appendix_subsec:comb_others}
One advantage of our method is that we can use other successful fair ML methods in a supervised learning setting on top of SBM, since our method works in weak label sources while traditional fair ML methods work in the preprocessing/training/postprocessing steps, which are independent of the label model. To make this point, we tried traditional fair ML methods from fairlearn \citep{bird2020fairlearn} with each of WS settings. We used CorrelationRemover, ExponentiatedGradient \citep{agarwal2018reductions}, ThresholdOptimizer \citep{hardt2016equality} with the demographic parity (DP), equal opportunity (EO) as parity criteria, and accuracy as the performance criteria. The results are reported in Table \ref{tab:adult_comb_others} - \ref{tab:utkface_comb_others}. As expected, combining with other methods yields an accuracy-fairness tradeoff given weak label sources. Typically, SBM yields additional gains upon traditional fair ML methods. One another observation is that fair ML methods to modify equal opportunity typically fail to achieve the reduction of $\Delta_{EO}$. This can be interpreted as the result of the noise in the training set labels.

\begin{table}[!ht]
    \caption{SBM combined with other fair ML methods in Adult dataset}
    \label{tab:adult_comb_others}
    \centering
    \begin{tabular}{llllll}
    \toprule
        ~ & Fair ML method & Acc & F1 & $\Delta_{DP}$ & $\Delta_{EO}$ \\ \midrule
       \multirow{6}{*}{WS (Baseline)} & N/A & 0.717 & 0.587 & 0.475 & 0.325 \\ 
        ~ & correlation remover & 0.716 & 0.587 & 0.446 & 0.287 \\ 
        ~ & optimal threshold (DP) & 0.578 & 0.499 & 
\cellcolor{green!20}\textbf{0.002} & 0.076 \\ 
        ~ & optimal threshold (EO) & 0.721 & 0.563 & 0.404 & 0.217 \\ 
        ~ & exponentiated gradient (DP gap = 0) & 0.582 & 0.502 & 
\cellcolor{green!20}\textbf{0.002} & 0.066 \\ 
        ~ & exponentiated gradient (EO gap = 0) & 0.715 & 0.585 & 0.445 & 0.284 \\ \midrule
        \multirow{6}{*}{SBM (w/o OT)} & N/A & 0.720 & 0.592 & 0.439 & 0.273 \\
        ~ & correlation remover & 0.717 & 0.586 & 0.437 & 0.264 \\
        ~ & optimal threshold (DP) & 0.591 & 0.507 & 0.003 & 0.059 \\ 
        ~ & optimal threshold (EO) & 0.722 & 0.571 & 0.387 & 0.189 \\ 
        ~ & exponentiated gradient (DP gap = 0) & 0.693 & 0.525 & 0.014 & 0.052 \\ 
        ~ & exponentiated gradient (EO gap = 0) & 0.722 & 0.586 & 0.404 & 0.233 \\ \midrule
        \multirow{6}{*}{SBM (OT-L)} & N/A & 0.560 & 0.472 & 0.893 & 0.980 \\ ~ & correlation remover & 0.460 & 0.443 & 0.084 & 
\cellcolor{green!20}\textbf{0.005} \\ 
        ~ & optimal threshold (DP) & 0.324 & 0.404 & 0.006 & 0.089 \\ 
        ~ & optimal threshold (EO) & 0.300 & 0.399 & 0.103 & 0.015 \\
        ~ & exponentiated gradient (DP gap = 0) & 0.345 & 0.414 & 
\cellcolor{green!20}\textbf{0.002} & 0.016 \\ 
        ~ & exponentiated gradient (EO gap = 0) & 0.558 & 0.479 & 0.861 & 0.792 \\ \midrule
        \multirow{6}{*}{SBM (OT-S)} & N/A & 0.722 & 0.590 & 0.429 & 0.261 \\ ~ & correlation remover & \cellcolor{blue!20}\textbf{0.729} & \cellcolor{blue!20}\textbf{0.595} & 0.408 & 0.249 \\ 
        ~ & optimal threshold (DP) & 0.596 & 0.507 & 0.003 & 0.050 \\ 
        ~ & optimal threshold (EO) & 0.723 & 0.571 & 0.382 & 0.184 \\ 
        ~ & exponentiated gradient (DP gap = 0) & 0.687 & 0.527 & 0.011 & 0.045 \\ 
        ~ & exponentiated gradient (EO gap = 0) & 0.728 & 0.587 & 0.390 & 0.218 \\
        \bottomrule
    \end{tabular}
\end{table}

\begin{table}[!ht]
    \label{tab:adult(lift)_comb_others}
    \caption{SBM combined with other fair ML methods in Adult dataset (LIFT)}
    \centering
    \begin{tabular}{llllll}
    \toprule
        ~ & Fair ML method & Acc & F1 & $\Delta_{DP}$ & $\Delta_{EO}$ \\ \midrule
        \multirow{6}{*}{WS (Baseline)} & N/A & 0.711 & 0.584 & 0.449 & 0.290 \\
        ~ & correlation remover & 0.716 & \cellcolor{blue!20}\textbf{0.587} & 0.446 & 0.287 \\
        ~ & optimal threshold (DP) & 0.578 & 0.499 & 0.002 & 0.076 \\
        ~ & optimal threshold (EO) & 0.721 & 0.563 & 0.404 & 0.217 \\
        ~ & exponentiated gradient (DP gap = 0) & 0.582 & 0.502 & 0.002 & 0.066 \\ 
        ~ & exponentiated gradient (EO gap = 0) & 0.715 & 0.585 & 0.445 & 0.284 \\ \midrule
        \multirow{6}{*}{SBM (w/o OT)} & N/A & 0.704 & 0.366 & 0.032 & 0.192 \\
        ~ & correlation remover & 0.686 & 0.351 & 0.006 & 0.155 \\ 
        ~ & optimal threshold (DP) & 0.707 & 0.363 & 0.007 & 0.133 \\ 
        ~ & optimal threshold (EO) & 0.713 & 0.362 & 0.022 & 0.079 \\ 
        ~ & exponentiated gradient (DP gap = 0) & 0.682 & 0.350 & 0.011 & 0.163 \\ 
        ~ & exponentiated gradient (EO gap = 0) & 0.701 & 0.369 & 0.019 & 0.134 \\ \midrule
        \multirow{6}{*}{SBM (OT-L)} & N/A & 0.700 & 0.520 & 0.015 & 0.138 \\ 
        ~ & correlation remover & 0.686 & 0.504 & 0.011 & 0.105 \\
        ~ & optimal threshold (DP) & 0.701 & 0.520 & 0.008 & 0.124 \\
        ~ & optimal threshold (EO) & 0.712 & 0.521 & 0.060 & 
\cellcolor{green!20}\textbf{0.025} \\ 
        ~ & exponentiated gradient (DP gap = 0) & 0.673 & 0.504 & 0.005 & 0.071 \\ ~ & exponentiated gradient (EO gap = 0) & 0.691 & 0.516 & 0.058 & 0.035 \\ \midrule
        \multirow{6}{*}{SBM (OT-S)} & N/A & 0.782 & 0.448 & 
\cellcolor{green!20}\textbf{0.000} & 0.178 \\ 
        ~ & correlation remover & 0.772 & 0.435 & 0.002 & 0.180 \\ 
        ~ & optimal threshold (DP) & 0.782 & 0.447 & 0.001 & 0.176 \\ 
        ~ & optimal threshold (EO) & \cellcolor{blue!20}\textbf{0.790} & 0.427 & 0.087 & 0.104 \\ 
        ~ & exponentiated gradient (DP gap = 0) & 0.784 & 0.452 & 
\cellcolor{green!20}\textbf{0.000} & 0.171 \\ 
        ~ & exponentiated gradient (EO gap = 0) & 0.747 & 0.380 & 0.107 & 0.049 \\
        \bottomrule
    \end{tabular}
\end{table}

\begin{table}[!ht]
\label{tab:bank_comb_others}
    \caption{SBM combined with other fair ML methods in Bank Marketing dataset}
    \centering
    \begin{tabular}{llllll}
    \toprule
        ~ & Fair ML method & Acc & F1 & $\Delta_{DP}$ & $\Delta_{EO}$ \\ \midrule
        \multirow{6}{*}{WS (Baseline)} & N/A & 0.674 & 0.258 & 0.543 & 0.450 \\ 
        ~ & correlation remover & 0.890 & 0.057 & 0.002 & \cellcolor{green!20}\textbf{0.006} \\ 
        ~ & optimal threshold (DP) & 0.890 & 0.058 & 0.002 & 0.007 \\
        ~ & optimal threshold (EO) & 0.890 & 0.066 & 0.030 & 0.040 \\ 
        ~ & exponentiated gradient (DP gap = 0) & 0.889 & 0.039 & \cellcolor{green!20}\textbf{0.000} & 0.009 \\ 
        ~ & exponentiated gradient (EO gap = 0) & 0.890 & 0.070 & 0.033 & 0.051 \\ \midrule
        \multirow{6}{*}{SBM (w/o OT)} & N/A & 0.876 & \cellcolor{blue!20}\textbf{0.550} & 0.106 & 0.064 \\ 
        ~ & correlation remover & 0.874 & 0.547 & 0.064 & 0.095 \\ 
        ~ & optimal threshold (DP) & 0.876 & 0.547 & 0.031 & 0.208 \\
        ~ & optimal threshold (EO) & 0.876 & 0.548 & 0.053 & 0.171 \\ 
        ~ & exponentiated gradient (DP gap = 0) & 0.877 & 0.525 & 0.037 & 0.182 \\
        ~ & exponentiated gradient (EO gap = 0) & 0.872 & 0.531 & 0.066 & 0.124 \\ \midrule
        \multirow{6}{*}{SBM (OT-L)} & N/A & 0.892 & 0.304 & 0.095 & 0.124 \\
        ~ & correlation remover & 0.890 & 0.290 & 0.011 & 0.111 \\ 
        ~ & optimal threshold (DP) & 0.891 & 0.280 & 0.008 & 0.163 \\
        ~ & optimal threshold (EO) & 0.881 & 0.313 & 0.841 & 0.678 \\ 
        ~ & exponentiated gradient (DP gap = 0) & 0.891 & 0.263 & 0.003 & 0.106 \\ 
        ~ & exponentiated gradient (EO gap = 0) & \cellcolor{blue!20}\textbf{0.895} & 0.296 & 0.097 & 0.136 \\ \midrule
        \multirow{6}{*}{SBM (OT-S)} & N/A & 0.847 & 0.515 & 0.122 & 0.080 \\ ~ & correlation remover & 0.847 & 0.512 & 0.072 & 0.122 \\
        ~ & optimal threshold (DP) & 0.846 & 0.511 & 0.043 & 0.236 \\
        ~ & optimal threshold (EO) & 0.847 & 0.515 & 0.113 & 0.104 \\
        ~ & exponentiated gradient (DP gap = 0) & 0.843 & 0.487 & 0.052 & 0.143 \\ 
        ~ & exponentiated gradient (EO gap = 0) & 0.848 & 0.512 & 0.114 & 0.088 \\
        \bottomrule
    \end{tabular}
\end{table}

\begin{table}[!ht]
\label{tab:bank(lift)_comb_others}
 \caption{SBM combined with other fair ML methods in Bank Marketing dataset (LIFT)}
    \centering
    \begin{tabular}{llllll}
    \toprule
        ~ & Fair ML method & Acc & F1 & $\Delta_{DP}$ & $\Delta_{EO}$\\ \midrule
        \multirow{6}{*}{WS (Baseline)} & N/A & 0.674 & 0.258 & 0.543 & 0.450 \\
        ~ & correlation remover & 0.890 & 0.057 & 0.002 & 0.006 \\
        ~ & optimal threshold (DP) & 0.890 & 0.058 & 0.002 & 0.007 \\
        ~ & optimal threshold (EO) & 0.890 & 0.066 & 0.030 & 0.040 \\
        ~ & exponentiated gradient (DP gap = 0) & 0.889 & 0.039 & 
\cellcolor{green!20}\textbf{0.000} & 0.009 \\
        ~ & exponentiated gradient (EO gap = 0) & 0.890 & 0.070 & 0.033 & 0.051 \\ \midrule
        \multirow{6}{*}{SBM (w/o OT)} & N/A & 0.698 & 0.255 & 0.088 & 0.137 \\
        ~ & correlation remover & 0.836 & \cellcolor{blue!20}\textbf{0.358} & 0.025 & 0.114 \\
        ~ & optimal threshold (DP) & 0.699 & 0.252 & 0.002 & 0.019 \\
        ~ & optimal threshold (EO) & 0.698 & 0.253 & 0.006 & 0.028 \\
        ~ & exponentiated gradient (DP gap = 0) & 0.687 & 0.262 & 0.014 & 0.053 \\
        ~ & exponentiated gradient (EO gap = 0) & 0.654 & 0.226 & 0.096 & 0.107 \\ \midrule
        \multirow{6}{*}{SBM (OT-L)} & N/A & 0.892 & 0.305 & 0.104 & 0.121 \\
        ~ & correlation remover & 0.891 & 0.304 & 0.079 & 
\cellcolor{green!20}\textbf{0.000}\\
        ~ & optimal threshold (DP) & 0.891 & 0.289 & 0.016 & 0.094 \\
        ~ & optimal threshold (EO) & 0.892 & 0.305 & 0.103 & 0.121 \\
        ~ & exponentiated gradient (DP gap = 0) & \cellcolor{blue!20}\textbf{0.893} & 0.265 & 0.001 & 0.093 \\
        ~ & exponentiated gradient (EO gap = 0) & 0.892 & 0.305 & 0.100 & 0.109 \\ \midrule
        \multirow{6}{*}{SBM (OT-S)} & N/A & 0.698 & 0.080 & 0.109 & 0.072 \\
        ~ & correlation remover & 0.699 & 0.081 & 0.230 & 0.106 \\
        ~ & optimal threshold (DP) & 0.697 & 0.083 & 0.028 & 0.011 \\
        ~ & optimal threshold (EO) & 0.695 & 0.089 & 0.174 & 0.205 \\ 
        ~ & exponentiated gradient (DP gap = 0) & 0.681 & 0.113 & 0.032 & 0.063 \\ 
        ~ & exponentiated gradient (EO gap = 0) & 0.691 & 0.124 & 0.041 & 0.036 \\
        \bottomrule
    \end{tabular}
\end{table}

\begin{table}[!ht]
\label{tab:civilcomments_comb_others}
 \caption{SBM combined with other fair ML methods in CivilComments dataset}
    \centering
    \begin{tabular}{llllll}
    \toprule
        ~ & Fair ML method & Acc & F1 & $\Delta_{DP}$ & $\Delta_{EO}$\\ \midrule
        \multirow{6}{*}{WS (Baseline)} & N/A & 0.854 & \cellcolor{blue!20}\textbf{0.223} & 0.560 & 0.546 \\ 
        ~ & correlation remover & {0.886} & 0.000 & 0.000 & 0.000 \\ 
        ~ & optimal threshold (DP) & {0.886} & 0.000 & 0.000 & 0.000 \\
        ~ & optimal threshold (EO) & {0.886} & 0.000 & 0.000 & 0.000 \\
        ~ & exponentiated gradient (DP gap = 0) & {0.886} & 0.000 & 0.000 & 0.000 \\ 
        ~ & exponentiated gradient (EO gap = 0) & {0.886} & 0.000 & 0.000 & 0.000 \\ \midrule
        \multirow{6}{*}{SBM (w/o OT)} & N/A & 0.879 & 0.068 & 0.048 & 0.047 \\
        ~ & correlation remover & 0.878 & 0.062 & 0.010 & 0.030 \\ 
        ~ & optimal threshold (DP) & 0.880 & 0.054 & 0.001 & 0.015 \\
        ~ & optimal threshold (EO) & 0.880 & 0.061 & 0.019 & 0.010 \\ 
        ~ & exponentiated gradient (DP gap = 0) & 0.881 & 0.046 & 0.002 & 0.008 \\ 
        ~ & exponentiated gradient (EO gap = 0) & 0.880 & 0.059 & 0.018 & 0.010 \\ \midrule
        \multirow{6}{*}{SBM (OT-L)} & N/A & 0.880 & 0.070 & 0.042 & 0.039 \\ ~ & correlation remover & 0.879 & 0.056 & 0.011 & 0.028 \\
        ~ & optimal threshold (DP) & 0.880 & 0.060 & 0.001 & 0.017 \\
        ~ & optimal threshold (EO) & 0.880 & 0.063 & 0.013 & 0.002 \\
        ~ & exponentiated gradient (DP gap = 0) & 
\cellcolor{blue!20}\textbf{0.882} & 0.043 & 0.002 & 0.008 \\ 
        ~ & exponentiated gradient (EO gap = 0) & \cellcolor{blue!20}\textbf{0.882} & 0.039 & 0.006 & 0.001 \\ \midrule
        \multirow{6}{*}{SBM (OT-S)} & N/A & \cellcolor{blue!20}\textbf{0.882} & 0.047 & 0.028 & 0.026 \\ ~ & correlation remover & 0.879 & {0.057} & 0.011 & 0.029 \\ 
        ~ & optimal threshold (DP) & \cellcolor{blue!20}\textbf{0.882} & 0.040 & \cellcolor{green!20}\textbf{0.000} & 0.011 \\
        ~ & optimal threshold (EO) & \cellcolor{blue!20}\textbf{0.882} & 0.042 & 0.010 & \cellcolor{green!20}\textbf{0.000} \\ 
        ~ & exponentiated gradient (DP gap = 0) & 0.881 & 0.045 & 0.001 & 0.008 \\ 
        ~ & exponentiated gradient (EO gap = 0) & 0.880 & 0.056 & 0.016 & 0.005 \\ \bottomrule
    \end{tabular}
\end{table}

\begin{table}[!ht]
\label{tab:hatexplain_comb_others}
 \caption{SBM combined with other fair ML methods in HateXplain dataset}
    \centering
    \begin{tabular}{|l|l|l|l|l|l|}
    \toprule
        ~ & Fair ML method & Acc & F1 & $\Delta_{DP}$ & $\Delta_{EO}$\\ \midrule
       \multirow{6}{*}{WS (Baseline)} & N/A & 0.584 & 0.590 & 0.171 & 0.133 \\
        ~ & correlation remover & 0.555 & 0.557 & 0.007 & 0.031 \\
        ~ & optimal threshold (DP) & 0.539 & 0.515 & 0.005 & 0.047 \\ 
        ~ & optimal threshold (EO) & 0.573 & 0.573 & 0.129 & 0.090 \\
        ~ & exponentiated gradient (DP gap = 0) & 0.562 & 0.561 & 0.006 & 0.055 \\ ~ & exponentiated gradient (EO gap = 0) & 0.579 & 0.586 & 0.130 & 0.093 \\ \midrule
        \multirow{6}{*}{SBM (w/o OT)} & N/A & 0.592 & 0.637 & 0.159 & 0.138 \\
        ~ & correlation remover & 0.563 & 0.616 & 0.033 & 0.053 \\
        ~ & optimal threshold (DP) & 0.586 & 0.660 & 0.013 & 0.006 \\
        ~ & optimal threshold (EO) & 0.538 & 0.561 & 0.034 & 0.074 \\
        ~ & exponentiated gradient (DP gap = 0) & 0.581 & 0.638 & 0.039 & 0.013 \\
        ~ & exponentiated gradient (EO gap = 0) & 0.580 & 0.630 & 0.047 & 0.095 \\\midrule
        \multirow{6}{*}{SBM (OT-L)} & N/A & 0.606 & 0.670 & 0.120 & 0.101 \\ ~ & correlation remover & 0.587 & 0.657 & 0.057 & 0.087 \\
        ~ & optimal threshold (DP) & 0.600 & 0.683 & 0.010 & \cellcolor{green!20}\textbf{0.004} \\
        ~ & optimal threshold (EO) & 0.563 & 0.615 & 0.039 & 0.071 \\ 
        ~ & exponentiated gradient (DP gap = 0) & 0.600 & 0.673 & 0.029 & 0.011 \\ 
        ~ & exponentiated gradient (EO gap = 0) & 0.593 & 0.669 & 0.044 & 0.087 \\ \midrule
        \multirow{6}{*}{SBM (OT-S)} & N/A & \cellcolor{blue!20}\textbf{0.612} & 0.687 & 0.072 & 0.037 \\ 
        ~ & correlation remover & 0.587 & 0.668 & 0.073 & 0.105 \\ 
        ~ & optimal threshold (DP) & 0.607 & 0.694 & \cellcolor{green!20}\textbf{0.002} & 0.031 \\ 
        ~ & optimal threshold (EO) & 0.572 & \cellcolor{blue!20}\textbf{0.696} & 0.201 & 0.182 \\
        ~ & exponentiated gradient (DP gap = 0) & 0.598 & 0.683 & 0.005 & 0.020 \\ 
        ~ & exponentiated gradient (EO gap = 0) & 0.585 & 0.672 & 0.070 & 0.093 \\ \bottomrule
    \end{tabular}
\end{table}

\begin{table}[!ht]
\label{tab:celeba_comb_others}
 \caption{SBM combined with other fair ML methods in CelebA dataset}
    \centering
    \begin{tabular}{|l|l|l|l|l|l|}
    \toprule
        ~ & Fair ML method & Acc & F1 & $\Delta_{DP}$ & $\Delta_{EO}$\\ \midrule
        \multirow{6}{*}{WS (Baseline)} & N/A & 0.866 & 0.879 & 0.308 & 0.193 \\
        ~ & correlation remover & 0.845 & 0.862 & 0.099 & 0.066 \\
        ~ & optimal threshold (DP) & 0.816 & 0.845 & 0.009 & 0.035 \\
        ~ & optimal threshold (EO) & 0.789 & 0.793 & 0.196 & 0.033 \\
        ~ & exponentiated gradient (DP gap = 0) & 0.781 & 0.814 & 0.008 & \cellcolor{green!20}\textbf{0.006} \\ 
        ~ & exponentiated gradient (EO gap = 0) & 0.838 & 0.854 & 0.205 & 0.025 \\ \midrule
       \multirow{6}{*}{SBM (w/o OT)}& N/A & 0.870 & 0.883 & 0.309 & 0.192 \\
        ~ & correlation remover & 0.849 & 0.865 & 0.095 & 0.066 \\ 
        ~ & optimal threshold (DP) & 0.819 & 0.848 & 0.009 & 0.038 \\ 
        ~ & optimal threshold (EO) & 0.792 & 0.798 & 0.194 & 0.030 \\ 
        ~ & exponentiated gradient (DP gap = 0) & 0.783 & 0.818 & \cellcolor{green!20}\textbf{0.006} & 0.007 \\ 
        ~ & exponentiated gradient (EO gap = 0) & 0.841 & 0.857 & 0.206 & 0.029 \\ \midrule
        \multirow{6}{*}{SBM (OT-L)}& N/A & 0.870 & 0.883 & 0.306 & 0.185 \\ 
        ~ & correlation remover & 0.849 & 0.866 & 0.096 & 0.066 \\
        ~ & optimal threshold (DP) & 0.819 & 0.848 & 0.010 & 0.034 \\
        ~ & optimal threshold (EO) & 0.792 & 0.798 & 0.193 & 0.023 \\
        ~ & exponentiated gradient (DP gap = 0) & 0.783 & 0.818 & 0.007 & 0.008 \\ 
        ~ & exponentiated gradient (EO gap = 0) & 0.841 & 0.857 & 0.203 & 0.026 \\ \midrule
        \multirow{6}{*}{SBM (OT-S)} & N/A & \cellcolor{blue!20}\textbf{0.872} & \cellcolor{blue!20}\textbf{0.885} & 0.306 & 0.184 \\ ~ & correlation remover & 0.851 & 0.867 & 0.097 & 0.062 \\
        ~ & optimal threshold (DP) & 0.821 & 0.850 & 0.010 & 0.035 \\
        ~ & optimal threshold (EO) & 0.795 & 0.801 & 0.193 & 0.023 \\
        ~ & exponentiated gradient (DP gap = 0) & 0.784 & 0.819 & 0.008 & 0.010 \\
        ~ & exponentiated gradient (EO gap = 0) & 0.840 & 0.857 & 0.200 & 0.022 \\ \bottomrule
    \end{tabular}
\end{table}

\begin{table}[!ht]
 \caption{SBM combined with other fair ML methods in UTKFace dataset}
 \label{tab:utkface_comb_others}
    \centering
    \begin{tabular}{|l|l|l|l|l|l|}
    \toprule
        ~ & Fair ML method & Acc & F1 & $\Delta_{DP}$ & $\Delta_{EO}$ \\ \midrule
       \multirow{6}{*}{WS (Baseline)} & N/A & 0.791 & 0.791 & 0.172 & 0.073 \\
        ~ & correlation remover & 0.787 & 0.786 & 0.034 & 0.051 \\
        ~ & optimal threshold (DP) & 0.774 & 0.767 & 0.040 & 0.215 \\ 
        ~ & optimal threshold (EO) & 0.788 & 0.786 & 0.114 & \cellcolor{green!20}\textbf{0.005} \\ 
        ~ & exponentiated gradient (DP gap = 0) & 0.769 & 0.762 & 0.029 & 0.078 \\ ~ & exponentiated gradient (EO gap = 0) & 0.792 & 0.790 & 0.126 & 0.026 \\ \midrule
        \multirow{6}{*}{SBM (w/o OT)} & N/A & 0.797 & 0.790 & 0.164 & 0.077 \\
        ~ & correlation remover & 0.791 & 0.793 & 0.006 & 0.091 \\ 
        ~ & optimal threshold (DP) & 0.764 & 0.791 & 0.033 & 0.202 \\
        ~ & optimal threshold (EO) & 0.789 & 0.791 & 0.165 & 0.077 \\
        ~ & exponentiated gradient (DP gap = 0) & 0.760 & 0.791 & 0.024 & 0.069 \\ 
        ~ & exponentiated gradient (EO gap = 0) & 0.791 & 0.792 & 0.158 & 0.076 \\ \midrule
        \multirow{6}{*}{SBM (OT-L)} & N/A & 0.800 & 0.793 & 0.135 & 0.043 \\ 
        ~ & correlation remover & 0.799 & 0.791 & \cellcolor{green!20}\textbf{0.004} & 0.098 \\
        ~ & optimal threshold (DP) & 0.785 & 0.772 & 0.034 & 0.195 \\
        ~ & optimal threshold (EO) & 0.800 & 0.793 & 0.128 & 0.038 \\
        ~ & exponentiated gradient (DP gap = 0) & 0.779 & 0.764 & 0.024 & 0.069 \\ 
        ~ & exponentiated gradient (EO gap = 0) & 0.797 & 0.789 & 0.123 & 0.030 \\ \midrule
        \multirow{6}{*}{SBM (OT-S)} & N/A & \cellcolor{blue!20}\textbf{0.804} & 0.798 & 0.130 & 0.041 \\ 
        ~ & correlation remover & 0.799 & 0.794 & 0.012 & 0.087 \\
        ~ & optimal threshold (DP) & 0.789 & 0.777 & 0.036 & 0.195 \\
        ~ & optimal threshold (EO) & 0.799 & 0.794 & 0.168 & 0.046 \\
        ~ & exponentiated gradient (DP gap = 0) & 0.776 & 0.764 & 0.023 & 0.068 \\ 
        ~ & exponentiated gradient (EO gap = 0) & 0.801 & \cellcolor{blue!20}\textbf{0.796} & 0.141 & 0.044 \\ \bottomrule
    \end{tabular}
\end{table}

\clearpage

\subsection{Additional synthetic experiment result on the number of LFs}
\begin{figure}[h]\small
\small
	\centering
	\subfigure [Number of LFs vs. Performance]{
\includegraphics[width=0.45\textwidth]{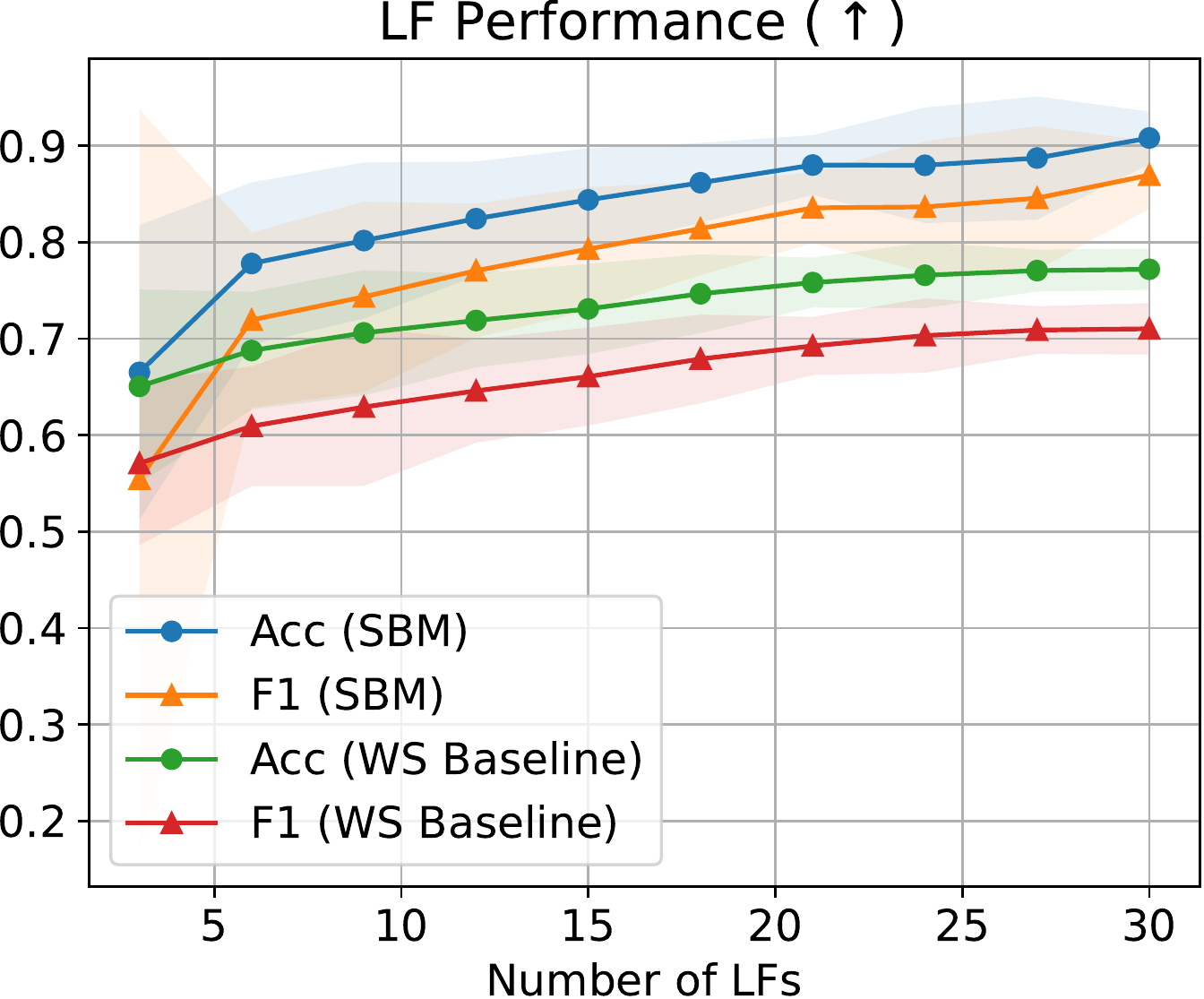}}
\subfigure [Number of LFs vs. Unfairness]{
\includegraphics[width=0.45\textwidth]{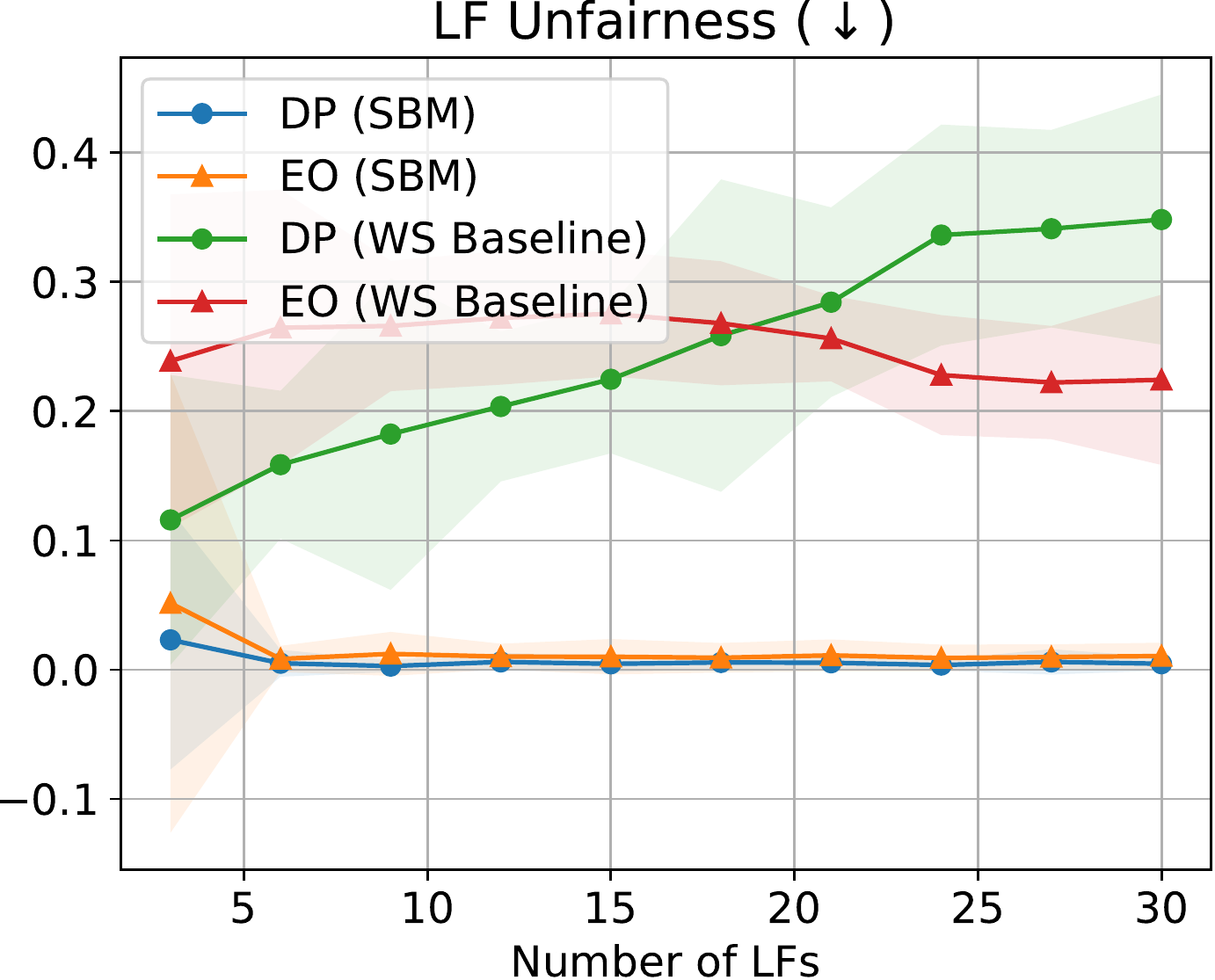}}
\caption{Synthetic experiment on number of LFs vs. performance and fairness}
\label{fig:synthetic_num_lfs}
\end{figure}

\paragraph{Claim Investigated} %
One seemingly possible argument is that diversifying LFs can naturally resolve unfairness, weakening the necessity of our method. We hypothesized that simplying increasing the number of LFs does not improve fairness, while our method can further improve fairness as the number of LFs increases.
To show this, we generate unfair synthetic data and increase the number of LFs and see if our method can remedy LF fairness and improve LF performance when LFs are diversified.

\paragraph{Setup and Procedure} %
We generated a synthetic dataset ($n=10000$) with the similar procedure as the previous synthetic experiment. Input samples in $\real^{2}$ are taken from $\sim \mathcal{N}(0, I)$, and true labels $Y$ are generated by $Y = 1[X[:, 0] \geq 0]$. Group transformation   $g_b(x): x+b$ is applied to a random half of samples, where $b \sim \mathcal{U}([10, 50]^2)$. We varied LFs by randomly sampling parameters based on our label model. Specifically, parameters for each LFs are sampled from $\theta_j \sim \mathcal{U}(0.1, 3)$, $x^{center_j} \sim \mathcal{U}([-5, 5]^2)$. We varied the number of LFs from $3$ to $30$. We repeat experiments 10 times with different seeds and provide means and confidence intervals.

\paragraph{Results}
The result is reported in Figure \ref{fig:synthetic_num_lfs}. The result shows that simply diversifying label sources does not improve fairness---though performance can be improved by adding more LFs. On the other hand, SBM resolves unfairness and yields better performance by mitigating source bias.
\newpage
\subsection{Large language model experiment}
Recently, large language models (LMs) pre-trained with massive datasets have shown excellent performance in many NLP tasks, even without fine-tuning. As a proof of concept that our method can be used for LMs, we conducted an additional experiment that applies our method to NLP tasks with LMs. \cite{arora2022ask} introduced AMA, which is a prompting strategy that provides multiple prompts with varying instructions, and combines their outputs to obtain a final answer using weak supervision. In this setup, distinct prompts can be seen as label sources. Based on their setup, we craft group annotations that identify a low accuracy regime by Barlow slicing \cite{singla2021understanding}, which is a tree-based data slicing method. After applying data slicing for each prompt, we apply SBM and use the AMA pipeline. The result is shown in Table \ref{tab:ama_sbm}. SBM with Barlow data slicing yields modest performance gains over AMA. This result suggests that it may be possible to use our techniques to gain improvements even in generic zero-shot language model prediction.  

\begin{table}[!ht]
 \caption{SBM combined with AMA \cite{arora2022ask}. Performance metric is accuracy (\%).}
 \label{tab:ama_sbm}
    \centering
       \begin{tabular}{llll}
     \toprule
     Methods & RTE & WSC & WIC \\
    AMA \citep{arora2022ask} & 75.1 & 77.9 & 61.3 \\
    AMA (Reproduced) & 75.1 & 76 & 60.8 \\
    SBM (w/o OT) & \cellcolor{blue!20}\textbf{75.5} & 71.2 & 61.1 \\
    SBM (OT-L) & 75.1 & 70.2 & \cellcolor{blue!20}\textbf{61.6} \\
    SBM (OT-S) & 74.7 & \cellcolor{blue!20}\textbf{78.8} & 61.1 \\
    \bottomrule
   \end{tabular}
\end{table}

\end{document}